\newcommand{\dspt}{p_T}
\newcommand{\dsps}{p_S}
\newcommand{\dspts}{p_{T \wedge S}}
\newcommand{\dsp}{p}
\newcommand{\blfootnote}[1]{%
  \begingroup
  \renewcommand\thefootnote{}\footnote{#1}%
  \addtocounter{footnote}{-1}%
  \endgroup
}
\newcommand{\starfootnote}[1]{%
  \begingroup
  \renewcommand\thefootnote{\fnsymbol{footnote}}%
  \setcounter{footnote}{1}%
  \footnotetext[1]{#1}%
  \endgroup
}
\setlist{topsep=0em, parsep=0em, partopsep=0em, itemsep=0em, leftmargin=1.5em, labelwidth=1.5em}
\newtheorem{theorem}{Theorem}
\newtheorem{lemma}{Lemma}
\newtheorem{corollary}{Corollary}
\newtheorem{assumption}{Assumption}
\newtheorem{definition}{Definition}
\newtheorem{remark}{Remark}
\newtheorem{example}{Example}
\title{Does Weak-to-strong Generalization Happen under Spurious Correlations?}
\author{
Chenruo Liu\textsuperscript{*} \and
Yijun Dong\textsuperscript{*} \and
Qi Lei 
}
\date{}
\begin{document}
\maketitle

\blfootnote{Accepted at the International Conference on Learning Representations (ICLR) 2026.}
\starfootnote{Equal contribution.}

\vspace{-1cm}
\begin{center}
    New York University \\ \vspace{.2cm} 
    \texttt{\{cl7758,yd1319,ql518\}@nyu.edu} \vspace{.2cm}
\end{center}

\begin{abstract}
We initiate a unified theoretical and algorithmic study of a key problem in weak-to-strong (W2S) generalization: when fine-tuning a strong pre-trained student with pseudolabels from a weaker teacher on a downstream task with spurious correlations, does W2S happen, and how to improve it upon failures? We consider two sources of spurious correlations caused by group imbalance: (i) a weak teacher fine-tuned on group-imbalanced labeled data with a minority group of fraction $\eta_\ell$, and (ii) a group-imbalanced unlabeled set pseudolabeled by the teacher with a minority group of fraction $\eta_u$. Theoretically, a precise characterization of W2S gain at the proportional asymptotic limit shows that W2S always happens with sufficient pseudolabels when $\eta_u = \eta_\ell$ but may fail when $\eta_u \ne \eta_\ell$, where W2S gain diminishes as $(\eta_u - \eta_\ell)^2$ increases. Our theory is corroborated by extensive experiments on various spurious correlation benchmarks and teacher-student pairs. To boost W2S performance upon failures, we further propose a simple, effective algorithmic remedy that retrains the strong student on its high-confidence data subset after W2S fine-tuning. Our algorithm is group-label-free and achieves consistent, substantial improvements over vanilla W2S fine-tuning.

\end{abstract}

\section{Introduction}

Traditional learning paradigms like supervised learning and knowledge distillation~\citep{hinton2015distilling} learn from training data generated by strong teachers, \eg, human experts.
In contrast, contemporary foundation models encode encyclopedic knowledge through astronomical-scale pre-training, thereby achieving comparable or even superior performance to human experts in various domains via light post-training adaptation like fine-tuning~\citep{brown2020language,achiam2023gpt}.
This motivates the question on \emph{superalignment}~\citep{openai2023superalignment}: can models with superhuman intelligence learn from weaker human supervision?
\emph{Weak-to-strong (W2S) generalization}~\citep{burns2024weak} provides an encouraging answer for this question: a strong pre-trained student fine-tuned with pseudolabels generated by a weaker teacher can often outperform its teacher. 

Since the first introduction of W2S by \citet{burns2024weak}, its mechanism has been extensively studied empirically~\citep{guo2024vision,liu2024co,guo2024improving,yang2024weak,yang2025super,goel2025great}, and theoretically~\citep{lang2024theoretical,charikar2024quantifying,wu2025provable,ildiz2025high,mulgund2025relating,dong2025discrepancies,medvedev2025weak}.
While existing works on W2S generally assume access to clean downstream data, in practice, both the weak teacher and the unlabeled data for weak supervision often carry systematic biases, such as spurious correlations tied to demographic or acquisition factors~\citep{arjovsky2019invariant,sagawadistributionally}. 

This challenge is especially relevant in the very settings that motivate W2S: a student broadly pre-trained on general data is fine-tuned for a specialized task where labeled samples are scarce and imperfect. In medicine, labels may be biased toward certain patient groups~\citep{gupta2016skin} or imaging devices~\citep{zech2018variable}; in law, datasets may overrepresent particular jurisdictions or case types~\citep{chalkidis2022fairlex}; in autonomous driving, sensor data may be skewed toward specific weather or traffic conditions~\citep{liu2024survey}. For these specialized downstream tasks, one usually cannot interfere with the data acquisition process, nor obtain additional balanced data. It is therefore crucial to understand \textit{whether W2S can remain effective under spurious correlations} caused by group imbalance—\textit{when it succeeds, when it fails}, and \textit{how its procedure can be improved.}

\paragraph{Our contributions.} 
We initiate a systematic study of W2S under spurious correlations, providing
(i) a theoretical analysis that answers the ``when'' question comprehensively by characterizing the impact of spurious correlations on W2S precisely in the proportional asymptotic limit, as well as
(ii) a simple, effective remedy for the failure of W2S under spurious correlations inspired by our theory, toward answering the ``how'' question.
Concretely, our contributions are as follows.
\begin{itemize}[leftmargin=*]
    \item \textbf{A theory of W2S under spurious correlations.} In \Cref{sec:setup}, we conduct a systematic analysis in the ridgeless regression setting with zero approximation error, where W2S happens due to different estimation errors (\ie, efficiency in utilizing data). 
    At the proportional asymptotic limit, we provide \emph{precise characterizations for the generalization errors of both teacher and student}. 
    Consider using a weak teacher fine-tuned on labeled samples with a minority fraction $\eta_\ell$ to pseudolabel $N$ unlabeled samples with a minority fraction $\eta_u$ for W2S fine-tuning. 
    We show that 
    \begin{enumerate*}[label=(\roman*)]
        \item \emph{W2S always happens with sufficiently large $N$ when $\eta_\ell = \eta_u$} and \emph{improves when the teacher and student have distinct representations}; whereas 
        \item \emph{when $\eta_\ell \ne \eta_u$, W2S can fail even with $N \to \infty$}, and \emph{W2S gain tends to diminish as $(\eta_u - \eta_\ell)^2$ increases}.
    \end{enumerate*}
    Our theory is validated with extensive experiments on synthetic regression problems and real classification tasks (\Cref{sec:real_world}).
    \item \textbf{An algorithmic enhancement for W2S when $\eta_\ell \ne \eta_u$.} In \Cref{sec:enhanced_w2s}, we propose a simple, effective algorithm that retrains the strong student on its high-confidence data subset after W2S fine-tuning via the generalized cross-entropy loss~\citep{zhang2018generalized}. Our method requires no group annotations and improves W2S when the gap between $\eta_u$ and $\eta_\ell$ is large.
    We conduct extensive experiments on assorted spurious correlation benchmarks (e.g., Waterbirds~\citep{sagawadistributionally}, BFFHQ~\citep{lee2021learning}, and ImageNet-9~\citep{xiao2020noise}), across $10$ different teacher–student model pairs. 
    The results show that our algorithm achieves consistent and substantial gains over vanilla W2S.
\end{itemize}

\subsection{Related Works}\label{sec:related}
\paragraph{W2S generalization.} 
Empirically, many methods have been developed to validate/enhance W2S across various vision and natural language modeling tasks, including adjustable loss functions \citep{guo2024vision}, multi-teacher algorithms \citep{liu2024co}, data refinement strategies \citep{guo2024improving,yang2024weak}, and the use of weak models for data filtering \citep{li2024superfiltering}. Notably, \citet{jeon2025weak} specifically focuses on improving W2S performance under distribution shifts. 
Theoretical work on W2S is also rapidly expanding, offering various mechanistic explanations from first principles, including the perspectives of neighborhood expansion \citep{lang2024theoretical}, data overlap density \citep{shin2025weak}, transfer learning \citep{somerstep2024statistical}, teacher-student disagreement~\citep{charikar2024quantifying,mulgund2025relating,yao2025understanding,xu2025emergence}, benign overfitting \citep{wu2025provable,xue2025representations}, knowledge distillation \citep{ildiz2025high}, low intrinsic dimension of fine-tuning~\citep{aghajanyan2020intrinsic,dong2025discrepancies}, regularization~\citep{medvedev2025weak,moniri2025mechanisms}, and feature learning with different inductive biases~\citep{oh2025linear}. However, theoretical understanding of W2S under distribution shift, especially in the presence of spurious correlations, remains limited.

\paragraph{Group robustness to spurious correlation.} Extensive efforts have been devoted to mitigating spurious correlation for robust and safe generalization to unseen test domains \citep{arjovsky2019invariant,sagawadistributionally,krueger2021out,deng2023robust,phan2024controllable,wen2025elastic,liu2025bridging,bombari2025spurious}. Among these studies, a subset of work specifically targets spurious correlation arising from group imbalance. When group labels are available, canonical approaches include reweighting minority groups \citep{sagawadistributionally}, downsampling majority groups \citep{deng2023robust}, distributionally robust optimization \citep{sagawadistributionally,zhang2020coping}, and progressive data expansion \citep{deng2023robust}. Since obtaining group annotations in training data can be costly or even infeasible, alternative strategies aim to identify biased samples without explicit group supervision \citep{nam2020learning,liu2021just,zhang2022correct,yenamandra2023facts,han2024improving}, or leverage auxiliary signals such as knowledge of spurious attributes \citep{puli2021out}, class annotations \citep{labonte2024towards}, and superclass-level information \citep{liu2025superclass}. Unlike group robustness methods in (semi-)supervised learning, our setting addresses a different source of bias: pseudo-labels generated by a weak teacher. These pseudo-labels can inherit spurious correlations from two directions — (i) the labeled data on which the teacher itself was trained, and (ii) the unlabeled samples on which the teacher is applied. Our goal is to understand and improve how these two sources jointly affect W2S, making our analysis orthogonal to, and more intricate than, standard robustness approaches.

\paragraph{Group robustness in knowledge distillation.} 
Knowledge distillation (KD)~\citep{hinton2015distilling} is closely related to W2S but with the roles reversed: KD transfers knowledge from a larger teacher model to a smaller student model. A series of works has analyzed when and why a distilled student generalizes~\citep{phuong2019towards,stanton2021does,ojha2023knowledge,nagarajan2023student,dong2024cluster,ildiz2025high}. Analytically, W2S departs from traditional KD because ''weak'' vs. ''strong'' is defined relative to pretraining, so W2S is naturally studied as fine‑tuning on pseudolabeled data. 

When transferring knowledge from a strong teacher to a weaker student, knowledge distillation \citep{hinton2015distilling} has been shown to harm the minority group performance \citep{lukasik2021teacher,vilouras2023group,wang2023robust,lee2023debiased,kenfack2024adaptive}. To address this issue, different methods have been proposed, including adaptive mixing weights and per-class margins \citep{lukasik2021teacher}, distributionally robust optimization \citep{wang2023robust,vilouras2023group}, last-layer transplantation \citep{lee2023debiased}, and gradient-based reweighting \citep{kenfack2024adaptive}. Our work differs from these approaches in three key aspects: (a) W2S generalization, where a weak teacher supervises a stronger student, is fundamentally distinct from classical knowledge distillation, (b) we explicitly consider the impact of mismatched minority group proportions between teacher and student, and (c) our method for improving W2S performance does not require any auxiliary information such as group annotations.

\section{A Theory of W2S under Spurious Correlation}\label{sec:setup}
\paragraph{Notations.}
For any $p,q \in \N$, $p \ge q$, let $\stief(p,q) = \{\Qb \in \R^{p\times q} \mid \Qb^\top \Qb = \Ib_q\}$ be the Stiefel manifold. 
$\Ab \otimes \Bb \in \R^{mp \times nq}$ denotes the Kronecker product of $\Ab \in \R^{m \times n}$ and $\Bb \in \R^{p \times q}$;
when $n=q$, let $[\Ab; \Bb] \in \R^{(m+p) \times n}$ be the vertical stack;
when $m=p$, let $[\Ab, \Bb] \in \R^{m \times (n+q)}$ be the horizontal stack.
For any $\wb \in \R^d$ and $i \in [d]$ or $\Ical \subseteq [d]$, let $w_i$ and $\loc{\wb}{\Ical}$ denote the $i$-th entry and the subvector of $\wb$ indexed by $\Ical$. 
For any $\Ab \in \R^{m \times n}$ and $i \in [m], j \in [n]$, let $A_{i,j}$ denote the $(i,j)$-th entry; $\loc{\Ab}{i,:} \in \R^n$ denotes the $i$-th row; $\loc{\Ab}{:,j} \in \R^m$ denotes the $j$-th column; and index subsets $\Ical \subseteq [m], \Jcal \subseteq [n]$ pick the corresponding submatrices.

\subsection{Problem Setup: Regression under Spurious Correlation}
\paragraph{Downstream task.}
Consider a downstream regression task characterized by a distribution $\Dcal(\eta): \Xcal \times \Ycal \times \Gcal \to [0,1]$ where $\Xcal$ is the input space, $\Ycal = \R$ is the label space, and $\Gcal = \{0,1\}$ contains group labels (\ie, $1$ for minority and $0$ for majority). 
The fraction of the minority group in the population is controlled by $\eta \in [0,\frac{1}{2}]$ such that $\Pr[g=1] = 1 - \Pr[g=0] = \eta$. 

\begin{definition}[Regression under spurious correlations]\label{asm:reg_spur_corr}
    Let $\Dcal_{\xb}$ be the marginal distribution of $\xb \in \Xcal$; $\Dcal_{\xb \mid g}$ be the conditional distribution of $\xb$ given $g$; and $\Dcal_{y \mid \xb}$ be the conditional distribution of $y$ given $\xb$ satisfying $y = f_*(\xb) + \epsilon$ for unknown $f_*: \Xcal \to \R$ and $\iid$ label noise $\epsilon \sim \Ncal(0, \sigma_y^2)$ independent of $\xb$.
    Consider two feature maps: 
    \begin{enumerate*}[label=(\roman*)]
        \item the core feature $\zb: \Xcal \to \R^{d_z}$ determines the label $y$ through $\zb(\xb) \sim \Ncal(\b0_{d_z}, \Ib_{d_z})$ and $f_*(\xb) = \zb(\xb)^\top \betab_*$ for fixed $\betab_* \in \R^{d_z}$; while
        \item the group feature $\xib: \Xcal \to \R^{\dsp}$ ($2 < \dsp < \infty$) determines the group label $g$ through $\xib(\xb) \sim \Ncal(g \mub_\xi, \sigma_\xi^2 \Ib_{\dsp})$ for fixed $\mub_\xi \in \R^{\dsp}$ with dimension-independent $\|\mub_\xi\|_2, \sigma_\xi^2 \asymp 1$.
    \end{enumerate*}
\end{definition}
Here, $\zb(\xb)$ encodes the core information for predicting $y$ that is invariant across groups, typically rich in semantics and therefore hard to learn (high-dimensional); while $\xib(\xb)$ is a latent feature controlling which group $\xb$ belongs to, typically simpler to represent and therefore low-dimensional.

\begin{figure}[!h]
    \centering\vspace{-.5em}
    \includegraphics[width=\linewidth]{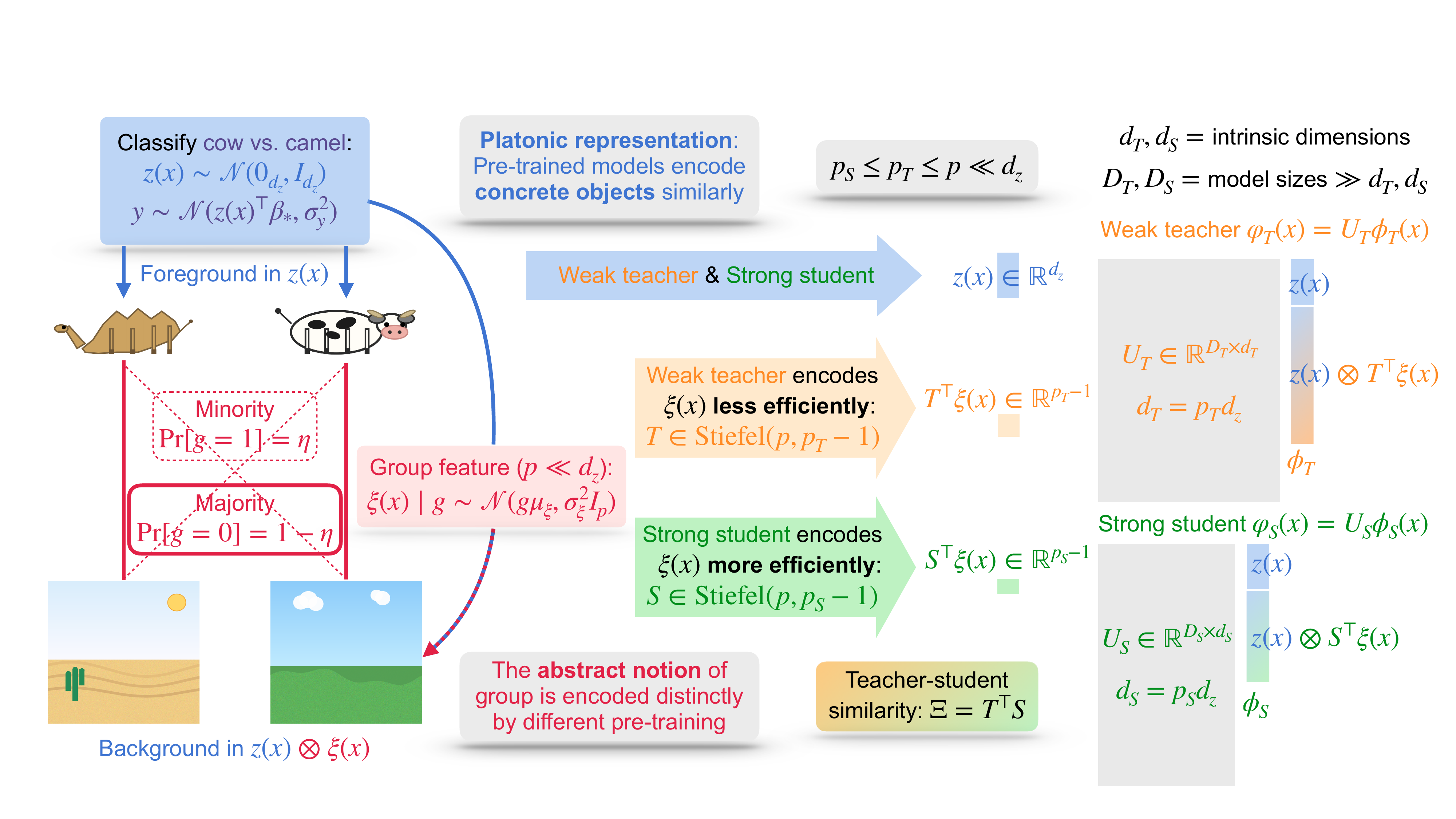}\vspace{-1.5em}
    \caption{Visualization of the theoretical setup in \Cref{asm:reg_spur_corr,asm:weak_strong_rep} through \Cref{ex:waterbirds}.}
    \label{fig:setup}
\end{figure}

\paragraph{Weak vs. strong models.}
We consider two pre-trained models that provide reasonably high-quality features for the downstream task: a weak teacher model $f_T: \Xcal \to \R$ and a strong student model $f_S: \Xcal \to \R$.
Adapting the setting in \cite{dong2025discrepancies}, we model fine-tuning in the kernel regime~\citep{jacot2018neural,malladi2023kernel} with low intrinsic dimensions~\citep{aghajanyan2020intrinsic}. 
In particular, we consider learning overparametrized linear layers $\thetab_T \in \R^{D_T}$ and $\thetab_S \in \R^{D_S}$ over high-dimensional pre-trained representations $\varphi_T: \Xcal \to \R^{D_T}$ and $\varphi_S: \Xcal \to \R^{D_S}$, respectively.
When fine-tuning lies in the kernel regime, $\varphi_T, \varphi_S$ correspond to the gradients of the tunable parameters in $f_T, f_S$ at the pre-trained initialization, respectively, where $D_T, D_S$ stand for the large tunable parameter counts.
The difference between $\varphi_T, \varphi_S$ that separates the weak and strong models on the downstream task with spurious correlations is pivotal in this setting:
\begin{definition}[Weak vs. strong models]\label{asm:weak_strong_rep}
    \begin{enumerate}[label=(\roman*)]
        \item The weak teacher representation $\varphi_T$ heavily entangles the core and group features: there exists $\Ub_T \in \stief(D_T, d_T)$ ($d_T \ll D_T$) such that $\varphi_T(\xb) = \Ub_T \phi_T(\xb)$ and $\phi_T(\xb) = \zb(\xb) \otimes \wb(\xb)$, where $\wb(\xb) = [1; \Tb^\top \xib(\xb)] \in \R^{\dspt}$ ($2 \le \dspt \le \dsp$) for a fixed $\Tb \in \stief(\dsp, \dspt-1)$ that projects $\xib(\xb)$ to a lower dimension (\ie, $\phi_T(\xb) = [\zb(\xb); \zb(\xb) \otimes (\Tb^\top \xib(\xb))] \in \R^{d_T}$).
        We note that $d_T = \dspt d_z$. Let $\mub_T = \Tb^\top \mub_\xi \in \R^{\dspt-1}$.
        \item A strong student representation $\varphi_S$ partially disentangles the core and group features: there exists $\Ub_S \in \stief(D_S, d_S)$ ($d_S \ll D_S$) such that $\varphi_S(\xb) = \Ub_S \phi_S(\xb)$ and $\phi_S(\xb) = \zb(\xb) \otimes \psib(\xb)$, where $\psib(\xb) = [1; \Sb^\top \xib(\xb)] \in \R^{\dsps}$ ($2 \le \dsps \le \dspt$) for a fixed $\Sb \in \stief(\dsp, \dsps-1)$ that projects $\xib(\xb)$ to a much lower dimension, $\dsps \ll \dsp$ (\ie, $\phi_S(\xb) = [\zb(\xb); \zb(\xb) \otimes (\Sb^\top \xib(\xb))] \in \R^{d_S}$).
        We note that $d_S = \dsps d_z$. Let $\mub_S = \Sb^\top \mub_\xi \in \R^{\dsps-1}$.\footnote{
            For both $\wb(\xb)$ and $\psib(\xb)$, the first entry $1$ effectively prepends the core feature $\zb(\xb)$ in $\varphi_T(\xb)$ and $\varphi_S(\xb)$, which is essential to ensure that both teacher and student have negligible approximation error. 
            Intuitively, pre-trained models have sufficient expressivity to learn the downstream task over population.
        }
    \end{enumerate}
\end{definition}
\Cref{asm:weak_strong_rep} formalizes the intuitions that compared to $\varphi_T$, the stronger $\varphi_S$ (i) represents the information required for the downstream task more efficiently ($d_S \le d_T$) and (ii) partially disentangles the core and group features, bringing robustness to spurious correlations.
Notice that with $\zb(\xb)$ prepending in both $\varphi_T(\xb)$ and $\varphi_S(\xb)$, the teacher and student both have zero approximation error (\ie, both pre-trained models are expressive enough for the downstream task), and W2S happens due to different estimation errors (\ie, the student is more sample efficient than its teacher).

\begin{example}\label{ex:waterbirds}
    We take the well-known analogy of classifying cows (often in pastures) vs. camels (often in deserts)~\citep{arjovsky2019invariant} as an example (see \Cref{fig:setup}).
    With $\zb(\xb)$ encoding the foreground of cows/camels, $\xib(\xb)$ represents whether the background is typical or not, while $\zb(\xb) \otimes \wb(\xb)$ and $\zb(\xb) \otimes \psib(\xb)$ correspond to the representations of background from the weak and strong models.

    While the Platonic representation hypothesis~\citep{huh2024platonic} suggests that different pre-trained models tend to represent similar concrete objects similarly (with the same $\zb(\xb)$), different model capacities can lead to distinct representations of a ``typical'' group in $\xib(\xb)$.
    For instance, a strong model that has learned the natural habitat of cows/camels during pre-training can encode typical samples as those with their respective backgrounds, leading to a simple, low-dimensional $\psib(\xb)$; whereas a weaker model without such knowledge have to rely on more complicated mechanisms to represent typical samples (\eg, counting), resulting in a more complex, higher-dimensional $\wb(\xb)$.
\end{example}

Analogous to \cite{dong2025discrepancies}, a key quantity that controls W2S gain is the similarity between the weak teacher and strong student representations, $\varphi_T$ and $\varphi_S$, as formalized in \Cref{def:spur_sim}.
\begin{definition}[Teacher-student similarity]\label{def:spur_sim}
    Under \Cref{asm:weak_strong_rep}, we define a similarity matrix $\Xib = \Tb^\top \Sb \in \R^{(\dspt-1) \times (\dsps-1)}$. Notice that $\|\Xib\|_F^2 \le \dsps-1$ and $\|\Xib\|_2 \le 1$.
\end{definition}
$\Xib$ measures the similarity of group features extracted by $\varphi_T, \varphi_S$, \eg, $\|\Xib\|_F^2 \to 0$ means $\wb(\xb)$ and $\psib(\xb)$ are orthogonal, while $\|\Xib\|_F^2 \to \dsps-1$ means $\wb(\xb)$ and $\psib(\xb)$ are highly aligned.

\paragraph{W2S fine-tuning pipeline.}
We consider two training sets with $\iid$ samples: (i) a small labeled training set $\wt\Scal = \{(\wt\xb_i, \wt y_i) \mid i \in [n]\} \sim \Dcal(\eta_\ell)^n$ that is privately available only to the weak teacher, $\varphi_T$, and (ii) a large unlabeled training set $\Scal_x = \{\xb_i \mid i \in [N]\}$ from $\Scal = \{(\xb_i, y_i) \mid i \in [N]\} \sim \Dcal(\eta_u)^N$ with hidden labels that is privately available only to the strong student, $\varphi_S$, where $\eta_\ell, \eta_u \in [0,\frac{1}{2}]$.
The W2S fine-tuning pipeline consists of two stages:
(i) Supervised fine-tuning (SFT) of $f_T(\cdot) = \varphi_T(\cdot)^\top \thetab_T$ on $\wt\Scal$ via ridgeless regression: assuming $n > d_T$,
\begin{align}\label{eq:sft}
    \thetab_T = \argmin_{\thetab \in \R^{D^T}} \nbr{\thetab}_2^2 \quad \st \quad \thetab \in \argmin_{\thetab' \in \R^{D_T}} \frac{1}{n} \sum_{i=1}^n (\varphi_T(\wt\xb_i)^\top \thetab' - \wt y_i)^2,
\end{align}
(ii) W2S fine-tuning of $f_S(\cdot) = \varphi_S(\cdot)^\top \thetab_S$ on $\Scal_x$ labeled by $f_T$ via ridgeless regression:
\begin{align}\label{eq:w2s}
    \thetab_S = \argmin_{\thetab \in \R^{D_S}} \nbr{\thetab}_2^2 \quad \st \quad \thetab \in \argmin_{\thetab' \in \R^{D_S}} \frac{1}{N} \sum_{i=1}^N (\varphi_S(\xb_i)^\top \thetab' - f_T(\xb_i))^2,
\end{align}
Following \cite{burns2024weak}, in this W2S fine-tuning pipeline, we assume the weak teacher after SFT is fixed and not trainable, accessible in the W2S fine-tuning stage only through inference. Moreover, the labeled training set, $\wt\Scal$, is only accessible in the first, SFT stage to the weak teacher, whereas the unlabeled set $\Scal_x$ is only accessible in the second, W2S fine-tuning stage to the strong student.

\begin{remark}[Why ridgeless regression provides sufficient regularization?]\label{rmk:ridgeless2ridge}
    We note that under \Cref{asm:weak_strong_rep} where both $\varphi_T(\xb)$ and $\varphi_S(\xb)$ are constrained in low-dimensional subspaces, $\range(\Ub_T)$ and $\range(\Ub_S)$, ridgeless regression provides nearly optimal regularization to avoid overfitting~\citep{wu2020optimal,hastie2022surprises}, which is essential for W2S generalization~\citep{burns2024weak}.
    When $\varphi_T(\xb)$ and $\varphi_S(\xb)$ are concentrated (in contrast to contrained) in low-dimensional subspaces with tails evenly distributed in the orthogonal complement, explicit regularization~\citep{moniri2025mechanisms,dong2025discrepancies} or early stopping~\citep{burns2024weak,medvedev2025weak} becomes necessary to prevent the student from overfitting to noisy teacher labels.
    Nevertheless, analogous to \cite{dong2025discrepancies}, extending our ridgeless analysis to ridge regression does not alter our key insights on spurious correlations.
    Therefore, we focus on the ridgeless setting for clarity of exposition.
\end{remark}

The generalization performance is evaluated over a test distribution $\Dcal(\eta_t)$ for some $\eta_t \in [0,1]$: with the test risk $\Rcal_{\eta_t}(f) := \E_{(\xb, y) \sim \Dcal_{\xb, y}(\eta_t)}[(f(\xb) - y)^2]$, we consider the excess risk
\begin{align}\label{eq:test}
    \exrisk_{\eta_t}(f) := \Rcal_{\eta_t}(f) - \Rcal_{\eta_t}(f_*) = \Rcal_{\eta_t}(f) - \sigma_y^2.
\end{align}
In particular, $\eta_t = \frac{1}{2}$ corresponds to the average test risk; $\eta_t = 0$ corresponds to the majority test risk; and $\eta_t = 1$ corresponds to the minority test risk.

\subsection{W2S Generalization under Spurious Correlation}\label{sec:theory}
With the problem setup, we are ready to present our main theorems regarding the effect of spurious correlations on W2S generalization.
First, to characterize the excess risks of $f_T$ and $f_S$ (and thereby the W2S generalization gain) precisely, we push the problem to the proportional asymptotic limit:
\begin{assumption}[Proportional asymptotic limit]\label{asm:high_dim_asymp_regime}
    We consider $d_z, n, N \to \infty$ with $d_z / n \to \gamma_z \in (0,\dspt^{-1})$ (\ie, $n > d_T$), $d_z / N \to \nu_z \in (0,\dsps^{-1})$ (\ie, $N > d_S$), whereas $2 \le \dsps \le \dspt \le \dsp$ are fixed. 
\end{assumption}
We highlight that in practice, the unlabeled samples are typically much more affordable than the labeled ones, leading to $\nu_z \ll \gamma_z$.
Now, we characterize the excess risks of the weak teacher after SFT and the strong student after W2S fine-tuning, respectively, in \Cref{thm:sft_weak,thm:w2s_strong_ridgeless}.
\begin{restatable}[SFT of weak teacher (\Cref{apx:pf_thm_sft_weak})]{theorem}{thmsft}\label{thm:sft_weak} 
    Under \Cref{asm:high_dim_asymp_regime}, \eqref{eq:sft} satisfies
    \begin{align*}
        \E_{\Dcal(\eta_\ell)^n}\sbr{\exrisk_{\eta_t}(f_T)} ~\overset{\PP}{\to}~ \sigma_y^2\ \gamma_z \Big(
        \tikz[baseline=(A.base)]{
            \node[fill=red!10, draw, rounded corners, inner sep=2pt] (A)
            {$\displaystyle \dspt$};
            \node[below=10pt, anchor=north] {\footnotesize\red{$\Vcal_T^{(0)}$ from label noise}};
        } ~+~ 
        \tikz[baseline=(B.base)]{
            \node[fill=blue!10, draw, rounded corners, inner sep=2pt] (B)
            {\footnotesize$\displaystyle \frac{\nbr{\rbr{\eta_t - \eta_\ell} \mub_T}_2^2}{\sigma_\xi^2}$}; 
            \node[below=10pt, anchor=north] {\footnotesize\blue{$\Vcal_T^{(1)}$ from spurious correlations}};
        }~\Big).
    \end{align*}
\end{restatable}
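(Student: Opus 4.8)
The plan is to reduce the excess risk to the trace of a resolvent-type matrix, then let $d_z\to\infty$ through a Marchenko--Pastur / deterministic-equivalent analysis tailored to the Kronecker structure $\phi_T=\zb\otimes\wb$, in the spirit of the ridgeless-regression analysis of \cite{dong2025discrepancies}. \emph{First,} since $\varphi_T=\Ub_T\phi_T$ with $\Ub_T$ on a Stiefel manifold, the min-norm interpolant of \eqref{eq:sft} equals $f_T(\cdot)=\phi_T(\cdot)^\top\wh\thetab$, where $\wh\thetab=(\Phi_T^\top\Phi_T)^{-1}\Phi_T^\top\wt\yb$ is the ordinary-least-squares solution in the $d_T$-dimensional $\phi_T$-coordinates (well defined because $n>d_T$) and $\Phi_T$ is the $n\times d_T$ matrix with rows $\phi_T(\wt\xb_i)^\top$. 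The prepended ``$1$'' in $\wb(\xb)$ makes $f_*=\zb(\cdot)^\top\betab_*$ exactly expressible as $\phi_T(\cdot)^\top\thetab_*$ (with $\thetab_*$ carrying $\betab_*$ on the core block and $\b0$ on the rest), so the problem is well specified and $\wh\thetab-\thetab_*=(\Phi_T^\top\Phi_T)^{-1}\Phi_T^\top\wt\epsb$ is pure noise; a bias--variance split followed by averaging over the Gaussian label noise then gives $\E_{\wt\epsb}[\exrisk_{\eta_t}(f_T)]=\sigma_y^2\,\tr\!\big((\Phi_T^\top\Phi_T)^{-1}\Sigma_T^{(\eta_t)}\big)$, where $\Sigma_T^{(\eta)}:=\E_{\Dcal(\eta)}[\phi_T(\xb)\phi_T(\xb)^\top]$.

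\emph{Next} I would identify the relevant covariances. Because $\zb(\xb)\sim\Ncal(\b0,\Ib_{d_z})$ is independent of $(\xib(\xb),g)$, one gets the Kronecker form $\Sigma_T^{(\eta)}=\Ib_{d_z}\otimes\Mb_T^{(\eta)}$ with $\Mb_T^{(\eta)}=\E_{\Dcal(\eta)}[\wb\wb^\top]$, and conditioning on $g\in\{0,1\}$ yields the explicit $2\times2$-block form of $\Mb_T^{(\eta)}$ with diagonal blocks $1$ and $\sigma_\xi^2\Ib_{\dspt-1}+\eta\,\mub_T\mub_T^\top$ and off-diagonal block $\eta\,\mub_T$; likewise $\Phi_T^\top\Phi_T=\sum_{i=1}^n(\zb_i\zb_i^\top)\otimes(\wb_i\wb_i^\top)$ (writing $\zb_i=\zb(\wt\xb_i)$, $\wb_i=\wb(\wt\xb_i)$), and conditionally on the group features $\{\wb_i\}$ its summands are independent centered Gaussian outer products with per-sample covariances $\Ib_{d_z}\otimes(\wb_i\wb_i^\top)$, each of rank $d_z$. \emph{Then} comes the technical core: in the proportional regime $d_z/n\to\gamma_z<\dspt^{-1}$, apply the anisotropic Marchenko--Pastur / deterministic-equivalent theory for sample covariances with heterogeneous per-sample covariances (adapting \cite{dong2025discrepancies}) to show that $(\Phi_T^\top\Phi_T)^{-1}$ is equivalent, when traced against Kronecker matrices $\Ib_{d_z}\otimes M$, to a matrix of Kronecker form $\Ib_{d_z}\otimes\Bb$, where the $\dspt\times\dspt$ factor $\Bb$ solves a self-consistent fixed-point equation whose input is the empirical law of the $\wb_i$; by the law of large numbers this law concentrates (it is determined by $\mub_T$, $\sigma_\xi^2$ and $\eta_\ell$), so $\Bb$ becomes deterministic and $\tr\!\big((\Phi_T^\top\Phi_T)^{-1}\Sigma_T^{(\eta_t)}\big)$ collapses, up to $o_{\PP}(1)$, to a finite-dimensional trace $\gamma_z\,\tr(\Bb\,\Mb_T^{(\eta_t)})$.

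\emph{Finally} I would evaluate this finite trace. The spurious structure enters $\Mb_T^{(\eta_\ell)}$ and $\Mb_T^{(\eta_t)}$ only through the rank-one piece $\eta_\ell\mub_T\mub_T^\top$ and the group-mean difference $(\eta_t-\eta_\ell)\mub_T$, so applying Sherman--Morrison/Woodbury on the $(\dspt-1)\times(\dspt-1)$ block collapses the trace to the scalars $\|\mub_T\|_2^2,\sigma_\xi^2,\eta_\ell,\eta_t$ and separates it into a dimensional ``variance'' contribution $\Vcal_T^{(0)}=\dspt$ and a ``spurious'' contribution $\Vcal_T^{(1)}=\|(\eta_t-\eta_\ell)\mub_T\|_2^2/\sigma_\xi^2$. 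A routine truncation of the subexponential tails of $\|\wb_i\|_2^2$, together with Gaussian concentration of the resolvent and of $\frac1n\sum_i\wb_i\wb_i^\top$, upgrades the limits to the stated $\overset{\PP}{\to}$. The hard part is the deterministic-equivalent step: dealing with a sample-covariance matrix whose per-sample covariances $\Ib_{d_z}\otimes(\wb_i\wb_i^\top)$ are themselves random and rank-deficient, and controlling the fixed point that emerges; once the reduction to a $\dspt$-dimensional computation is in place the remaining algebra is elementary, and the first two steps are essentially bookkeeping.
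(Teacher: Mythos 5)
Your reduction and your endgame coincide with the paper's: the min-norm solution of \eqref{eq:sft} collapses to OLS in the $\phi_T$-coordinates because $\Ub_T$ has orthonormal columns, the model is well specified (the paper's \Cref{lem:pop_weak}, via the prepended $1$ in $\wb$), averaging over the label noise gives $\E\sbr{\exrisk_{\eta_t}(f_T)} = \frac{\sigma_y^2}{n}\tr\rbr{\Sigmab_{\phi_T,\eta_t}\,\wt\Sigmab_n^{-1}}$ with $\Sigmab_{\phi_T,\eta} = \Ib_{d_z}\otimes\Cb_T(\eta)$, and your final block-inverse algebra giving $\tr\rbr{\Cb_T(\eta_t)\Cb_T(\eta_\ell)^{-1}} = \dspt + (\eta_t-\eta_\ell)^2\nbr{\mub_T}_2^2/\sigma_\xi^2$ is exactly the paper's computation.

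The gap is in your ``technical core.'' You delegate the limit of $\frac{1}{n}\tr\rbr{\Sigmab_{\phi_T,\eta_t}\wt\Sigmab_n^{-1}}$ to an anisotropic Marchenko--Pastur / deterministic-equivalent statement with a self-consistent $\dspt\times\dspt$ fixed point $\Bb$, but you never write down or solve that fixed point, and in your last step you silently substitute $\Bb = \Cb_T(\eta_\ell)^{-1}$ with the bare prefactor $\gamma_z$. That substitution is precisely the contentious step: for a generic sample covariance with population covariance $\Ib_{d_z}\otimes\Cb_T(\eta_\ell)$ in the proportional regime, the standard deterministic equivalent of the inverse carries a self-energy inflation (of the type $(1-\dspt\gamma_z)^{-1}\Cb_T(\eta_\ell)^{-1}$ in the Gaussian-universal case), which would produce a limit different from the stated $\gamma_z\tr\rbr{\Cb_T(\eta_t)\Cb_T(\eta_\ell)^{-1}}$; moreover, the Kronecker samples $\zb(\xb_i)\otimes\wb(\xb_i)$ violate the quadratic-form concentration underlying that universality (e.g., $\nbr{\phi_T(\wt\xb_i)}_2^2/d_T$ has order-one fluctuations through $\nbr{\wb(\wt\xb_i)}_2^2$), so no off-the-shelf theorem applies — as you yourself admit, this is ``the hard part,'' yet your plan contains no argument for why the fixed point has no correction. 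The paper does not go through a fixed-point equation at all: its \Cref{lem:cov_conc} exploits the block structure directly — the trace against $\Ib_{d_z}\otimes\Cb$ only involves the $d_z$ diagonal $\dspt\times\dspt$ blocks of $\wt\Sigmab_n^{-1}$, the diagonal blocks of $\wt\Sigmab_n$ concentrate around $s_k\,\Cb_T(\eta_\ell)$ with $s_k = \frac{1}{n}\sum_i z_k(\wt\xb_i)^2$, the off-diagonal blocks are controlled blockwise via a truncation event and matrix Bernstein, and $\frac{1}{d_z}\sum_k s_k^{-1}\to 1$ — and this is how the bare $\gamma_z$ factor emerges. To complete your route you would have to either derive and solve the fixed point for this non-standard ensemble and prove the self-energy correction is absent (or quantify it), or switch to the paper's blockwise-concentration argument; in either route the delicate point is controlling how the off-diagonal fluctuations feed back into the diagonal blocks of the inverse, and as written your proposal skips exactly that step.
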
\vspace{-.2cm}

\begin{restatable}[W2S, formally in \Cref{thm:w2s_strong_ridgeless_formal}]{theorem}{thmwts}\label{thm:w2s_strong_ridgeless}
    Under \Cref{asm:high_dim_asymp_regime}, \eqref{eq:w2s} satisfies
    \begin{align*}
        \E_{\Dcal(\eta_u)^N, \Dcal(\eta_\ell)^n}\sbr{\exrisk_{\eta_t}(f_S)} ~\overset{\PP}{{\to}}~ 
        &\sigma_y^2 \gamma_z \Big(
        \hspace{-.2cm}
        \tikz[baseline=(A.base)]{
            \node[fill=red!10, draw, rounded corners, inner sep=2pt] (A)
            {$\displaystyle \dspts$};
            \node[below=12pt, anchor=north] {\footnotesize\red{$\Vcal_S^{(0)} \le \Vcal_T^{(0)}$}};
        } 
        \hspace{-.4cm} + 
        \tikz[baseline=(B.base)]{
            \node[fill=blue!10, draw, rounded corners, inner sep=2pt] (B)
            {\footnotesize$\displaystyle \frac{\nbr{(\eta_u - \eta_\ell) \mub_T + (\eta_t - \eta_u) \Xib \mub_S}_2^2}{\sigma_\xi^2}$}; 
            \node[below=12pt, anchor=north] {\footnotesize\blue{$\Vcal_S^{(1)} \le \Vcal_T^{(1)}$ when $\eta_u = \eta_\ell$}};
        } 
        + \hspace{-.1cm}
        \tikz[baseline=(A.base)]{
            \node[fill=orange!10, draw, rounded corners, inner sep=2pt] (A)
            {$\displaystyle \Theta(\nu_z)$};
            \node[below=15pt, anchor=north] {\footnotesize\orange{$\Ecal_S \ll 1$}};
        }
        \hspace{-.1cm}\Big),
    \end{align*}
    where $\dspts = 1 + \nbr{\Xib}_F^2 \in [1, \dsps]$ is the effective group feature dimension learned by the strong student from the weak teacher controlled by the similarity between $\varphi_T$ and $\varphi_S$ in encoding group features (see \Cref{def:spur_sim})---less similar teacher-student pairs enjoy lower $\dspts$; 
    $\Vcal_S^{(0)}$ and $\Vcal_T^{(0)}$ are generalization errors of $f_S$ and $f_T$ from noisy labels;
    $\Vcal_S^{(1)}$ and $\Vcal_T^{(1)}$ are generalization errors of $f_S$ and $f_T$ induced by spurious correlations, $\eta_u, \eta_\ell \ne \eta_t$;
    and the higher-order term $\Ecal_S$, formalized in \Cref{thm:w2s_strong_ridgeless_formal}, becomes negligible when $\nu_z \ll 1$.
\end{restatable}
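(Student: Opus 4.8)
The plan is to reduce both ridgeless fits to finite‑dimensional OLS, strip off the exactly‑representable core component from each, and then track how the teacher's (purely estimation‑error) residual $r_T := f_T - f_*$ is re‑fit by the student.

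\textbf{Reduction to OLS and peeling the core.} Since $\varphi_T = \Ub_T\phi_T$ and $\varphi_S = \Ub_S\phi_S$ with $\Ub_T,\Ub_S$ column‑orthonormal, the minimum‑norm solutions in \eqref{eq:sft}--\eqref{eq:w2s} lie in $\range(\Ub_T)$, $\range(\Ub_S)$, so $f_T(\xb)=\phi_T(\xb)^\top\wt\Phib_T^{\pinv}\wt\yb$ and $f_S(\xb)=\phi_S(\xb)^\top\Phib_S^{\pinv}\fb_T$ with design matrices $\wt\Phib_T\in\R^{n\times d_T}$, $\Phib_S\in\R^{N\times d_S}$ of full column rank \whp\ under \Cref{asm:high_dim_asymp_regime}. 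The prepended $1$ in $\wb,\psib$ makes $f_*=\phi_T(\cdot)^\top[\betab_*;\b0]=\phi_S(\cdot)^\top[\betab_*;\b0]$ identically, so $\wt\Phib_T^{\pinv}\wt\yb=[\betab_*;\b0]+\wt\Phib_T^{\pinv}\wt\epsb$, i.e. $f_T=f_*+r_T$ with $r_T(\xb)=\zb(\xb)^\top\Nb_T\wb(\xb)$ where $\Nb_T\in\R^{d_z\times\dspt}$ reshapes the teacher SFT noise $\wt\Phib_T^{\pinv}\wt\epsb$; this is exactly \Cref{thm:sft_weak}. Likewise $\Phib_S^{\pinv}\fb_*=[\betab_*;\b0]$, hence $f_S-f_*$ equals the student's OLS fit of the \emph{teacher residual} onto $\phi_S$: written as a matrix, $f_S(\xb)-f_*(\xb)=\zb(\xb)^\top(\Bb_S-\Bb_*)\psib(\xb)$ with $\Bb_S-\Bb_*$ solving $\min_{\Bb}\sum_{i\le N}\big(\zb_i^\top\Bb\psib_i-\zb_i^\top\Nb_T\wb_i\big)^2$.

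\textbf{Population projection at $\eta_u$ plus a finite‑$N$ fluctuation.} Condition on $\wt\Scal$ (hence on $\Nb_T$). As $N,d_z\to\infty$ proportionally, the student OLS coefficient concentrates (in operator norm, up to $\sqrt{\dsps\nu_z}$) to the $L^2(\Dcal_{\xb}(\eta_u))$‑projection coefficient of $r_T$ onto $\spn\{\phi_S\}$. Using $\zb\perp\xib$, $\E[\zb\zb^\top]=\Ib$, and the explicit second moments $\Psib(\eta):=\E_{\Dcal_{\xb}(\eta)}[\psib\psib^\top]$, $\E_{\Dcal_{\xb}(\eta)}[\wb\psib^\top]$ (block matrices in the intercept/$\Tb^\top\xib$/$\Sb^\top\xib$ coordinates, where $\E[\Sb^\top\xib(\Tb^\top\xib)^\top]=\sigma_\xi^2\Xib^\top+\eta\,\mub_S\mub_T^\top$), the population solution is $\Bb_S^{\mathrm{pop}}-\Bb_*=\Nb_T\Mb$ with $\Mb:=\E_{\eta_u}[\wb\psib^\top]\,\Psib(\eta_u)^{-1}$; here the core part $\betab_*\eb_1^\top$ maps back exactly to $\Bb_*$ because the first row of $\E_{\eta_u}[\wb\psib^\top]$ equals the first row of $\Psib(\eta_u)$ (the student intercept absorbs the group‑feature mean shift), so only $\Nb_T\Mb$ survives. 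The squared fluctuation, propagated through the test risk, is the higher‑order term $\Ecal_S=\Theta(\nu_z)$: it collects the student OLS variance driven by the component of $r_T$ orthogonal to $\spn\{\phi_S\}$ acting as effective label noise of variance $\lesssim\|\Nb_T\|_F^2\asymp\sigma_y^2\gamma_z\dspt$, times $\dsps\nu_z$, plus concentration slack; a uniform operator‑norm bound on $\Nb_T$ inherited from \Cref{thm:sft_weak} makes $\Ecal_S\ll1$ when $\nu_z\ll1$, and one checks the cross‑term between $\Nb_T\Mb$ and the fluctuation vanishes to leading order after the expectation over $\Nb_T$.

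\textbf{Risk evaluation and simplification.} With $\Wb(\eta):=\E_{\Dcal_{\xb}(\eta)}[\wb\wb^\top]$, one gets $\exrisk_{\eta_t}(f_S)\approx\tr\!\big(\Mb\Psib(\eta_t)\Mb^\top\,\Nb_T^\top\Nb_T\big)+\Ecal_S$ (using $\zb\perp\psib$ and $\E[\zb\zb^\top]=\Ib$). Averaging over the teacher SFT noise and the labeled features, $\E[\Nb_T^\top\Nb_T]$ converges by the same proportional‑limit random‑matrix argument that proves \Cref{thm:sft_weak}; substituting it and evaluating the trace on the $2\times2$ blocks by Sherman--Morrison gives the announced form. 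The intercept direction contributes the $1$ in $\dspts$; projecting $\Tb^\top\xib$ onto $\spn\{\Sb^\top\xib\}$ has leading coefficient $\Tb^\top\Sb=\Xib$ (the $O(\eta^2)$ rank‑one mean correction is absorbed into $\sigma_\xi^2\Ib$), contributing $\tr(\Xib\Xib^\top)=\|\Xib\|_F^2$, so the variance term is $\sigma_y^2\gamma_z\,\dspts=\sigma_y^2\gamma_z(1+\|\Xib\|_F^2)\le\sigma_y^2\gamma_z\dspt$. The spurious (mean‑shift) contribution comes from the intercept row of $\Mb$: the teacher residual carries the $\eta_\ell$‑mean structure, the projection $\Mb$ is built at $\eta_u$, and the test is at $\eta_t$, so the residual mean evaluated at $\eta_t$ is $(\eta_u-\eta_\ell)\mub_T+(\eta_t-\eta_u)\Xib\mub_S$ up to the $\sigma_\xi^{-1}$ scaling, i.e. $\Vcal_S^{(1)}=\|(\eta_u-\eta_\ell)\mub_T+(\eta_t-\eta_u)\Xib\mub_S\|_2^2/\sigma_\xi^2$. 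Setting $\eta_u=\eta_\ell$ collapses it to $(\eta_t-\eta_\ell)^2\|\Xib\mub_S\|_2^2/\sigma_\xi^2$ and, together with $\dspts\le\dsps\le\dspt$, yields both comparisons with \Cref{thm:sft_weak}; the formal statement \Cref{thm:w2s_strong_ridgeless_formal} records $\Ecal_S$ explicitly.

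\textbf{Main obstacle.} The crux is the two nested layers of randomness. First, the teacher SFT error matrix $\Nb_T=\mathrm{reshape}(\wt\Phib_T^{\pinv}\wt\epsb)$ is a Kronecker/``multi‑task'' inverse‑Wishart‑type object whose features $\zb\otimes\wb$ form a Gaussian \emph{mixture} over the group label rather than an elliptical family, so pinning down $\E[\Nb_T^\top\Nb_T]$ and, simultaneously, a uniform operator‑norm bound on $\Nb_T$ needs the full proportional‑limit analysis behind \Cref{thm:sft_weak}; this part can lean on the companion multi‑task ridgeless analysis of \cite{dong2025discrepancies}. Second, and this is the genuinely new ingredient, one must show the student's finite‑$N$ deviation from the population projection is indeed $\Theta(\nu_z)$ and mean‑zero to leading order \emph{after} the outer expectation over $\Nb_T$, which requires the conditional (on $\wt\Scal$) student‑OLS analysis—including the effective‑label‑noise term generated by $(\mathrm{Id}-P_{\phi_S})r_T$—to be uniform in $\Nb_T$; this is where most of the work in \Cref{thm:w2s_strong_ridgeless_formal} goes.
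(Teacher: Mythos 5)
Your proposal is correct and follows essentially the same route as the paper: the reduction to the finite-dimensional OLS problem with population-optimal student coefficient $\betab_* \otimes \eb_1$ (the student intercept absorbing the mean shift), the conditional-on-teacher population projection $\Ab(\eta_u)^\top \Cb_S(\eta_u)^{-1}$ producing the main term $\gamma_z \tr\rbr{\Cb_{T,S}(\eta_t,\eta_u)\,\Cb_T(\eta_\ell)^{-1}}$ via the Theorem~\ref{thm:sft_weak} machinery, and the finite-$N$ fluctuation of order $\Theta(\nu_z)$ with vanishing cross-term correspond exactly to the paper's leading term and second-order fluctuation terms in the resolvent expansion of \Cref{lem:pf_w2s_strong_SbST_SbS_inv_conc}. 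The only substantive difference is that the concentration/deterministic-equivalent analysis (block-diagonal structure, Gaussian fourth-moment identities, matrix Bernstein bounds) that pins down $\Ecal_S$ exactly is sketched rather than carried out, which you correctly flag as the main remaining work.
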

It is worth noting that the proportional asymptotic limit (\Cref{asm:high_dim_asymp_regime}) assumed in \Cref{thm:sft_weak,thm:w2s_strong_ridgeless} can be relaxed to incorporate finite-sample cases via standard edge fluctuation analysis (see e.g., \cite{hastie2022surprises,cheng2024dimension}). We omit such extensions here since they do not bring additional insights to \Cref{thm:sft_weak,thm:w2s_strong_ridgeless}.

As a special case, without spurious correlations ($\eta_\ell = \eta_u = \eta_t$ or $\mub_\xi = \b0_{\dsp}$), \Cref{thm:sft_weak,thm:w2s_strong_ridgeless} exactly recover the results in \cite{dong2025discrepancies} at the proportional asymptotic limit: $\E[\exrisk_{\eta_t}(f_T)] {\to} \sigma_y^2 \gamma_z \dspt$ and $\E[\exrisk_{\eta_t}(f_S)] {\to} \sigma_y^2 \gamma_z (\dspts + \Theta(\nu_z))$, where with a small $\nu_z \ll 1$, the W2S gain is larger when the teacher and student representations are less aligned (\ie, lower $\dspts$).
Meanwhile, \Cref{thm:sft_weak,thm:w2s_strong_ridgeless} together reveal insights regarding the effect of spurious correlations on the W2S gain, 
\begin{align}\label{eq:w2s_gain}
    \Delta\Rcal_{\eta_t} := \E_{\Dcal(\eta_\ell)^n}\sbr{\exrisk_{\eta_t}(f_T)} - \E_{\Dcal(\eta_u)^N, \Dcal(\eta_\ell)^n}\sbr{\exrisk_{\eta_t}(f_S)},
\end{align}
as discussed in \Cref{rmk:w2s_gain}, where W2S generalization happens whenever $\Delta\Rcal_{\eta_t} > 0$.

\begin{remark}[Does W2S happen under spurious correlations?]\label{rmk:w2s_gain}
    \Cref{thm:sft_weak,thm:w2s_strong_ridgeless} provide a mixed answer to this question conditioned on various factors, including the teacher-student similarity, the separation between groups, and the choice of $\eta_u$ for given $\eta_\ell$\footnote{
        \label{fn:eta}In practice, $\eta_\ell$ is typically fixed and known (\eg, given a weak teacher fine-tuned on the Waterbirds training set), while $\eta_u$ can be controlled by the practitioner when collecting unlabeled data for W2S fine-tuning.
    }, as summarized below:
    \begin{enumerate}[label=(\alph*)]
        \item \textbf{W2S happens whenever $\eta_u = \eta_\ell$ and $\nu_z$ is small}, \eg, when $\|\Xib\|_F^2 \approx 0$ and $\nu_z \ll 1$, $\Delta\Rcal_{\eta_t} > 0$ is optimized at $\eta_u \approx \eta_\ell$ (\Cref{fig:s_1x4}). 
        We highlight that when $\eta_u = \eta_\ell$, in addition to the W2S gain from variance reduction $\Vcal_T^{(0)} - \Vcal_S^{(0)} = \dspt - \dspts \ge 0$, \textbf{the strong student improves upon its teacher in handling spurious correlations}: $\Vcal_T^{(1)} - \Vcal_S^{(1)} = ((\eta_t - \eta_\ell)^2/\sigma_\xi^2) (\|\mub_T\|_2^2 - \|\Xib \mub_S\|_2^2) \ge 0$, where the gain increases as the teacher-student similarity decreases.
        \item For fixed $\Xib$, assume $\mub_T \ne \Xib \mub_S$, when $\nu_z \ll 1$, the optimal $\eta_u$ that maximizes W2S gain is $\eta_u^\star = \frac{\eta_\ell \|\mub_T\|_2^2 - (\eta_t + \eta_\ell) \mub_T^\top \Xib \mub_S + \eta_t \|\Xib \mub_S\|_2^2}{\|\mub_T - \Xib \mub_S\|_2^2}$, \eg, when $\eta_\ell = \frac{1}{2}$, $\eta_u^\star = \frac{1}{2}$; when $\|\Xib\mub_S\|_2 \ll \|\mub_T\|_2$, $\eta_u^\star \approx \eta_\ell$; with $\|\Xib\mub_S\|_2 \neq 0$, $\eta_u^\star$ tends to increase with $\|\mub_S\|_2^2$ and deviate from $\eta_\ell$ when $\|\mub_S\|_2^2 \approx \|\mub_T\|_2^2$ (\Cref{fig:s_1x2} left).
        \item {W2S gain increases as the teacher-student similarity $\|\Xib\|_F^2$ decreases (\Cref{fig:s_1x2} right).} 
        \item \textbf{W2S may not happen if $\eta_u \ne \eta_\ell$, even when $\nu_z \ll 1$ and $\|\Xib\|_F^2=0$}, \eg, when $\eta_\ell = 0.4$ but $\eta_u = 0.1$, with $\|\Xib\|_F^2=0$, W2S does not happen if the majority and minority groups are well separated: $\Delta\Rcal_{1/2} < 0$ for any $\nu_z$ if $\|\mub_T\|_2^2 / \sigma_\xi^2 > 12.5 (\dspt - 1)$. More generally, for $\|\Xib\|_F^2=0$, $\Vcal_S^{(1)}$ increases proportionally to $(\eta_u - \eta_\ell)^2$, and thus $\Delta\Rcal_{\eta_t}$ diminishes as the gap increases.
    \end{enumerate}
\end{remark}
In \Cref{apx:fairness}, we further discuss implications of \Cref{thm:sft_weak,thm:w2s_strong_ridgeless} on the \emph{fairness} of W2S.

\subsection{Synthetic Experiments}\label{sec:synthetic_exp} 

\begin{figure}[h]
    \centering
    \includegraphics[width=\linewidth]{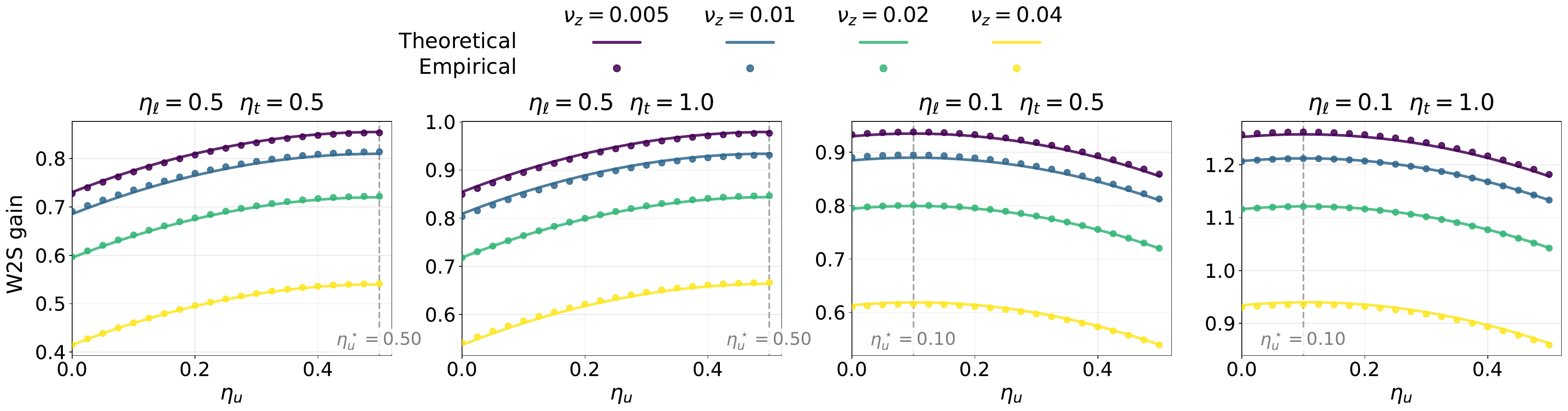}\vspace{-.5cm}
    \caption{W2S gains across different combinations of $\eta_\ell$ and $\eta_t$. Each panel shows theoretical (solid lines) and empirical (circles) results for W2S gain as a function of $\eta_u$, across different $\nu_z$ values. Here we fix $\mub_T$, $\mub_S$, $\Xib$, and $d_z$ with $\|\mub_T\|^2_2=10.0$, $\|\mub_S\|^2_2=0.1$, $\|\Xib\|_F^2=0.1\dsps$. Vertical dashed lines indicate the theoretical optimal $\eta_u^\star$ values that maximize W2S gain.}
    \label{fig:s_1x4}
\end{figure}

\begin{figure}[ht]
    \centering
    \includegraphics[width=\linewidth]{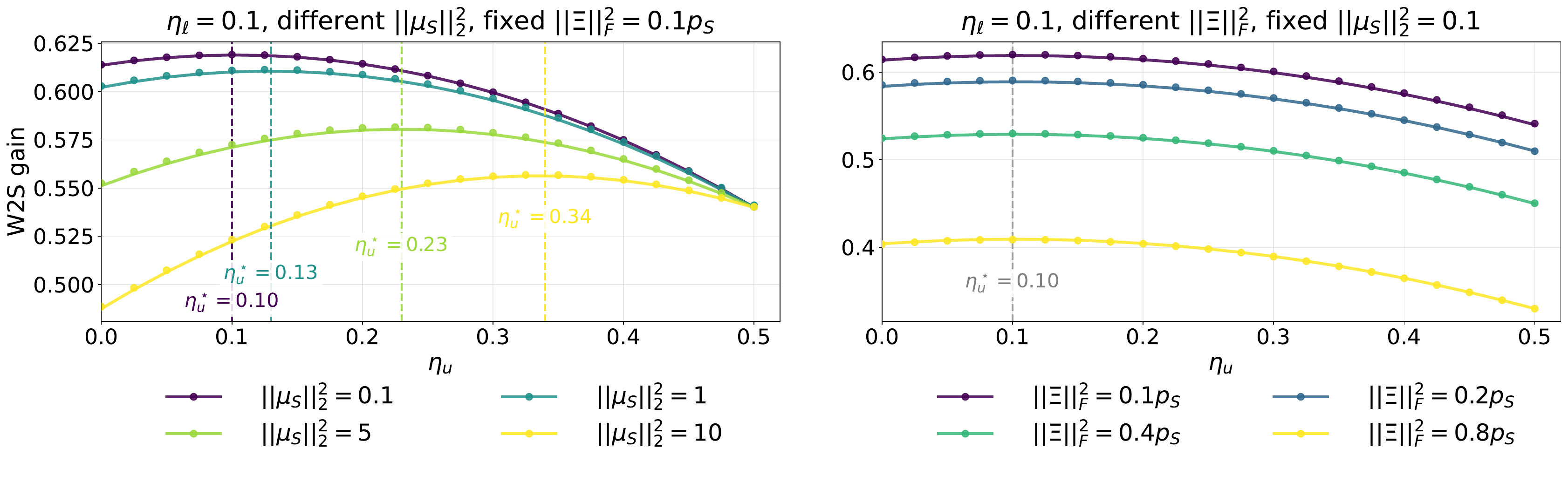}\vspace{-1cm}
    \caption{Impact of $\mub_S$ and $\Xib$ on W2S gain. Both panels show theoretical (solid lines) and empirical (circles) results for W2S gain as a function of $\eta_u$. Fixed parameters: $\eta_\ell = 0.1$, $\eta_t = 0.5$, $\nu_z = 0.04$, $\|\mub_T\|^2_2=10.0$. Left: varying $\|\mub_S\|_2^2$ with fixed $\|\Xib\|_F^2=0.1\dsps$. Right: varying $\|\Xib\|_F^2$ with fixed $\|\mub_S\|_2^2=0.1$. Dashed lines indicate the theoretical optimal $\eta_u^\star$ values that maximize W2S gain.}
    \label{fig:s_1x2}
\end{figure}

\Cref{fig:s_1x4,fig:s_1x2} validate the theory in \Cref{sec:theory} through synthetic Gaussian experiments, with fixed $d_z = 2048$ in all experiments.
We begin by examining how varying $\eta_u$ affects W2S gains under different values of $\eta_\ell$. 
As shown in \Cref{fig:s_1x4}, when $\|\Xib\|_F^2$ is small (a distinct teacher-student pair), W2S gains are maximized at $\eta_u \approx \eta_\ell$ for both balanced ($\eta_\ell=0.5$) and highly spurious ($\eta_\ell=0.1$) unlabeled data. 
This holds for both the average test risk and the minority test risk, consistent with \Cref{rmk:w2s_gain}(a). Moreover, the magnitude of the W2S gain decreases as $\nu_z$ increases, reflecting the role of $\Ecal_S$ in Theorem~\ref{thm:w2s_strong_ridgeless}.  
\Cref{fig:s_1x2} left shows that as $\|\mub_S\|_2^2$ increases so that $\|\Xib \mub_S\|_2^2$ becomes non-negligible compared to $\|\mub_T\|_2^2$, the optimal value $\eta_u^\star$ gradually shifts away from $\eta_\ell$. This indicates that in some special cases $\eta_u^\star$ may not lie near $\eta_\ell$, consistent with \Cref{rmk:w2s_gain}(b). 
\Cref{fig:s_1x2} right illustrates that the W2S gain decreases as the teacher-student similarity $\|\Xib\|_F^2$ increases, consistent with \Cref{rmk:w2s_gain}(c).

\section{Real-World Evaluation}\label{sec:real_world}

Now we extend our theoretical understanding of W2S under spurious correlation to real-world tasks.
We first leverage the theoretical framework to interpret our findings on how spurious correlations affect W2S performance across real-world benchmarks.

\subsection{Model and Dataset Setup}

\paragraph{Pre-trained models.}
Our weak teachers and strong students are drawn from a diverse set of pre-trained vision backbones that differ in architecture and training paradigm. 
Specifically, we consider ResNet-18 (ResNet18)~\citep{he2016deep}, 
CLIP ViT-B/32 (Clipb32)~\citep{radford2021learning}, 
ConvNeXt-L (ConvNeXt)~\citep{liu2022convnet}, 
DINOv2 ViT-L/14 (DINOv2)~\citep{oquab2023dinov2}, 
and MAE ViT-B/16 (MAE)~\citep{he2022masked}. 
For each experiment on a given dataset,  we include all teacher–-student pairs whose relative strength (measured by accuracy) remains stable when we vary parameters including $\eta_\ell$, $\eta_u$, $N$, or $n$. 
We freeze all backbone parameters, view the pre-trained feature for the teacher and the student as $\varphi_T$ and $\varphi_S$, and only finetune the classification head.

\paragraph{Datasets.}
From both theoretical and practical perspectives, effective W2S requires that the pre-trained weak teacher and strong student have learned feature representations that are useful for the downstream task. 
Therefore, we evaluate W2S performance on widely used spurious correlation benchmarks that are relatively close to the pre-training distribution.
These include Waterbirds~\citep{sagawadistributionally}, BFFHQ~\citep{lee2021learning}, and ImageNet-9~\citep{xiao2020noise}, which contain spurious correlations between background and bird labels, age and gender labels, and background and object labels, respectively. 
We further provide a self-generated dataset, BG-COCO, by creating spurious correlations between cats/dogs from COCO~\citep{lin2014microsoft} and indoor/outdoor scenes from Places~\citep{zhou2017places}. 
We note that in all four datasets, the spurious correlation arises from highly imbalanced group proportions between the majority and minority groups. 
We denote the minority group proportion in the original training set of each dataset as $\eta_o$, which equals $0.05$, $0.005$, $0$, and $0.05$ for Waterbirds, BFFHQ, ImageNet-9, and BG-COCO, respectively. 
Detailed configurations of the dataset splits are provided in Appendix~\ref{subapp:ds}.

\subsection{Interpreting W2S under Spurious Correlations} \label{sec:exp_real}

\begin{figure}[ht]
    \centering
    \includegraphics[width=0.9\linewidth]{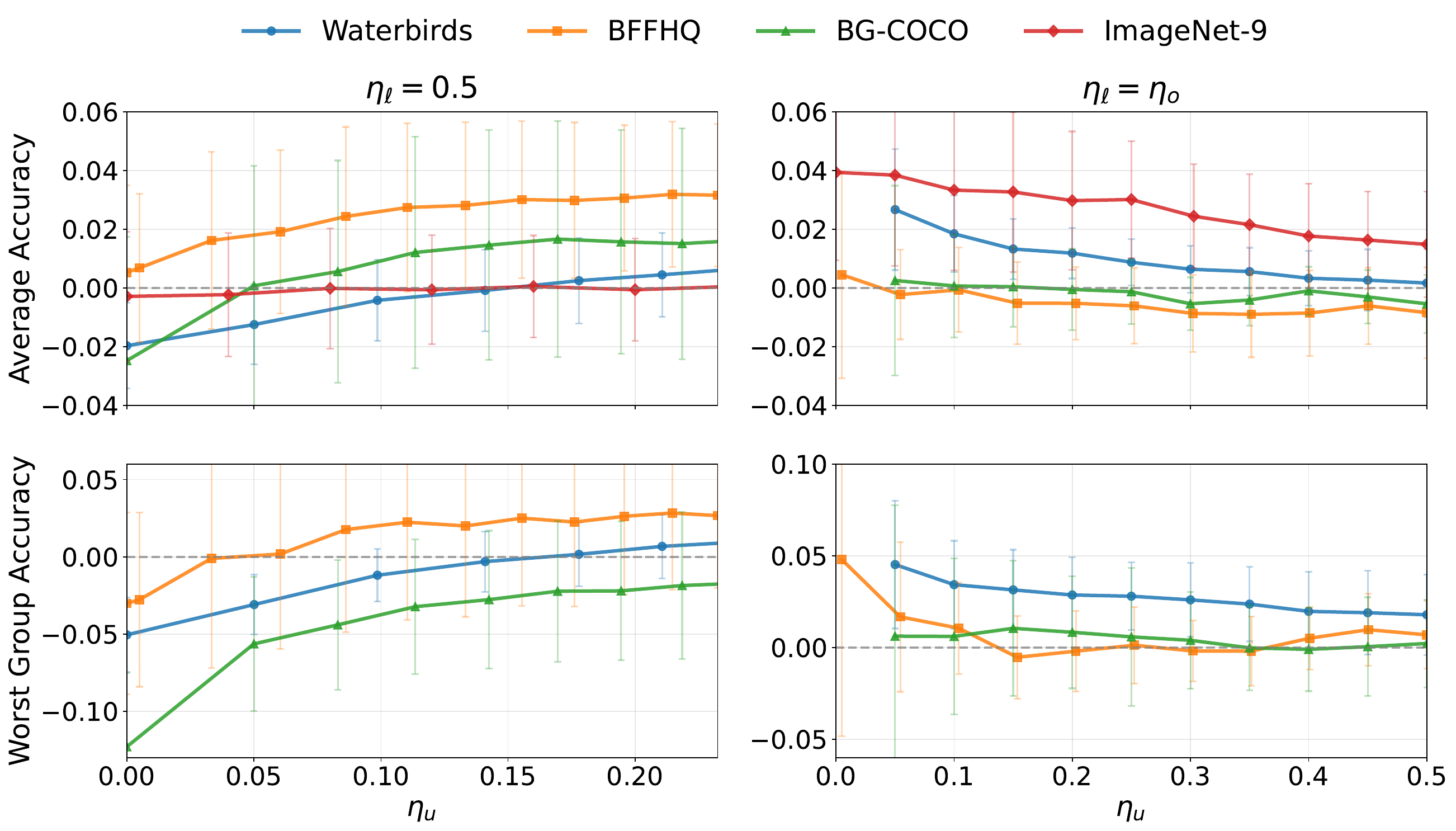}\vspace{-.25cm}
    \caption{Average W2S gain across all teacher-student pairs as a function of \(\eta_u\) on all four datasets.
    Top row: average accuracy; bottom row: worst group accuracy.
    Left column fixes \(\eta_\ell=0.5\); right column fixes \(\eta_\ell=\eta_o\).
    For $\eta_\ell = 0.5$, curves are plotted over a shared $\eta_u$ interval aligned across datasets (bounded by minority group sample availability) to enable direct comparability. 
    For \(\eta_\ell=\eta_o\), each dataset is plotted from its own \(\eta_o\) (0.05, 0.005, 0.05, and 0 for Waterbirds, BFFHQ, BG-COCO, and ImageNet-9, respectively) up to \(0.5\).
    ImageNet-9 does not have a clearly defined worst group and is therefore omitted from the bottom panels.
    }
    \label{fig:msg12}
\end{figure}

We investigate how the proportion of the minority group in the unlabeled data affects W2S performance when the labeled data are either group-balanced or group-imbalanced. 
Specifically, we fix $\eta_\ell = 0.5$ and $\eta_\ell = \eta_o$, respectively, and vary $\eta_u$ while recording the change in W2S gain.\footnote{For classification tasks, W2S gain refers to the improvement in test accuracy achieved by the strong student after W2S fine-tuning over its weak teacher.}
Figure~\ref{fig:msg12} presents the average W2S gain across all teacher–student pairs on all four datasets. 
More comprehensive results are provided in Appendix~\ref{subapp:interp_results}.

Our results show that, for both average accuracy ($\eta_t=0.5$) and worst group accuracy ($\eta_t=1$), increasing the minority proportion in the unlabeled data improves W2S performance when the weak teacher is free of spurious correlation ($\eta_\ell=0.5$). Moreover, when the weak teacher itself encodes spurious correlation ($\eta_\ell=\eta_o$), the W2S gain is consistently positive across all four datasets at $\eta_u=\eta_o$, but surprisingly decreases for more balanced data as $\eta_u$ increases from $\eta_o$ to $0.5$. Overall, W2S gain is negatively affected as the gap between $\eta_u$ and $\eta_\ell$ increases.
These observations echo our theory and synthetic experiments (see \Cref{thm:sft_weak,thm:w2s_strong_ridgeless}, \Cref{rmk:w2s_gain}, and \Cref{fig:s_1x4}), showing that our theoretical findings on regression extends natually to broader, real-world classification problems.

\section{Enhanced-W2S Method}\label{sec:enhanced_w2s}
Inspired by the theory and observations in \Cref{sec:setup,sec:real_world}, we introduce a simple retraining method based on student confidence and generalized cross-entropy to strengthen W2S under spurious correlations. 
We show that this approach remarkably outperforms vanilla W2S across multiple datasets and large pre-trained backbones, without requiring any group annotations.

\paragraph{Method.} 
Both our theoretical results and empirical findings indicate that W2S gain is noticeably reduced when there is a large discrepancy between the minority proportions of the unlabeled data and the labeled data. 
Therefore, we propose a simple method that requires no group label annotations and is capable of improving W2S gain in two particularly important settings: 
one where the labeled data are heavily affected by spurious correlation while the unlabeled data are free of it 
($\eta_\ell=\eta_o, \eta_u=0.5$), 
and the other where the unlabeled data suffer from spurious correlation while the labeled data are balanced 
($\eta_\ell=0.5, \eta_u=\eta_o$).

Formally, let the unlabeled data be $\hat{\Scal} = \{(\xb_i, \hat y_i) \mid i \in [N]\}$, where $\hat y_i$ is the pseudolabel given by the weak teacher on $\xb_i \in \Scal_x$.
Our method enhances W2S gain by retraining the strong student after W2S fine-tuning, based on two components: 
(i) selecting a fraction $p \in (0,1]$ of $\hat{\Scal}$ consisting of the samples with the highest student confidence (equivalently, the lowest entropy), and 
(ii) applying the generalized cross-entropy (GCE) loss~\citep{zhang2018generalized} with parameter $q \in (0,1]$ to each $(\xb_i, \hat y_i)$ in this subset:
\[
    \Lcal_{\text{GCE}}(\xb_i, \hat y_i; q) 
    \;=\; \frac{1 - \pb_{\hat y_i}(\xb_i)^q}{q},
\]
where $\pb_{\hat y_i}(\xb_i)$ is the softmax probability that the student assigns to the pseudolabel $\hat y_i$ for $\xb_i$.

In both settings ($\eta_\ell=\eta_o, \eta_u=0.5$ and $\eta_\ell=0.5, \eta_u=\eta_o$), selecting a high-confidence subset of the student’s predictions filters for samples where all relevant features are clearly expressed and effectively used during prediction, thus preventing the strong student from over-relying on any single (potentially spurious) feature. Moreover, unlike the CE loss which imposes overly strong penalties on high-confidence but incorrect pseudolabels, applying the GCE loss to the selected subset mitigates the negative impact of pseudolabel noise from the weak teacher.\footnote{In our Enhanced-W2S method, the role of GCE loss is analogous to its original use in~\citet{zhang2018generalized} for handling noisy labels. Different from the setting studied in~\citet{nam2020learning}, where GCE loss on ground-truth labeled datasets with spurious correlations was observed to amplify bias, in our method GCE loss is applied to a pseudolabeled dataset restricted to a high-confidence subset, and thus serves a fundamentally different role.} More importantly, for the case $\eta_\ell=\eta_o, \eta_u=0.5$, confidence-based selection further provides a crucial benefit. As shown in Appendix~\ref{subapp:Enhanced_W2S}, the high-confidence subset tends to filter out a larger fraction of minority samples to effectively reduce the new $\eta_u$ during retraining. This observation aligns with our theoretical prediction that when $\eta_\ell=\eta_o$, decreasing $\eta_u$ from $0.5$ leads to improved W2S gain.

\begin{table*}[hbt]
\centering
\scriptsize
\setlength{\tabcolsep}{3.0pt}
\renewcommand{\arraystretch}{1.15}

\resizebox{1.0\textwidth}{!}{%
\begin{tabular}{llc *{10}{c}}
\toprule
\multirow{2}{*}{Dataset} & \multirow{2}{*}{$\eta_\ell$} & \multirow{2}{*}{$\eta_u$} &
\multicolumn{10}{c}{Teacher--Student pair} \\
\cmidrule(lr){4-13}
& & &
\shortstack{DINOv2\\ConvNeXt} &
\shortstack{DINOv2\\Clipb32} &
\shortstack{DINOv2\\ResNet18} &
\shortstack{DINOv2\\MAE} &
\shortstack{ConvNeXt\\Clipb32} &
\shortstack{ConvNeXt\\ResNet18} &
\shortstack{ConvNeXt\\MAE} &
\shortstack{Clipb32\\ResNet18} &
\shortstack{Clipb32\\MAE} &
\shortstack{ResNet18\\MAE} \\
\midrule
\multirow{2}{*}{Waterbirds}
& 0.5   & $\eta_o$ &  6.60 & 11.29 &  7.34 & 16.68 &  3.79 &  2.05 &  6.28 &   ---  &  2.07 &  0.77 \\
& $\eta_o$ & 0.5   &  7.19 & 13.86 & 11.73 & 11.62 &  2.85 &  2.02 &  4.33 &   ---  &  1.32 & 14.54 \\
\addlinespace
\multirow{2}{*}{BFFHQ}
& 0.5   & $\eta_o$ &  6.85 &  2.75 &  8.42 &  4.93 &  4.05 &   ---   &   ---   &  6.54 &  5.12 &   ---   \\
& $\eta_o$ & 0.5   &  3.92 &  8.53 &  2.02 &  4.56 &  2.09 &   ---   &   ---   &  2.06 & -1.37 &   ---   \\
\addlinespace
\multirow{2}{*}{BG-COCO}
& 0.5   & $\eta_o$ &  5.38 & 13.40 & 12.88 & 24.01 &  9.82 &  6.49 & 15.25 &  3.39 & 12.43 &  2.05 \\
& $\eta_o$ & 0.5   & 10.21 & 16.99 & 12.25 & -3.52 &  3.41 &  1.21 & -3.07 &  3.48 &  0.31 &  3.70 \\
\addlinespace
\multirow{2}{*}{ImageNet-9}
& 0.5   & $\eta_o$ &   ---   &  6.03 &  7.45 & 24.11 &  4.74 &  5.30 & 18.49 &  4.22 & 21.73 & 17.98 \\
& $\eta_o$ & 0.5   &   ---   &  8.21 & 11.28 & 22.00 &  3.77 &  1.81 & 10.50 &  4.51 & 23.24 & 15.76 \\
\bottomrule
\end{tabular}%
} 
\caption{Relative improvement of Enhanced-W2S over vanilla W2S (\%, measured by average accuracy) across all datasets and teacher--student pairs. Each entry reports the mean improvement over all $N,n$ combinations. For each model pair in the table header, the assignment of weak teacher and strong student depends on the dataset. We report for each dataset only those model pairs whose relative strength relationship remains consistent across different $(\eta_\ell,\eta_u)$ settings within that dataset.}
\label{tab:delta-pgr-avg-only-split-etas-2line}
\end{table*}

\begin{figure}[htb]
  \centering
  \includegraphics[width=\linewidth]{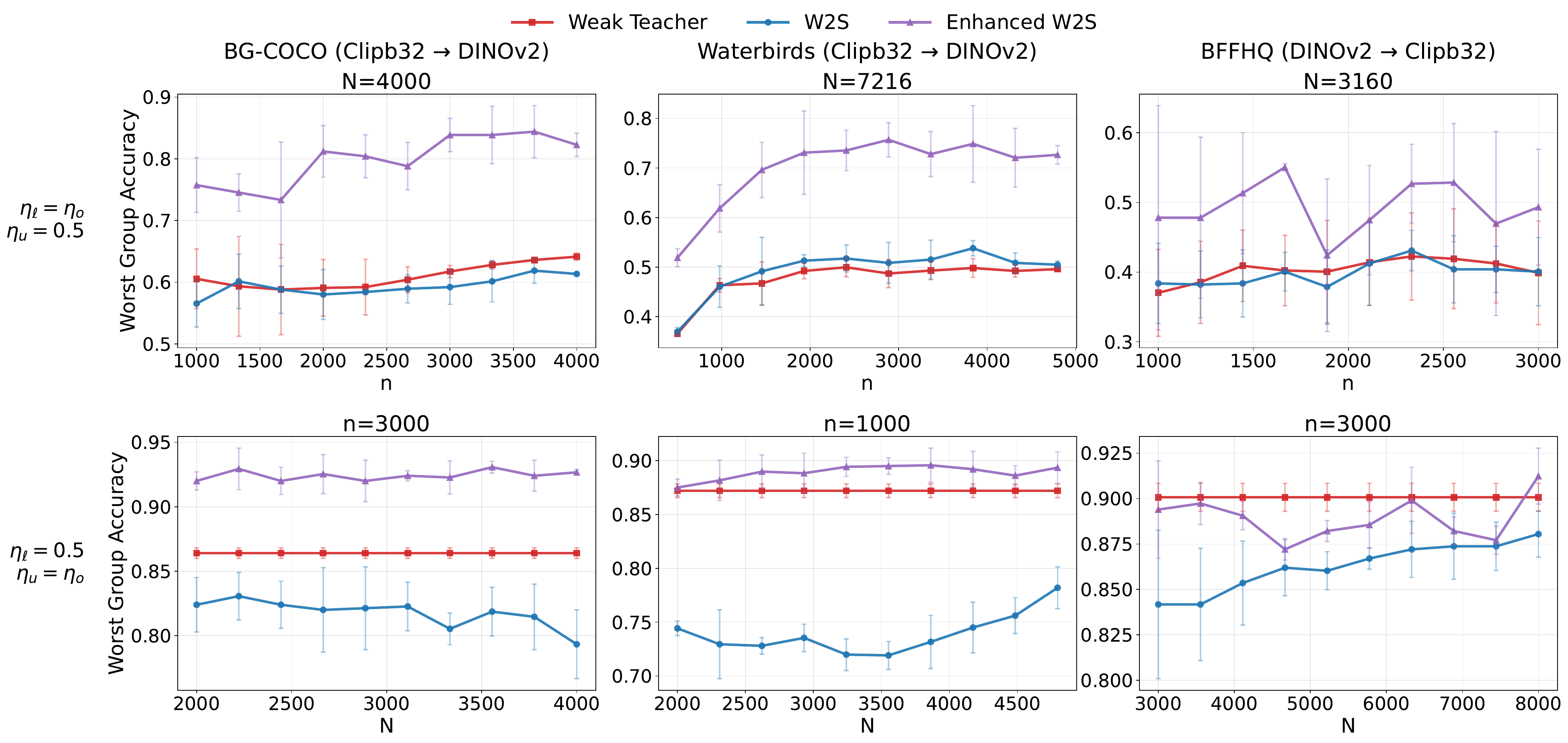}\vspace{-.5cm}
  \caption{Comparison of Enhanced-W2S and original W2S for the (Clipb32, DINOv2) pair on BG-COCO, Waterbirds, and BFFHQ. 
  Top row: worst group accuracy with $\eta_\ell=\eta_o,\ \eta_u=0.5$ (fixed $N$, varying $n$). 
  Bottom row: worst group accuracy with $\eta_\ell=0.5,\ \eta_u=\eta_o$ (fixed $n$, varying $N$).}
  \label{fig:msg3-dcl}
\end{figure}

\paragraph{Main results.} 
We evaluate our Enhanced-W2S method across all four datasets. 
Table~\ref{tab:delta-pgr-avg-only-split-etas-2line} reports the relative improvement of Enhanced-W2S over vanilla W2S for each teacher--student pair. 
Figure~\ref{fig:msg3-dcl} further visualizes the performance of Enhanced-W2S versus vanilla W2S for a representative model pair. 
Overall, for both average accuracy and worst group accuracy, Enhanced-W2S achieves consistent and substantial improvements over vanilla W2S under both $(\eta_\ell,\eta_u)$ settings.
Specifically, Table~\ref{tab:delta-pgr-avg-only-split-etas-2line} shows that 67 out of 70 model pairs exhibit a positive gain (measured by average accuracy), with the mean relative improvements across all pairs reaching 7.02\% (Waterbirds), 4.32\% (BFFHQ), 7.50\% (BG-COCO), and 11.73\% (ImageNet-9). 
In addition, the relative improvement of Enhanced-W2S in terms of worst group accuracy across all pairs is 21.14\% (Waterbirds), 3.81\% (BFFHQ), and 7.73\% (BG-COCO). Further details are provided in Appendix~\ref{subapp:Enhanced_W2S}.

\begin{table*}[!h]
\centering
\scriptsize
\setlength{\tabcolsep}{3.0pt}
\renewcommand{\arraystretch}{1.15}
\begin{tabular*}{1.0\textwidth}{@{\extracolsep{\fill}} c l *{8}{c}}
\toprule
\multirow{3}{*}{Comparison} & \multirow{3}{*}{\shortstack{Metric\\($\times 10^{-2}$)}} &
\multicolumn{2}{c}{Waterbirds} &
\multicolumn{2}{c}{BFFHQ} &
\multicolumn{2}{c}{BG\text{-}COCO} &
\multicolumn{2}{c}{ImageNet-9} \\
\cmidrule(lr){3-4}\cmidrule(lr){5-6}\cmidrule(lr){7-8}\cmidrule(lr){9-10}
& & \multicolumn{2}{c}{$(\eta_\ell,\eta_u)$} &
      \multicolumn{2}{c}{$(\eta_\ell,\eta_u)$} &
      \multicolumn{2}{c}{$(\eta_\ell,\eta_u)$} &
      \multicolumn{2}{c}{$(\eta_\ell,\eta_u)$} \\
\cmidrule(lr){3-4}\cmidrule(lr){5-6}\cmidrule(lr){7-8}\cmidrule(lr){9-10}
& &
$(0.5,\eta_o)$ & $(\eta_o,0.5)$ &
$(0.5,\eta_o)$ & $(\eta_o,0.5)$ &
$(0.5,\eta_o)$ & $(\eta_o,0.5)$ &
$(0.5,\eta_o)$ & $(\eta_o,0.5)$ \\
\midrule
\multirow{2}{*}{\shortstack{Enhanced W2S\\$-$ Enhanced W2S ($q \to 0$)}}
& Average      & 3.27 & 1.00 & 3.51 & 0.46 & 3.41 & 0.52 & 1.00 & 0.51 \\
& Worst        & 5.15 & 2.39 & 4.34 & 1.90 & 3.79 & 1.46 &   ---  &  ---   \\
\addlinespace
\multirow{2}{*}{\shortstack{Enhanced W2S\\$-$ Enhanced W2S ($p=1$)}}
& Average      & 2.79 & 4.84 & 2.77 & 2.69 & 4.97 & 3.37 & 4.51 & 5.08 \\
& Worst        & 3.57 & 8.58 & 2.75 & 3.73 & 6.26 & 5.44 &   ---  &  ---   \\
\bottomrule
\end{tabular*}
\caption{Ablation across four datasets: improvements of Enhanced-W2S over two baselines, using only the CE loss (i.e., the $q \to 0$ limit of GCE) and using all unlabeled data ($p=1$), in terms of either average accuracy or worst group accuracy. For each dataset, improvements are computed as the mean across all model pairs whose relative strength relationship remains consistent under different $(\eta_\ell,\eta_u)$ settings. ImageNet-9 has no well-defined worst group, so those entries are omitted.}
\label{tab:enhanced_vs_ablation_across_datasets_two-rows}
\end{table*}

\paragraph{Ablation study.} 
We conduct controlled ablations to examine the contribution of the two key components of our Enhanced-W2S methods, namely the use of the GCE loss and confidence-based selection. 
Specifically, we conduct two separate ablations: 
(i) replacing the GCE loss with the standard CE loss, 
and (ii) replacing confidence-based selection with using the full unlabeled dataset. 
We then retrain the model under each variant and compare the results with the original Enhanced-W2S method.
Table~\ref{tab:enhanced_vs_ablation_across_datasets_two-rows} shows that under both $(\eta_\ell,\eta_u)$ settings, the GCE loss and confidence-based selection each play a positive role in improving W2S gain. 
When $\eta_\ell=\eta_o, \eta_u=0.5$, the impact of using CE loss is consistently smaller than that of using the full unlabeled dataset; 
whereas under $\eta_\ell=0.5, \eta_u=\eta_o$, the effects of the two ablations are more comparable. 
This suggests that filtering out minority group samples via high-confidence selection plays a more critical role in the former setting, 
while in the latter setting the contributions of GCE and high-confidence selection are comparable.

\section{Conclusions, Discussions, and Future Directions}
In this work, we start with a theoretical framework that models W2S generalization under spurious correlations induced by group imbalance. Within this framework, we precisely characterize how different factors, such as the proportions of minority groups in labeled and unlabeled data and the teacher-student similarity, affect W2S, which is validated through extensive synthetic experiments and on diverse real-world tasks. Inspired by our analysis, we proposed Enhanced-W2S, a confidence-based retraining algorithm that does not require any group labels and substantially improves W2S gains when the labeled or unlabeled data are highly group-imbalanced. The effectiveness of this approach is demonstrated across assorted real-world datasets.

It is important to emphasize that spurious correlations in W2S constitute a critical issue that deserves closer attention. Beyond standard benchmarks, such correlations can pose substantial risks: in socially sensitive domains they may reflect demographic biases, and in safety-critical applications they can degrade reliability under rare but high-stakes conditions. While our experiments focus on public computer vision benchmarks, the mechanisms we analyze are broadly relevant. Our algorithm provides the first attempt to improve W2S in this setting, and we hope this work will inspire further efforts toward more reliable and efficient W2S methods. 

Meanwhile, from a more technical perspective, another exciting future direction is to investigate W2S generation (with or without spurious correlation) beyond the kernel regime by taking the training dynamics of the teacher and student models, conditioned on their pre-trained initializations, into consideration.

\subsubsection*{Acknowledgments}
The authors would like to thank Jason D. Lee and Denny Wu for insightful discussions. YD acknowledges support of the NYU Courant Instructorship. QL acknowledges support of NSF DMS-2523382 and DOE Office of Science under Award \#DE-SC0024721.

\bibliographystyle{plainnat}
\bibliography{ref}

\clearpage
\appendix
\hypersetup{linkcolor=black}
\appendixpage
\startcontents[sections]
\printcontents[sections]{l}{1}
\glsresetall

\section{Usage of Large Language Models}
Large language models were used in a limited manner to 
(i) search for related literature,
(ii) check grammar/phrasing, and 
(iii) make stylistic adjustments (\eg, generating \texttt{tikz} decoration of equations, tables, and figures).

\section{Additional Notations}\label{apx:notations}
For any $n \in \N$, let $[n] = \{1,2,\ldots,n\}$. $\eb_i$ is the $i$-th canonical basis of a conformable dimension.
We adapt the standard big-O notations for functions of multiple variables: for two functions $f, g: \N^k \to \R_{\ge 0}$, $f(\nb) = O(g(\nb))$ means that there exists a constant $C > 0$ such that $f(\nb) \le C g(\nb)$ for all $\nb \in \N^k$; $f(\nb) = \Omega(g(\nb))$ means that there exists a constant $c > 0$ such that $f(\nb) \ge c g(\nb)$ for all $\nb \in \N^k$; $f(\nb) = \Theta(g(\nb))$ means that $f(\nb) = O(g(\nb))$ and $f(\nb) = \Omega(g(\nb))$; $f(\nb) = o(g(\nb))$ means that $\lim_{\min_i n_i \to \infty} f(\nb)/g(\nb) = 0$.
For a scalar quantity $f(n) \ge 0$ depending on $n \in \N$, $f(n) = O_{\PP}(g(n))$ means that for any $\delta \in (0,1)$, there exists a constant $C(\delta) > 0$, independent of $n$, and a natural number $N(\delta) \in \N$ such that $\Pr[f(n) \le C(\delta) g(n)] \ge 1 - \delta$ for all $n \ge N(\delta)$.

\section{Proofs for \Cref{sec:theory}}\label{apx:theory}
We first observe that when $n > d_T$, \eqref{eq:sft} can be equivalently written as $\thetab_T = \Ub_T \betab_T$ where
\begin{align}\label{eq:sft_equiv}
    \betab_T = \argmin_{\betab \in \R^{d_T}} \frac{1}{n} \sum_{i=1}^n (\phi_T(\wt\xb_i)^\top \betab - \wt y_i)^2.
\end{align}
Analogously, \eqref{eq:w2s} can be equivalently written as $\thetab_S = \Ub_S \betab_S$ where
\begin{align}\label{eq:w2s_equiv}
    \betab_S = \argmin_{\betab \in \R^{d_S}} \frac{1}{N} \sum_{i=1}^N (\phi_S(\xb_i)^\top \betab - f_T(\xb_i))^2.
\end{align}

\subsection{SFT of Weak Teacher}\label{apx:pf_thm_sft_weak}
We start by considering the population-optimal linear predictor over the weak teacher representation, $\phi_T(\cdot)$: as $n \to \infty$, \eqref{eq:sft_equiv} converges to
\begin{align}\label{eq:pop_sft}
    \betab_T^\infty &= \argmin_{\betab \in \R^{d_T}} \E_{(\xb, y) \sim \Dcal_{\xb,y}(\eta_\ell)}[(\phi_T(\xb)^\top \betab - y)^2].
\end{align}

\begin{lemma}[Population SFT of weak teacher]\label{lem:pop_weak}
    When supervisedly fine-tuned over the population, the weak teacher from \eqref{eq:pop_sft} satisfies $f_T^\infty(\xb) = \phi_T(\xb)^\top \betab_T^\infty = \zb(\xb)^\top \betab_* = f_*$.
\end{lemma}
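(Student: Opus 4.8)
The plan is to establish the identity in two moves: (i) the teacher representation $\phi_T$ can realize the target $f_*$ \emph{exactly} as a linear functional, so that the population least-squares problem \eqref{eq:pop_sft} attains the Bayes risk; and (ii) every minimizer of the population squared loss induces the \emph{same} predictor almost surely, forcing that predictor to coincide with $f_*$. For the baseline, since $\epsilon \sim \Ncal(0,\sigma_y^2)$ is independent of $\xb$ under \Cref{asm:reg_spur_corr}, every measurable $h : \Xcal \to \R$ satisfies
\[
    \E_{(\xb,y)\sim\Dcal_{\xb,y}(\eta_\ell)}\big[(h(\xb)-y)^2\big] \;=\; \E\big[(h(\xb)-f_*(\xb))^2\big] + \sigma_y^2 \;\ge\; \sigma_y^2 ,
\]
with equality iff $h = f_*$ almost surely; in particular the minimal population risk over the linear class $\{\xb \mapsto \phi_T(\xb)^\top\betab : \betab \in \R^{d_T}\}$ is at least $\sigma_y^2$.

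For the realizability step I would take $\betab^\dagger = \betab_* \otimes \eb_1 \in \R^{d_T}$, where $\eb_1 \in \R^{\dspt}$ is the first canonical basis vector, so that $\betab^\dagger$ selects exactly the ``core'' block of $\phi_T$. Using $\phi_T(\xb) = \zb(\xb)\otimes\wb(\xb)$ with $\wb(\xb) = [1; \Tb^\top\xib(\xb)]$ (whose first coordinate is the constant $1$) and the mixed-product property of the Kronecker product,
\[
    \phi_T(\xb)^\top\betab^\dagger \;=\; \big(\zb(\xb)\otimes\wb(\xb)\big)^\top\big(\betab_*\otimes\eb_1\big) \;=\; \big(\zb(\xb)^\top\betab_*\big)\big(\wb(\xb)^\top\eb_1\big) \;=\; \zb(\xb)^\top\betab_* \;=\; f_*(\xb)
\]
for every $\xb$. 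Hence the predictor associated with $\betab^\dagger$ attains population risk exactly $\sigma_y^2$, so $\betab^\dagger \in \argmin$ in \eqref{eq:pop_sft} and the minimal linear risk equals $\sigma_y^2$.

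It remains to pin down uniqueness at the level of predictors. If $\betab_1,\betab_2$ both minimize $\betab \mapsto \E[(\phi_T(\xb)^\top\betab - y)^2]$, the normal equations give $\E[\phi_T\phi_T^\top](\betab_1-\betab_2) = \b0$, hence $\E\big[(\phi_T(\xb)^\top(\betab_1-\betab_2))^2\big] = 0$ and so $\phi_T(\xb)^\top\betab_1 = \phi_T(\xb)^\top\betab_2$ almost surely; applying this with $\betab_1 = \betab_T^\infty$ and $\betab_2 = \betab^\dagger$ yields $f_T^\infty(\xb) = \phi_T(\xb)^\top\betab_T^\infty = \zb(\xb)^\top\betab_* = f_*(\xb)$ a.s., which is the claim. (If one prefers genuine uniqueness of $\betab_T^\infty$, one may instead compute $\E[\phi_T\phi_T^\top] = \Ib_{d_z}\otimes\E[\wb\wb^\top]$ using the independence of $\zb$ and $\xib$ in \Cref{asm:reg_spur_corr}, observe $\E[\wb\wb^\top] \succ 0$ since its Schur complement with respect to the leading $1$ equals $\sigma_\xi^2\Ib_{\dspt-1} + \eta_\ell(1-\eta_\ell)\mub_T\mub_T^\top \succ 0$, and then get $\betab_T^\infty = (\E[\phi_T\phi_T^\top])^{-1}\E[\phi_T y] = \betab_*\otimes\eb_1$ directly.) There is no genuinely hard step; the only points requiring care are keeping the realizability argument independent of the Kronecker ordering convention (it is, as long as the first coordinate of $\wb$ is the constant $1$) and claiming uniqueness only of the optimal predictor rather than of the optimal parameter — and, notably, the main line uses no independence between $\zb(\xb)$ and $\xib(\xb)$ at all.
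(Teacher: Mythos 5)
Your proof is correct, but it takes a genuinely different route from the paper's. The paper proceeds by explicit computation of the closed-form solution: using the independence of $\zb(\xb)$ and $\wb(\xb)$ it factors the second moment as $\Sigmab_{\phi_T,\eta_\ell} = \Ib_{d_z}\otimes\Cb_T(\eta_\ell)$, inverts $\Cb_T(\eta_\ell)$ by block inversion, computes $\E_{\Dcal(\eta_\ell)}[\phi_T(\xb)\,y] = \betab_*\otimes[1;\eta_\ell\mub_T]$, and multiplies out to obtain $\betab_T^\infty = \betab_*\otimes\eb_1$. You instead argue by realizability plus uniqueness of the optimal \emph{predictor}: $\betab_*\otimes\eb_1$ attains the Bayes risk $\sigma_y^2$ pointwise, and the normal equations force any two population minimizers to induce the same predictor almost surely. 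Your route is more elementary and more robust — it needs neither the independence of $\zb(\xb)$ and $\xib(\xb)$, nor invertibility of $\Sigmab_{\phi_T,\eta_\ell}$ (so it also settles well-definedness of $f_T^\infty$ even if the parameter-level argmin were non-unique), nor any moment computation; and, as you note, it is insensitive to the Kronecker ordering convention. What the paper's computation buys, and your main line does not, is the explicit parameter $\betab_T^\infty=\betab_*\otimes\eb_1$ together with $\Cb_T(\eta_\ell)$ and $\Cb_T(\eta_\ell)^{-1}$, which are reused verbatim in the proof of \Cref{thm:sft_weak}; your parenthetical recovers exactly that computation, with the minor difference that your Schur complement $\sigma_\xi^2\Ib_{\dspt-1}+\eta_\ell(1-\eta_\ell)\mub_T\mub_T^\top$ uses the exact Bernoulli second moment where the paper's $\Cb_T(\eta_\ell)$ records $\eta_\ell^2\mub_T\mub_T^\top$ — a discrepancy that affects neither positive definiteness nor the conclusion, since in either case $\Cb_T(\eta_\ell)^{-1}$ applied to its own first column $[1;\eta_\ell\mub_T]$ yields $\eb_1$.
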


\begin{proof}[Proof of \Cref{lem:pop_weak}]
    Notice that \eqref{eq:pop_sft} admits a closed-form solution 
    \begin{align*}
        \betab_T^\infty = \Sigmab_{\phi_T, \eta_\ell}^{-1} \mub_{\phi_T, \eta_\ell}, \quad \Sigmab_{\phi_T, \eta_\ell} = \E_{\Dcal(\eta_\ell)}[\phi_T(\xb) \phi_T(\xb)^\top], \quad \mub_{\phi_T, \eta_\ell} = \E_{\Dcal(\eta_\ell)}[\phi_T(\xb) y].
    \end{align*}
    Since $\zb(\xb)$ and $\wb(\xb)$ are independent, we have
    \begin{align}\label{eq:pf_lem_pop_weak_phiT_covariance}
    \begin{split}
        \Sigmab_{\phi_T, \eta_\ell} = &\E_{\Dcal(\eta_\ell)}\sbr{\rbr{\zb(\xb) \otimes \wb(\xb)} \rbr{\zb(\xb) \otimes \wb(\xb)}^\top} \\
        = & \E_{\Dcal(\eta_\ell)}\sbr{\zb(\xb) \zb(\xb)^\top} \otimes \E_{\Dcal(\eta_\ell)}\sbr{\wb(\xb) \wb(\xb)^\top} \\
        = & \Ib_{d_z} \otimes \Cb_T(\eta_\ell),
    \end{split}
    \end{align}
    where 
    \begin{align}\label{eq:pf_lem_pop_weak_C_eta}
        \Cb_T(\eta_\ell) = \bmat{1 & \eta_\ell \mub_T^\top \\ \eta_\ell \mub_T & \sigma_\xi^2 \Ib_{\dspt-1} + \eta_\ell^2 \mub_T \mub_T^\top} = \diag\rbr{0, \sigma_\xi^2 \Ib_{\dspt-1}} + \bmat{1 \\ \eta_\ell \mub_T} \bmat{1 & \eta_\ell \mub_T},
    \end{align}
    whose inverse can be computed via block matrix inversion as
    \begin{align}\label{eq:pf_lem_pop_weak_C_eta_inv}
        \Cb_T(\eta_\ell)^{-1} = \bmat{1 + \sigma_\xi^{-2} \|\eta_\ell \mub_T\|_2^2 & - \sigma_\xi^{-2} \eta_\ell \mub_T^\top \\ - \sigma_\xi^{-2} \eta_\ell \mub_T & \sigma_\xi^{-2} \Ib_{\dspt-1}}.
    \end{align}
    Meanwhile, by the independence of $\zb(\xb)$ and $\wb(\xb)$, we have
    \begin{align*}
        \mub_{\phi_T, \eta_\ell} = &\E_{\Dcal(\eta_\ell)}\sbr{\zb(\xb) \otimes \wb(\xb) (\zb(\xb)^\top \betab_* + \epsilon)} \\
        = &\rbr{\E_{\Dcal(\eta_\ell)}\sbr{\zb(\xb) \zb(\xb)^\top} \betab_*} \otimes \E_{\Dcal(\eta_\ell)}\sbr{\wb(\xb)} \\
        = &\betab_* \otimes \bmat{1 \\ \eta_\ell \mub_T}.
    \end{align*}
    Therefore, the population-optimal linear predictor over $\phi_T$ is given by
    \begin{align*}
        \betab_T^\infty = &\Sigmab_{\phi_T, \eta_\ell}^{-1} \mub_{\phi_T, \eta_\ell} = \rbr{\Ib_{d_z} \otimes \Cb_T(\eta_\ell)}^{-1} \rbr{\betab_* \otimes \bmat{1 \\ \eta_\ell \mub_T}} = \betab_* \otimes \rbr{\Cb_T(\eta_\ell)^{-1} \bmat{1 \\ \eta_\ell \mub_T}} \\
        = &\betab_* \otimes \rbr{\bmat{1 + \sigma_\xi^{-2} \nbr{\eta_\ell \mub_T}_2^2 & - \sigma_\xi^{-2} \eta_\ell \mub_T^\top \\ - \sigma_\xi^{-2} \eta_\ell \mub_T & \sigma_\xi^{-2} \Ib_{\dspt-1}} \bmat{1 \\ \eta_\ell \mub_T}}
        = \betab_* \otimes \eb_1,
    \end{align*}
    where $\eb_1 \in \R^{\dspt}$ is the first canonical basis. 
\end{proof}
While the $f_T^\infty = f_*$ achieves the optimal population risk $\Rcal(f_T^\infty) = \Rcal(f_*) = \sigma_y^2$, the inefficient representation ($d_T = d_z \dspt \gg d_z$) and the entangled features of $\phi_T$ make the finite-sample generalization challenging, especially under spurious correlations, as we will show next.

\thmsft*
\begin{proof}[Proof of \Cref{thm:sft_weak}]
    For a small labeled set $\wt\Scal = \{(\wt\xb_i, \wt y_i) \mid i \in [n]\} \sim \Dcal(\eta_\ell)^n$, the SFT in \eqref{eq:sft} admits a closed-form solution
    \begin{align}\label{eq:pf_thm_sft_weak_betabT}
        \betab_T = (\wt\Phib_T^\top \wt\Phib_T)^{-1} \wt\Phib_T^\top \wt\yb,
    \end{align}
    where $\wt\Phib_T = [\phi_T(\wt\xb_1), \cdots, \phi_T(\wt\xb_n)]^\top \in \R^{n \times d_T}$ and $\wt\yb = [\wt y_1, \cdots, \wt y_n]^\top \in \R^n$.
    Since \Cref{lem:pop_weak} shows that the population-optimal linear predictor over $\phi_T$ is $f_T^\infty(\xb) = \phi_T(\xb)^\top \betab_T^\infty = f_*(\xb)$, we have $\wt\yb = \wt\Phib_T \betab_T^\infty + \wt\epsb$ where $\wt\epsb \sim \Ncal(\b0_n, \sigma_y^2 \Ib_n)$. 
    Therefore, we observe that
    \begin{align*}
        \betab_T - \betab_T^\infty = &(\wt\Phib_T^\top \wt\Phib_T)^{-1} \wt\Phib_T^\top \wt\epsb.
    \end{align*}
    Since the excess risk over the test distribution $\Dcal(\eta_t)$ is given by
    \begin{align*}
        \exrisk_{\eta_t}(f_T) = &\E_{\Dcal(\eta_t)}[(f_T(\xb) - f_*(\xb))^2] = \E_{\Dcal(\eta_t)}[(\phi_T(\xb)^\top (\betab_T - \betab_T^\infty))^2] \\
        = &\nbr{\betab_T - \betab_T^\infty}_{\Sigmab_{\phi_T, \eta_t}}^2,
    \end{align*}
    where $\Sigmab_{\phi_T, \eta_t} = \E_{\Dcal(\eta_t)}[\phi_T(\xb) \phi_T(\xb)^\top]$, let $\wt\Sigmab_n = \frac{1}{n} \wt\Phib_T^\top \wt\Phib_T \in \R^{d_T \times d_T}$ be the sample covariance matrix of $\phi_T(\wt\xb)$ over $\wt\xb \sim \Dcal_{\xb}(\eta_\ell)$, we have 
    \begin{align}\label{eq:pf_thm_sft_weak_excess_risk}
    \begin{split}
        \E_{\wt\Scal \sim \Dcal(\eta_\ell)^n}\sbr{\exrisk_{\eta_t}(f_T)} 
        = &\tr\rbr{\E_{\wt\Scal \sim \Dcal(\eta_\ell)^n}\sbr{\Sigmab_{\phi_T, \eta_t} \rbr{\betab_T - \betab_T^\infty} \rbr{\betab_T - \betab_T^\infty}^\top}} \\
        = &\sigma_y^2\ \tr\rbr{\Sigmab_{\phi_T, \eta_t} \E_{\wt\Scal \sim \Dcal(\eta_\ell)^n}\sbr{(\wt\Phib_T^\top \wt\Phib_T)^{-1}}} \\
        = &\frac{\sigma_y^2}{n}\ \tr\rbr{\Sigmab_{\phi_T, \eta_t} \wt\Sigmab_n^{-1}}.
    \end{split}
    \end{align}
    Recall $\phi_T(\xb) = \zb(\xb) \otimes \wb(\xb)$ and notice that for any $\eta \in [0,1]$,
    \begin{align*}
        \E_{\Dcal(\eta)}[\phi_T(\xb)] = &\E_{\Dcal(\eta)}[\zb(\xb)] \otimes \E_{\Dcal(\eta)}[\wb(\xb)] = \b0_{d_T},
    \end{align*}
    while the derivation of \eqref{eq:pf_lem_pop_weak_phiT_covariance} suggests that for any $\eta \in [0,1]$,
    \begin{align}\label{eq:pf_thm_sft_weak_phiT_covariance}
        \Sigmab_{\phi_T, \eta} = \E_{\Dcal(\eta)}[\phi_T(\xb) \phi_T(\xb)^\top] = \Ib_{d_z} \otimes \Cb_T(\eta).
    \end{align}
    However, we notice that $\phi_T(\xb)$ is not multivariate Gaussian due to the non-Gaussianity of products of independent Gaussian variables and the dependence of entries in $\zb(\xb) \otimes \wb(\xb)$. Therefore, $\wt\Sigmab_n$ cannot be directly computed using inverse Wishart.
    Instead, we leverage the concentration of $\wt\Sigmab_n$ in the proportional asymptotic limit (\Cref{asm:high_dim_asymp_regime}).
    In particular, \Cref{lem:cov_conc} and \eqref{eq:pf_thm_sft_weak_phiT_covariance} implies that as $d_z, n \to \infty$ with $d_z/n \to \gamma_z \in (0,\dspt^{-1})$, 
    \begin{align}\label{eq:pf_thm_sft_weak_cov_deterministic_equiv}
        \frac{\sigma_y^2}{n}\ \tr\rbr{\Sigmab_{\phi_T, \eta_t} \wt\Sigmab_n^{-1}} \overset{\PP}{\to} \sigma_y^2\ \gamma_z\ \tr\rbr{\Cb_T(\eta_t) \Cb_T(\eta_\ell)^{-1}}.
    \end{align}
    Leveraging the derivation of \eqref{eq:pf_lem_pop_weak_C_eta} and \eqref{eq:pf_lem_pop_weak_C_eta_inv}, we observe that 
    \begin{align}\label{eq:pf_thm_sft_weak_cov_trace}
    \begin{split}
        \tr\rbr{\Cb_T(\eta_t) \Cb_T(\eta_\ell)^{-1}}
        = &\dspt + \sigma_\xi^{-2} \eta_\ell^2 \|\mub_T\|_2^2 - 2 \sigma_\xi^{-2} \eta_t \eta_\ell \|\mub_T\|_2^2 + \sigma_\xi^{-2} \eta_t^2 \|\mub_T\|_2^2 \\
        = &\dspt + \sigma_\xi^{-2} (\eta_t - \eta_\ell)^2 \|\mub_T\|_2^2 = \dspt + (\eta_t - \eta_\ell)^2 \frac{\nbr{\mub_T}_2^2}{\sigma_\xi^2},
    \end{split}
    \end{align}
    and therefore, \eqref{eq:pf_thm_sft_weak_cov_deterministic_equiv} becomes
    \begin{align*}
        \frac{\sigma_y^2}{n}\ \tr\rbr{\Sigmab_{\phi_T, \eta_t} \wt\Sigmab_n^{-1}} \overset{\PP}{\to} &\sigma_y^2\ \frac{d_z}{n} \rbr{\dspt + (\eta_t - \eta_\ell)^2 \frac{\nbr{\mub_T}_2^2}{\sigma_\xi^2}} \\
        = &\sigma_y^2\ \gamma_z \rbr{\dspt + (\eta_t - \eta_\ell)^2 \frac{\nbr{\mub_T}_2^2}{\sigma_\xi^2}}.
    \end{align*}
    Plugging the above into \eqref{eq:pf_thm_sft_weak_excess_risk} completes the proof.
\end{proof}

\begin{lemma}\label{lem:cov_conc}
    For fixed $\dspt \ge 2$, let $\Cb \in \R^{\dspt \times \dspt}$ be any fixed symmetric matrix with $\nbr{\Cb}_2 < \infty$. Recall $\wt\Sigmab_n = \frac{1}{n} \sum_{i=1}^n \phi_T(\wt\xb_i) \phi_T(\wt\xb_i)^\top$ where $\wt\xb_i \sim \Dcal_{\xb}(\eta_\ell)~\iid$ for all $i \in [n]$. As $d_z, n \to \infty$ with $d_z/n \to \gamma_z \in (0,\dspt^{-1})$,
    \begin{align*}
        \frac{1}{n} \tr\rbr{(\Ib_{d_z} \otimes \Cb) \wt\Sigmab_n^{-1}} \overset{\PP}{\to} \gamma_z \tr\rbr{\Cb \Cb_T(\eta_\ell)^{-1}}.
    \end{align*}
\end{lemma}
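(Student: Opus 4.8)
The plan is to prove the statement by the resolvent (deterministic-equivalent) method, exploiting the two-scale structure of $\phi_T(\wt\xb_i) = \zb_i \otimes \wb_i$, where I write $\zb_i := \zb(\wt\xb_i) \sim \Ncal(\mathbf{0}_{d_z},\Ib_{d_z})$ and $\wb_i := \wb(\wt\xb_i) \in \R^{\dspt}$ (independent of $\zb_i$): the core part $\zb_i$ is high-dimensional and isotropic, whereas the group part $\wb_i$ is low-dimensional since $\dspt$ is fixed. Put $\pb_i := \phi_T(\wt\xb_i)$, so $\wt\Sigmab_n = \tfrac1n\sum_{i=1}^n \pb_i\pb_i^\top$, and recall from \eqref{eq:pf_thm_sft_weak_phiT_covariance} that $\E[\pb_i\pb_i^\top] = \Ib_{d_z}\otimes\Cb_T(\eta_\ell)$ with $\Cb_T(\eta_\ell) = \E_{\xb\sim\Dcal(\eta_\ell)}[\wb(\xb)\wb(\xb)^\top]$. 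For a ridge level $\tau > 0$ set $\Rb_\tau := (\wt\Sigmab_n + \tau\Ib_{d_T})^{-1}$; the target is $\tfrac1n\tr((\Ib_{d_z}\otimes\Cb)\Rb_0)$, which I would reach by analyzing $\Rb_\tau$ and sending $\tau\downarrow 0$. Since replacing one sample $\wt\xb_i$ by an independent copy perturbs $\wt\Sigmab_n$ by a rank-$\le 2$ matrix and hence (by the resolvent identity and $\nbr{\Rb_\tau}_2 \le \tau^{-1}$) changes $\tfrac1n\tr((\Ib_{d_z}\otimes\Cb)\Rb_\tau)$ by $O(1/(\tau^2 n))$ up to polylog factors, a bounded-difference argument (after a mild truncation of $\nbr{\pb_i}_2$, which is $O(\sqrt{d_z}\,\mathrm{polylog})$ w.h.p.) reduces the problem to computing $\lim\E[\tfrac1n\tr((\Ib_{d_z}\otimes\Cb)\Rb_\tau)]$.

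To compute this limit I would run a leave-one-out / cavity argument. Let $\Rb_\tau^{(i)} := (\wt\Sigmab_n - \tfrac1n\pb_i\pb_i^\top + \tau\Ib_{d_T})^{-1}$, independent of $(\zb_i,\wb_i)$; Sherman--Morrison gives $\Rb_\tau\pb_i = \Rb_\tau^{(i)}\pb_i/(1 + \tfrac1n\pb_i^\top\Rb_\tau^{(i)}\pb_i)$, and combining with $\tfrac1n\sum_i\Rb_\tau\pb_i\pb_i^\top = \Ib - \tau\Rb_\tau$ yields a self-consistent description of the deterministic equivalent $\ol{\Rb}_\tau$ of $\Rb_\tau$. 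Because $\zb_i$ is isotropic and the test direction $\Ib_{d_z}\otimes\bullet$ respects the Kronecker block structure, $\ol{\Rb}_\tau$ must be of the form $\Ib_{d_z}\otimes\ol{\Mb}_\tau$ for a deterministic PSD matrix $\ol{\Mb}_\tau\in\R^{\dspt\times\dspt}$. Using $\nbr{\zb_i}_2^2/d_z \overset{\PP}{\to} 1$ together with a Hanson--Wright estimate applied block-wise (conditionally on $\Rb_\tau^{(i)}\approx\ol{\Rb}_\tau$), the $\dspt\times\dspt$ matrix $\tfrac1{d_z}\sum_{a,a'}[\zb_i]_a[\zb_i]_{a'}[\Rb_\tau^{(i)}]_{a,a'}$ (sum over the $(a,a')$ blocks of the natural $d_z\times d_z$ block partition) concentrates on $\ol{\Mb}_\tau$, so that $\tfrac1n\pb_i^\top\Rb_\tau^{(i)}\pb_i \approx \gamma_z\,\wb_i^\top\ol{\Mb}_\tau\wb_i$ --- a quantity that, crucially, is \emph{not} asymptotically deterministic but inherits the fluctuations of the low-dimensional $\wb_i$. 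Substituting back and averaging over $(\zb_i,\wb_i)\sim\Dcal(\eta_\ell)$ gives the matrix fixed point $\ol{\Mb}_\tau^{-1} = \tau\Ib_{\dspt} + \E_{\xb\sim\Dcal(\eta_\ell)}\big[\wb(\xb)\wb(\xb)^\top/(1 + \gamma_z\,\wb(\xb)^\top\ol{\Mb}_\tau\wb(\xb))\big]$, which has a unique PSD solution, and hence $\E[\tfrac1n\tr((\Ib_{d_z}\otimes\Cb)\Rb_\tau)] \to \gamma_z\tr(\Cb\,\ol{\Mb}_\tau)$.

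It remains to pass to $\tau\downarrow 0$. Since $d_z/n\to\gamma_z\in(0,\dspt^{-1})$, the aspect ratio $d_T/n\to\gamma_z\dspt$ stays bounded away from $1$; after whitening $\wb_i\mapsto\Cb_T(\eta_\ell)^{-1/2}\wb_i$, a block Schur-complement / smallest-singular-value bound for the resulting sub-exponential Khatri--Rao-type design matrix $\wt\Phib_T$ shows $\lambda_{\min}(\wt\Sigmab_n)\ge c > 0$ with high probability (alternatively the cavity argument can be carried out at $\tau = 0$ directly). Consequently $\Rb_\tau\to\wt\Sigmab_n^{-1}$ and $\ol{\Mb}_\tau\to\ol{\Mb}_0$ as $\tau\downarrow 0$, with $\ol{\Mb}_0$ solving $\ol{\Mb}_0^{-1} = \E_{\xb}[\wb(\xb)\wb(\xb)^\top/(1 + \gamma_z\,\wb(\xb)^\top\ol{\Mb}_0\wb(\xb))]$; identifying this limit then yields $\tfrac1n\tr((\Ib_{d_z}\otimes\Cb)\wt\Sigmab_n^{-1}) \overset{\PP}{\to} \gamma_z\tr(\Cb\,\Cb_T(\eta_\ell)^{-1})$, as claimed.

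The step I expect to be the main obstacle is precisely the propagation of the low-dimensional group feature $\wb(\xb)$ through the resolvent. Unlike in the classical Marchenko--Pastur setting, the quadratic forms $\tfrac1n\pb_i^\top\Rb_\tau^{(i)}\pb_i$ do not concentrate to a scalar, so the deterministic equivalent is governed by a genuine $\dspt\times\dspt$ matrix fixed-point equation; one must therefore control the high-dimensional fluctuations (of $\zb_i$, via Hanson--Wright and rank-one resolvent perturbation bounds) and the low-dimensional ones (of $\wb_i$, via the law of large numbers for $\dspt\times\dspt$ averages) simultaneously and uniformly over the relevant range of $\tau$, show that the fixed point indeed collapses to the claimed form in the limit, and justify interchanging $\tau\downarrow 0$ with the proportional asymptotic limit.
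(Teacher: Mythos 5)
Your route is genuinely different from the paper's. The paper never passes through a self-consistent equation: it partitions $\wt\Sigmab_n$ into $d_z \times d_z$ blocks of size $\dspt \times \dspt$, shows via truncation and matrix Bernstein that the off-diagonal blocks and the diagonal-block fluctuations are $O_{\PP}(\sqrt{\log(n)/n})$, invokes the resolvent identity around the block-diagonal matrix $\Db_n = \diag(s_1 \Cb_T(\eta_\ell), \dots, s_{d_z}\Cb_T(\eta_\ell))$ to argue that the diagonal blocks of $\wt\Sigmab_n^{-1}$ concentrate on $(s_k \Cb_T(\eta_\ell))^{-1}$, and then uses the fact that the trace against $\Ib_{d_z}\otimes\Cb$ only sees those diagonal blocks. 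Your proposal instead treats the design as one with per-sample conditional covariance $\Ib_{d_z}\otimes \wb_i\wb_i^\top$ and derives a generalized Marchenko--Pastur deterministic equivalent via leave-one-out, which is a standard and in principle more systematic machinery.

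The genuine gap is your final identification step. Carried through, your own argument yields the limit $\gamma_z \tr\rbr{\Cb\, \ol{\Mb}_0}$ with $\ol{\Mb}_0$ solving $\ol{\Mb}_0^{-1} = \E\sbr{\wb\wb^\top / (1 + \gamma_z \wb^\top \ol{\Mb}_0 \wb)}$, and you then assert without argument that this "yields $\gamma_z\tr(\Cb\,\Cb_T(\eta_\ell)^{-1})$." But $\Cb_T(\eta_\ell)^{-1}$ does not solve your fixed-point equation for any $\gamma_z > 0$: substituting it gives $\E\sbr{\wb\wb^\top/(1+\gamma_z \wb^\top \Cb_T(\eta_\ell)^{-1}\wb)} \ne \E[\wb\wb^\top] = \Cb_T(\eta_\ell)$, since the denominator strictly exceeds $1$. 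As a sanity check of what your equation actually predicts: if $\wb$ were supported on two orthogonal directions (say $\wb = (1,\pm 1)$ with $\dspt = 2$), the matrix decouples into two independent Wisharts and the exact limit is $\frac{\gamma_z}{1-\dspt\gamma_z}\tr(\Cb\,\Cb_T^{-1})$, which is precisely what your fixed point returns --- an extra Marchenko--Pastur-type inflation factor relative to the target expression. So the asserted identification is not merely unproved; it is inconsistent with the equation you wrote down, and your proof as written terminates at $\gamma_z\tr(\Cb\,\ol{\Mb}_0)$ rather than at the statement. To land on the lemma you would have to show that the off-diagonal feedback your self-consistent equation encodes is negligible under the paper's specific structure (this is exactly what the paper's block-diagonal concentration argument claims, and it is in visible tension with your fixed-point prediction unless $\dspt\gamma_z$ is small); reconciling these two --- the step you yourself flag as the "main obstacle" --- is left entirely unresolved in the proposal.
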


\begin{proof}[Proof of \Cref{lem:cov_conc}]
    We observe that $\phi_T(\xb) \phi_T(\xb)^\top = (\zb(\xb) \zb(\xb)^\top) \otimes (\wb(\xb) \wb(\xb)^\top)$, and the sample covariance matrix $\wt\Sigmab_n$ can be partitioned into $d_z \times d_z$ blocks of size $\dspt \times \dspt$: 
    \begin{align*}
        \wt\Sigmab_n = \sbr{\wt\Sigmab_n^{(k,l)}}_{k,l \in [d_z]} \quad \text{where} \quad \wt\Sigmab_n^{(k,l)} = \frac{1}{n} \sum_{i=1}^n z_k(\wt\xb_i) z_l(\wt\xb_i) \cdot \wb(\wt\xb_i) \wb(\wt\xb_i)^\top \in \R^{\dspt \times \dspt},
    \end{align*}
    where for any $\xb \in \Xcal$, $z_k(\xb)$ is the $k$-th entry of $\zb(\xb)$. Notice that $\E[\wt\Sigmab_n^{(k,l)}] = \delta_{k,l} \Cb_T(\eta_\ell)$ where $\delta_{k,l}$ is the Kronecker delta since $\zb(\xb)$ and $\wb(\xb)$ are independent, and $\E[z_k(\xb) z_l(\xb)] = \delta_{k,l}$ given $\zb(\xb) \sim \Ncal(\b0_{d_z}, \Ib_{d_z})$.

    \paragraph{Off-diagonal blocks are negligible.} For any $k, l \in [d_z]$ with $k \neq l$, we define $\dspt \times \dspt$ self-adjoint matrices,
    \begin{align*}
        \Yb_i^{(k,l)} := \frac{1}{n} z_k(\wt\xb_i) z_l(\wt\xb_i) \rbr{\wb(\wt\xb_i) \wb(\wt\xb_i)^\top - \Cb_T(\eta_\ell)} \quad \text{and} \quad \Rb_i^{(k,l)} := \frac{1}{n} z_k(\wt\xb_i) z_l(\wt\xb_i) \Cb_T(\eta_\ell),
    \end{align*}
    where recall from the derivation of \eqref{eq:pf_lem_pop_weak_phiT_covariance} that $\Cb_T(\eta_\ell) = \E_{\Dcal(\eta_\ell)}[\wb(\xb) \wb(\xb)^\top]$.
    Since $\E[z_k(\xb) z_l(\xb)] = \delta_{k,l}$, we have $\E[\Yb_i^{(k,l)}] = \E[\Rb_i^{(k,l)}] = \b0_{\dspt \times \dspt}$ for $k \neq l$, and 
    \begin{align*}
        \wt\Sigmab_n^{(k,l)} = \sum_{i=1}^n \Yb_i^{(k,l)} + \Rb_i^{(k,l)}, \quad \E[\wt\Sigmab_n^{(k,l)}] = \b0_{\dspt \times \dspt}.
    \end{align*}
    Let $L_n := 4\sqrt{\log(n)}$ and consider the event
    \begin{align*}
        E_n := \cbr{\max_{i \in [n]}\ \nbr{\zb(\wt\xb_i)}_\infty^2 \le L_n^2} ~\wedge~ \cbr{\max_{i \in [n]}\ \nbr{\wb(\wt\xb_i)}_2^2 \le L_n^2}.
    \end{align*}
    First, for $\zb(\wt\xb_i) \sim \Ncal(\b0_{d_z}, \Ib_{d_z})$, the union bound and Gaussian tail bound imply that 
    \begin{align*}
        \Pr\sbr{\max_{i \in [n]} \nbr{\zb(\wt\xb_i)}_\infty > \frac{L_n}{\sqrt{2}}} 
        = &\Pr\sbr{\max_{i \in [n],~k \in [d_z]} |z_k(\wt\xb_i)| > \frac{L_n}{\sqrt{2}}} \\
        \le &2 n d_z \exp\rbr{-\frac{L_n^2}{4}} = \frac{2 d_z}{n} \cdot n^{-2} = o(n^{-1}).
    \end{align*}
    Meanwhile, we observe that for $\gb_i \sim \Ncal(\b0_{\dspt-1}, \Ib_{\dspt-1})$,
    \begin{align*}
        \nbr{\wb(\wt\xb_i)}_2^2 = 1 + \nbr{\loc{\wb(\wt\xb_i)}{2:\dspt}}_2^2 \le 2 \eta_\ell^2 \|\mub_T\|_2^2 + 2 \sigma_\xi^2 \nbr{\gb_i}_2^2,
    \end{align*}
    while the Laurent-Massart $\chi^2$ tail bound~\citep{laurent2000adaptive} implies that 
    \begin{align*}
        \Pr\sbr{\nbr{\gb_i}_2^2 > \dspt - 1 + 2\sqrt{(\dspt - 1) t} + 2t} \le e^{-t}, \quad \forall t > 0.
    \end{align*}
    Then, for fixed and finite $\dspt$ and $\nbr{\mub_T}_2$, with a sufficiently large $n$, there exists a constant $a_n > 1/8$ such that applying the union bound with $t = a_n L_n^2$ yields that
    \begin{align*}
        \Pr\sbr{\max_{i \in [n]} \nbr{\wb(\wt\xb_i)}_2^2 > L_n^2} 
        \le n \exp\rbr{-a_n L_n^2} = o(n^{-1}).
    \end{align*}
    Applying the union bound again, we get $\Pr[E_n^c] = o(n^{-1})$ for sufficiently large $n$.
    Meanwhile, conditioned on $E_n$, we have for any $i \in [n]$,
    \begin{align*}
        \nbr{\Yb_i^{(k,l)}}_2 \le &\frac{1}{n} \nbr{\zb(\wt\xb_i)}_\infty^2 \nbr{\wb(\wt\xb_i)}_2^2 \le \frac{L_n^4}{n}, \quad
        \nbr{\Rb_i^{(k,l)}}_2 \le \frac{1}{n} \nbr{\zb(\wt\xb_i)}_\infty^2 \nbr{\Cb_T(\eta_\ell)}_2 \lesssim \frac{L_n^2}{n},
    \end{align*}
    which implies that 
    \begin{align*}
        \nbr{\sum_{i=1}^{n} \E\sbr{(\Yb_i^{(k,l)})^2 \mid E_n}}_2 \le &\sum_{i=1}^{n} \E\sbr{\nbr{\Yb_i^{(k,l)}}_2^2 \mid E_n} \le n \cdot \frac{L_n^8}{n^2} = \frac{L_n^8}{n}, \\
        \nbr{\sum_{i=1}^{n} \E\sbr{(\Rb_i^{(k,l)})^2 \mid E_n}}_2 \le &\sum_{i=1}^{n} \E\sbr{\nbr{\Rb_i^{(k,l)}}_2^2 \mid E_n} \lesssim n \cdot \frac{L_n^4}{n^2} = \frac{L_n^4}{n}.
    \end{align*}
    Then, applying the matrix Bernstein inequality~\citep[Theorem 1.6]{tropp2012user} to $\sum_{i=1}^n \Yb_i^{(k,l)}$ and $\sum_{i=1}^n \Rb_i^{(k,l)}$ and a union bound over all $k, l \in [d_z]$ off-diagonal blocks with $k \neq l$ yields that
    \begin{align*}
        \Pr\sbr{\max_{k \neq l} \nbr{\wt\Sigmab_n^{(k,l)}}_2 \gtrsim \sqrt{\frac{\log(n)}{n}}} 
        \le &\Pr\sbr{E_n^c} + \Pr\ssepp{\max_{k \neq l} \nbr{\wt\Sigmab_n^{(k,l)}}_2 \gtrsim \sqrt{\frac{\log(n)}{n}}}{E_n} \\
        \le &o(n^{-1}) + 2 \dspt n^{-\Omega\rbr{\frac{n^2}{\log^4(n)}}} \cdot d_z^2
        = o(1),
    \end{align*}
    and therefore, the off-diagonal blocks are negligible:
    \begin{align*}
        \max_{k \neq l} \nbr{\wt\Sigmab_n^{(k,l)}}_2 = O_{\PP}\rbr{\sqrt{\frac{\log(n)}{n}}}.
    \end{align*}

    \paragraph{Diagonal blocks are concentrated.}
    Consider the $k$-th diagonal block $\wt\Sigmab_n^{(k,k)}$ for any fixed $k \in [d_z]$:
    \begin{align*}
        \wt\Sigmab_n^{(k,k)} = &\frac{1}{n} \sum_{i=1}^n z_k(\wt\xb_i)^2 \cdot \wb(\wt\xb_i) \wb(\wt\xb_i)^\top \\
        = &\frac{1}{n} \sum_{i=1}^n \wb(\wt\xb_i) \wb(\wt\xb_i)^\top + \frac{1}{n} \sum_{i=1}^n (z_k(\wt\xb_i)^2 - 1) \cdot \wb(\wt\xb_i) \wb(\wt\xb_i)^\top,
    \end{align*}
    where we denote $\wh\Cb_{T,n} = \frac{1}{n} \sum_{i=1}^n \wb(\wt\xb_i) \wb(\wt\xb_i)^\top$.
    Let
    \begin{align}\label{eq:pf_lem_cov_conc_sk}
        s_k := \frac{1}{n} \sum_{i=1}^n z_k(\wt\xb_i)^2.
    \end{align} 
    Then, 
    \begin{align*}
        \wt\Sigmab_n^{(k,k)} - s_k \Cb_T(\eta_\ell) = &(\wh\Cb_{T,n} - \Cb_T(\eta_\ell)) + \frac{1}{n} \sum_{i=1}^n (z_k(\wt\xb_i)^2 - 1) \cdot (\wb(\wt\xb_i) \wb(\wt\xb_i)^\top - \Cb_T(\eta_\ell)),
    \end{align*}
    where both terms are sums of independent random matrices with zero mean.
    Leveraging the same argument as for the off-diagonal blocks using the same event $E_n$, the matrix Bernstein inequality~\citep[Theorem 1.6]{tropp2012user}, and a union bound over all $k \in [d_z]$, we have for sufficiently large $n$,
    \begin{align}\label{eq:pf_lem_cov_conc_Sb_nk_conc}
        \max_{k \in [d_z]} \nbr{\wt\Sigmab_n^{(k,k)} - s_k \Cb_T(\eta_\ell)}_2 = O_{\PP}\rbr{\sqrt{\frac{\log(n)}{n}}}.
    \end{align}
    Also, the $\chi^2$ concentration~\citep{laurent2000adaptive} implies that for any fixed $\epsilon \in (0,1/2)$, as $d_z, n \to \infty$ with $d_z/n \to \gamma_z \in (0,\dspt^{-1})$,
    \begin{align}\label{eq:pf_lem_cov_conc_sk_conc}
    \begin{split}
        &\Pr\sbr{\min_{k \in [d_z]} s_k < 1 - \epsilon} \le d_z \Pr\sbr{s_k < 1 - \epsilon} \le d_z \exp\rbr{-\Theta(n \epsilon^2)} = o(1), \\
        &\Pr\sbr{\max_{k \in [d_z]} s_k > 1 + \epsilon} \le d_z \Pr\sbr{s_k > 1 + \epsilon} \le d_z \exp\rbr{-\Theta(n \epsilon^2)} = o(1),
    \end{split}
    \end{align}
    so that all $s_k$'s are close to $1$ with high probability.

    \paragraph{Concentration of $\wt\Sigmab_n^{-1}$.}
    Let $\Db_n = \diag\rbr{s_1 \Cb_T(\eta_\ell), \cdots, s_{d_z} \Cb_T(\eta_\ell)}$ be the block-diagonal matrix with $k$-th diagonal block $s_k \Cb_T(\eta_\ell)$ for all $k \in [d_z]$; and $\Eb_n = \wt\Sigmab_n - \Db_n$ be the fluctuations around $\Db_n$.
    Since $\Cb_T(\eta_\ell)$ is positive definite, \eqref{eq:pf_lem_cov_conc_sk_conc} implies that $\nbr{\Db_n^{-1}}_2 < \infty$ with high probability for sufficiently large $n$.
    Then, the resolvent identity implies that
    \begin{align}\label{eq:pf_lem_cov_conc_Sb_n_inv_neumann}
        \wt\Sigmab_n^{-1} = (\Db_n + \Eb_n)^{-1} 
        = \Db_n^{-1} - \Db_n^{-1} \Eb_n (\Db_n + \Eb_n)^{-1}.
    \end{align}
    In particular, the block matrix inversion formula implies that for any $k \in [d_z]$, the $k$-th diagonal block of $\wt\Sigmab_n^{-1}$, denoted as $(\wt\Sigmab_n^{-1})^{(k,k)} \in \R^{\dspt \times \dspt}$ is concentrated around the $k$-th diagonal block of $\Db_n^{-1}$, $(s_k \Cb_T(\eta_\ell))^{-1}$:
    \begin{align}\label{eq:pf_lem_cov_conc_Sb_nk_inv_conc}
        \nbr{(\wt\Sigmab_n^{-1})^{(k,k)} - (s_k \Cb_T(\eta_\ell))^{-1}}_2 \lesssim \nbr{\Eb_n}_2 = O_{\PP}\rbr{\sqrt{\frac{\log(n)}{n}}},
    \end{align}

    \paragraph{Concentration of the trace.}
    Finally, notice that the trace of interest, $\frac{1}{n} \tr\rbr{(\Ib_{d_z} \otimes \Cb) \wt\Sigmab_n^{-1}}$, depends only on the diagonal blocks of $\wt\Sigmab_n^{-1}$. Then, \eqref{eq:pf_lem_cov_conc_Sb_nk_inv_conc} implies that 
    \begin{align*}
        \frac{1}{n} \tr\rbr{(\Ib_{d_z} \otimes \Cb) \wt\Sigmab_n^{-1}} 
        = &\frac{1}{n} \sum_{k=1}^{d_z} \tr\rbr{\Cb (\wt\Sigmab_n^{-1})^{(k,k)}} \\
        = &\rbr{\frac{1}{n} \sum_{k=1}^{d_z} \frac{1}{s_k}} \tr\rbr{\Cb \Cb_T(\eta_\ell)^{-1}} + O_{\PP}\rbr{\sqrt{\frac{\log(n)}{n}}} \\
        = &\rbr{\frac{1}{d_z} \sum_{k=1}^{d_z} \frac{1}{s_k}} \cdot \frac{d_z}{n} \tr\rbr{\Cb \Cb_T(\eta_\ell)^{-1}} + o_{\PP}(1).
    \end{align*}
    Since $\cbr{n s_k}_{k=1}^{d_z}$ are independent and $\chi^2_n$ distributed, for any fixed $n > 2$, $\E[s_k^{-1}] = \frac{n}{n-2}$. 
    Then, the weak law of large numbers implies that as $d_z, n \to \infty$,
    \begin{align*}
        \frac{1}{d_z} \sum_{k=1}^{d_z} \frac{1}{s_k}\ \overset{\PP}{\to}\ \frac{n}{n-2}\ \underset{n \to \infty}{\to}\ 1.
    \end{align*}
    Putting everything together with $d_z/n \to \gamma_z \in (0,\dspt^{-1})$ completes the proof.
\end{proof}

\subsection{W2S Fine-tuning of Strong Student}\label{apx:pf_w2s_strong}
\begin{theorem}[W2S fine-tuning of strong student (formal restatement of \Cref{thm:w2s_strong_ridgeless})]\label{thm:w2s_strong_ridgeless_formal}
    Under \Cref{asm:high_dim_asymp_regime}, as $d_z, n, N \to \infty$ with $d_z/n \to \gamma_z \in (0, \dspt^{-1})$ and $d_z/N \to \nu_z \in (0, \dsps^{-1})$, $f_S(\xb) = \varphi_S(\xb)^\top \thetab_S = \phi_S(\xb)^\top \betab_S$ from \eqref{eq:w2s} satisfies
    \begin{align*}
        \E_{\Scal_x \sim \Dcal(\eta_u)^N, \wt\Scal \sim \Dcal(\eta_\ell)^n}\sbr{\exrisk_{\eta_t}(f_S)} ~\overset{\PP}{{\to}}~ 
        \sigma_y^2 \gamma_z \Big(
        &\tikz[baseline=(A.base)]{
            \node[fill=red!10, draw, rounded corners, inner sep=2pt] (A)
            {$\displaystyle \dspts$};
            \node[below=8pt, anchor=north] {\footnotesize\red{$\Vcal_S^{(0)} \le \Vcal_T^{(0)}$}};
        } 
        ~+ \\ 
        &\tikz[baseline=(B.base)]{
            \node[fill=blue!10, draw, rounded corners, inner sep=2pt] (B)
            {$\displaystyle \frac{\nbr{(\eta_u - \eta_\ell) \mub_T + (\eta_t - \eta_u) \Xib \mub_S}_2^2}{\sigma_\xi^2}$}; 
            \node[below=15pt, anchor=north] {\footnotesize\blue{$\Vcal_S^{(1)} \le \Vcal_T^{(1)}$ when $\eta_u = \eta_\ell$}};
        } 
        ~~+ \\ 
        &\tikz[baseline=(A.base)]{
            \node[fill=orange!10, draw, rounded corners, inner sep=2pt] (A)
            {$\displaystyle \nu_z (\dspt - \dspts)\rbr{\dsps + (\eta_t - \eta_u)^2 \frac{\nbr{\mub_S}_2^2}{\sigma_\xi^2}}$};
            \node[below=18pt, anchor=north] {\footnotesize\orange{$\Ecal_S = \Theta(\nu_z) \ll 1$ negligible when $\nu_z \ll 1$}};
        }~\Big),
    \end{align*}
    where $\dspts = 1 + \nbr{\Xib}_F^2 \in [1, \dsps]$ quantifies the effective dimension of group features learned by the strong student from the weak teacher. 
\end{theorem}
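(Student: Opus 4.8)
Since $N > d_S$ under \Cref{asm:high_dim_asymp_regime}, the ridgeless student \eqref{eq:w2s_equiv} is the ordinary least-squares fit $\betab_S = (\Phib_S^\top\Phib_S)^{-1}\Phib_S^\top\fb_T$, with $\Phib_S = [\phi_S(\xb_1),\dots,\phi_S(\xb_N)]^\top$ the student features on the unlabeled inputs and $\fb_T = [f_T(\xb_1),\dots,f_T(\xb_N)]^\top$ the teacher pseudolabels. By \Cref{lem:pop_weak} and the derivation of \Cref{thm:sft_weak}, the teacher decomposes as $f_T(\xb) = f_*(\xb) + \phi_T(\xb)^\top\deltab_T$ with $\deltab_T = (\wt\Phib_T^\top\wt\Phib_T)^{-1}\wt\Phib_T^\top\wt\epsb$. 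The first observation is that the clean signal is interpolated \emph{exactly}: because the first coordinate of $\psib(\xb)$ is $1$, $[f_*(\xb_i)]_{i\in[N]} = \Phib_S(\betab_*\otimes\eb_1)$, so the student's least-squares estimate of the clean part is exactly $\betab_*\otimes\eb_1$ and only the teacher's estimation error survives, $f_S(\xb) - f_*(\xb) = \phi_S(\xb)^\top(\Phib_S^\top\Phib_S)^{-1}\Phib_S^\top\Phib_T'\deltab_T$ where $\Phib_T' = [\phi_T(\xb_1),\dots,\phi_T(\xb_N)]^\top$. Using $\Sigmab_{\phi_S,\eta_t} = \Ib_{d_z}\otimes\Cb_S(\eta_t)$ (the analog of \eqref{eq:pf_thm_sft_weak_phiT_covariance} with $\mub_S,\dsps$ in place of $\mub_T,\dspt$) and the label-noise expectation $\E_{\wt\epsb}[\deltab_T\deltab_T^\top] = \frac{\sigma_y^2}{n}\wt\Sigmab_n^{-1}$ yields
\[
    \E_{\wt\epsb}\!\sbr{\exrisk_{\eta_t}(f_S)} = \frac{\sigma_y^2}{n}\tr\!\rbr{\Sigmab_{\phi_S,\eta_t}\,\wh\Sigmab_{S,N}^{-1}\wh\Mb_N\,\wt\Sigmab_n^{-1}\,\wh\Mb_N^\top\wh\Sigmab_{S,N}^{-1}},
\]
where $\wh\Sigmab_{S,N} = \tfrac1N\Phib_S^\top\Phib_S$, $\wh\Mb_N = \tfrac1N\Phib_S^\top\Phib_T'$, $\wt\Sigmab_n = \tfrac1n\wt\Phib_T^\top\wt\Phib_T$; the labeled data (controlling $\wt\Sigmab_n$) and unlabeled data (controlling $\wh\Sigmab_{S,N},\wh\Mb_N$) are independent.

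The remaining and principal task is to evaluate this sandwiched trace in the proportional limit. I would partition every matrix into a $d_z\times d_z$ grid of constant-size blocks indexed by ``core direction''. For the labeled data, the block-concentration argument of \Cref{lem:cov_conc} gives that $\wt\Sigmab_n$ is block-diagonal up to $O_{\PP}(\sqrt{\log n/n})$ with $k$-th diagonal block $\approx s_k^{(T)}\Cb_T(\eta_\ell)$ and $\tfrac1{d_z}\sum_k(s_k^{(T)})^{-1}\to1$. For the unlabeled data, the key population identities are $\E_{\Dcal(\eta_u)}[\phi_S\phi_S^\top] = \Ib_{d_z}\otimes\Cb_S(\eta_u)$ and $\E_{\Dcal(\eta_u)}[\phi_S\phi_T^\top] = \Ib_{d_z}\otimes\Cb_{ST}(\eta_u)$ with $\Cb_{ST}(\eta) = \bmat{1 & \eta\mub_T^\top \\ \eta\mub_S & \sigma_\xi^2\Xib^\top + \eta\mub_S\mub_T^\top}$, using $\mub_T = \Tb^\top\mub_\xi$, $\mub_S = \Sb^\top\mub_\xi$, $\Xib = \Tb^\top\Sb$. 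Writing $\wh\Mb_N$ and $\wh\Sigmab_{S,N}$ as block-diagonal plus block-off-diagonal parts and substituting, I expect that the block-diagonal parts produce the leading term, the cross terms vanish to leading order (block-diagonal and block-off-diagonal matrices are trace-orthogonal against $\Sigmab_{\phi_S,\eta_t}$), and the terms quadratic in the off-diagonal blocks of $\wh\Mb_N$ — together with the within-block fluctuations of the diagonal blocks of $\wh\Mb_N,\wh\Sigmab_{S,N}$ entering $\wh\Sigmab_{S,N}^{-1}$ through Schur complements — produce a correction of constant order $\Theta(\nu_z)$: each off-diagonal block of $\wh\Mb_N$ is $O_{\PP}(1/\sqrt N)$, but there are $\Theta(d_z)$ of them per block-row, so in the trace their contributions aggregate to $\Theta(d_z/N) = \Theta(\nu_z)$.

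Assembling the pieces, the trace in the display converges to $\sigma_y^2\gamma_z\rbr{\tr(\Cb_S(\eta_t)\Cb_S(\eta_u)^{-1}\Cb_{ST}(\eta_u)\Cb_T(\eta_\ell)^{-1}\Cb_{ST}(\eta_u)^\top\Cb_S(\eta_u)^{-1}) + \nu_z\,c}$ for a coefficient $c$ determined next. I would evaluate the first trace by block-matrix inversion of $\Cb_S(\eta_u),\Cb_T(\eta_\ell)$ and the explicit form of $\Cb_{ST}$, obtaining $\dspts + \sigma_\xi^{-2}\nbr{(\eta_u-\eta_\ell)\mub_T + (\eta_t-\eta_u)\Xib\mub_S}_2^2$ with $\dspts = 1+\nbr{\Xib}_F^2$; as sanity checks, $\mub_\xi = \b0$ collapses it to $\dspts$, and $\eta_u = \eta_\ell = \eta_t$ recovers $\E[\exrisk_{\eta_t}(f_S)]\to\sigma_y^2\gamma_z(\dspts+\Theta(\nu_z))$ of \cite{dong2025discrepancies}. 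The fourth-moment bookkeeping in the group feature $\xib$ then shows the correction factorizes as $c = (\dspt-\dspts)\,\tr(\Cb_S(\eta_t)\Cb_S(\eta_u)^{-1}) = (\dspt-\dspts)\rbr{\dsps + (\eta_t-\eta_u)^2\nbr{\mub_S}_2^2/\sigma_\xi^2}$, where $\dspt-\dspts$ is the dimension of the teacher-noise direction the student representation cannot capture and $\tr(\Cb_S(\eta_t)\Cb_S(\eta_u)^{-1})$ is the variance-type trace of \Cref{thm:sft_weak} specialized to the student; collecting the $o(1)$ and $o(\nu_z)$ remainders gives the claim.

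\textbf{Main obstacle.} Unlike \Cref{lem:cov_conc}, where the off-diagonal blocks and within-block fluctuations are genuinely negligible for the single non-sandwiched trace considered there, the sandwich $\wh\Mb_N\wt\Sigmab_n^{-1}\wh\Mb_N^\top$ makes them contribute at constant order $\Theta(\nu_z)$, so they must be computed \emph{exactly} rather than bounded away. Pinning down the coefficient $c$ — i.e.\ establishing a precise deterministic equivalent for the sandwiched trace, which hinges on the correct fourth-moment structure of the entangled group feature $\xib$ and on controlling the interaction between the off-diagonal blocks of $\wh\Mb_N$ and those feeding into $\wh\Sigmab_{S,N}^{-1}$ — is the heart of the argument.
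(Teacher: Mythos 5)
Your proposal follows essentially the same route as the paper: the same closed-form error decomposition (you note the clean component is interpolated exactly because the first coordinate of $\psib$ is $1$, so only the teacher's estimation error $\deltab_T$ propagates — a slightly more direct route to the same identity the paper obtains by computing $\betab_S^\infty = \betab_*\otimes\eb_1$), the same reduction of the expected excess risk to the sandwiched trace $\frac{\sigma_y^2}{n}\tr\rbr{\E_{\Scal_x}\sbr{\wh\Sigmab_{S,T,N}^\top\wh\Sigmab_{S,N}^{-1}\Sigmab_{\phi_S,\eta_t}\wh\Sigmab_{S,N}^{-1}\wh\Sigmab_{S,T,N}}\,\E_{\wt\Scal}\sbr{\wt\Sigmab_{T,n}^{-1}}}$, the same block-partition and concentration strategy extending \Cref{lem:cov_conc}, and the correct evaluation of the leading term $\tr(\Cb_{T,S}(\eta_t,\eta_u)\Cb_T(\eta_\ell)^{-1}) = \dspts + \sigma_\xi^{-2}\nbr{(\eta_u-\eta_\ell)\mub_T+(\eta_t-\eta_u)\Xib\mub_S}_2^2$. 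You also correctly identify the mechanism behind the $\Theta(\nu_z)$ correction: fluctuation second moments aggregating over $\Theta(d_z)$ blocks.

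The genuine gap is the step you yourself flag as the main obstacle: pinning down the coefficient $c$ of the $\nu_z$ term, which is precisely the content of the paper's \Cref{lem:pf_w2s_strong_SbST_SbS_inv_conc} and which your write-up asserts (``I expect'', ``the fourth-moment bookkeeping then shows'') rather than proves. The paper closes it by writing $\wh\Sigmab_{S,N}=\Db_S+\Eb_S$, $\wh\Sigmab_{S,T,N}=\Db_{S,T}+\Eb_{S,T}$, expanding $\wh\Sigmab_{S,N}^{-1}$ via the resolvent identity to second order, and evaluating four quadratic-in-fluctuation terms $\Rb_{S,T},\Rb_{S,S},\Rb_{S,S,T},\Rb_{S,T,S}$ with the Gaussian fourth-moment identity $\E\sbr{(\gb\gb^\top)\Mb(\gb\gb^\top)}=\tr(\Mb)\Ib+\Mb+\Mb^\top$: the $\Eb_{S,T}$-quadratic term contributes $\dspt\tr(\Mb)$ with $\Mb=\Cb_S(\eta_t)\Cb_S(\eta_u)^{-1}$, while the $\Eb_S$-quadratic and the two mixed terms each contribute $\dspts\tr(\Mb)$ plus identical extra pieces, and only in the signed combination $\Rb_{S,T}+\Rb_{S,S}-\Rb_{S,S,T}-\Rb_{S,T,S}$ do the extra pieces cancel, leaving exactly $(\dspt-\dspts)\tr(\Cb_S(\eta_t)\Cb_S(\eta_u)^{-1})$. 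Your sketch never tracks the mixed terms or their signs, so nothing in it forces the subtraction of $\dspts$: the quadratic-in-$\Eb_{S,T}$ contribution alone would give a coefficient proportional to $\dspt$, and the ``Schur-complement'' interaction you gesture at is exactly what must be computed, not bounded. A further detail to reconcile: your cross-covariance $\Cb_{ST}(\eta)$ carries $\eta\,\mub_S\mub_T^\top$ in its lower-right block, whereas the paper's $\Ab(\eta)$ (consistently with its $\Cb_T$ and $\Cb_S$) uses $\eta^2$; whichever second-moment convention you adopt must be applied uniformly, since the induced rank-one structure is what makes the block inversions and the final closed-form trace identities come out as stated.
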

Notice that in \Cref{thm:w2s_strong_ridgeless_formal}, $\Vcal_S^{(0)} + \Vcal_S^{(1)}$ is dominant and will be small if $\Tb, \Sb$ are nearly orthogonal (\ie, $\|\Xib\|_2 \approx 0$) and $\eta_u \approx \eta_t$; whereas $\Ecal_S$ tends to be much smaller than $\Vcal_S^{(0)} + \Vcal_S^{(1)}$, especially since unlabeled data is usually abundant compared to labeled data (\ie, $\nu_z \ll \gamma_z$). 

\begin{proof}[Proof of \Cref{thm:w2s_strong_ridgeless,thm:w2s_strong_ridgeless_formal}]
    We first introduce some helpful notions for the proof.
    Recall $\phi_S(\xb) = \zb(\xb) \otimes \psib(\xb)$. Let $\Sigmab_{\phi_S, \eta} = \E_{\Dcal(\eta)}[\phi_S(\xb) \phi_S(\xb)^\top]$ for any $\eta \in [0,1]$ and observe that
    \begin{align}\label{eq:pf_thm_w2s_Sigma_phiS}
        \Sigmab_{\phi_S, \eta} = \Ib_{d_z} \otimes \Cb_S(\eta), \quad 
        \Cb_S(\eta) = \E_{\Dcal(\eta)}[\psib(\xb) \psib(\xb)^\top] = \bmat{1 & \eta \mub_S^\top \\ \eta \mub_S & \sigma_\xi^2 \Ib_{\dsps-1} + \eta^2 \mub_S \mub_S^\top}.
    \end{align}
    The block matrix inversion formula implies that 
    \begin{align}\label{eq:pf_thm_w2s_CbS_inv}
        \Cb_S(\eta)^{-1} = \bmat{1 + \sigma_\xi^{-2} \eta^2 \|\mub_S\|_2^2 & - \sigma_\xi^{-2} \eta \mub_S^\top \\ - \sigma_\xi^{-2} \eta \mub_S & \sigma_\xi^{-2} \Ib_{\dsps-1}}.
    \end{align}
    Meanwhile, the cross covariance of the student-teacher representations under $\Dcal(\eta)$ is given by
    \begin{align*}
        \Sigmab_{\phi_S, \phi_T, \eta} = &\E_{\Dcal(\eta)}[\phi_S(\xb) \phi_T(\xb)^\top] = \E_{\Dcal(\eta)}[(\zb(\xb) \otimes \psib(\xb)) (\zb(\xb) \otimes \wb(\xb))^\top] \\
        = &\E_{\Dcal(\eta)}[\zb(\xb) \zb(\xb)^\top] \otimes \E_{\Dcal(\eta)}[\psib(\xb) \wb(\xb)^\top] = \Ib_{d_z} \otimes \Ab(\eta),
    \end{align*}
    where
    \begin{align}\label{eq:pf_thm_w2s_Ab}
        \Ab(\eta) = &\E_{\Dcal(\eta)}\sbr{\psib(\xb) \wb(\xb)^\top} 
        = \E_{\Dcal(\eta)}\sbr{\bmat{1 \\ \Sb^\top \xib(\xb)} \bmat{1 & \xib(\xb)^\top \Tb}} \\
        = &\bmat{1 & \eta \mub_T^\top \\ \eta \mub_S & \sigma_\xi^2 \Sb^\top \Tb + \eta^2 \mub_S \mub_T^\top} \in \R^{\dsps \times \dspt}.
    \end{align}

    \paragraph{Close-form solution and population-optimal predictor of W2S fine-tuning.}
    Given the equivalence between \eqref{eq:w2s} and \eqref{eq:w2s_equiv}, we consider the latter throughout the proof.
    Adapting the notion from the proof of \Cref{thm:sft_weak}, given the labeled set $\wt\Scal = \{(\wt\xb_i, \wt y_i) \mid i \in [n]\} \sim \Dcal(\eta_\ell)^n$ and the unlabeled set $\Scal = \{(\xb_i, y_i) \mid i \in [N]\} \sim \Dcal_{\xb}(\eta_u)^N$ with unknown $y_i$'s, we denote
    \begin{align*}
        &\wt\Phib_T = [\phi_T(\wt\xb_1), \cdots, \phi_T(\wt\xb_n)]^\top \in \R^{n \times d_T}, \quad \wt\yb = [\wt y_1, \cdots, \wt y_n]^\top \in \R^n, \\
        &\Phib_S = [\phi_S(\xb_1), \cdots, \phi_S(\xb_N)]^\top \in \R^{N \times d_S}, \quad \Phib_T = [\phi_T(\xb_1), \cdots, \phi_T(\xb_N)]^\top \in \R^{N \times d_T}.
    \end{align*}
    Then, since $n > d_T$ and $N > d_S$ by \Cref{asm:high_dim_asymp_regime}, \eqref{eq:w2s_equiv} admits a unique closed-form solution
    \begin{align*}
        \betab_S = \rbr{\Phib_S^\top \Phib_S}^{-1} \Phib_S^\top \Phib_T \betab_T \quad \t{where} \quad
        \betab_T = (\wt\Phib_T^\top \wt\Phib_T)^{-1} \wt\Phib_T^\top \wt\yb
    \end{align*}
    from \eqref{eq:pf_thm_sft_weak_betabT}.
    Recall from \Cref{lem:pop_weak} that the population-optimal linear predictor over $\phi_T$ in \Cref{lem:pop_weak} is $f_T^\infty(\xb) = \phi_T(\xb)^\top \betab_T^\infty = f_*(\xb)$ with $\betab_T^\infty = \betab_* \otimes \eb_1$. 
    Conditioned on $f_T^\infty(\xb)$, the population-optimal linear predictor over $\phi_S$ is given by
    \begin{align*}
        \betab_S^\infty = &\E_{\Dcal(\eta_u)}\sbr{\phi_S(\xb) \phi_S(\xb)^\top}^{-1} \E_{\Dcal(\eta_u)}\sbr{\phi_S(\xb) \phi_T(\xb)^\top} \betab_T^\infty \\
        = &(\Ib_{d_z} \otimes \Cb_S(\eta_u))^{-1} (\Ib_{d_z} \otimes \Ab(\eta_u)) \betab_T^\infty \\
        = &(\Ib_{d_z} \otimes \rbr{\Cb_S(\eta_u)^{-1} \Ab(\eta_u)}) (\betab_* \otimes \eb_1) \\
        = &\betab_* \otimes \rbr{\Cb_S(\eta_u)^{-1} \Ab(\eta_u) \eb_1}
        = \betab_* \otimes \eb_1,
    \end{align*}
    which implies that $f_S^\infty(\xb) = \phi_S(\xb)^\top \betab_S^\infty = f_*(\xb)$, \ie, a strong student W2S fine-tuned with pseudolabels from the Bayes-optimal weak teacher over the population is also Bayes-optimal.
    Therefore, the student estimator in \eqref{eq:w2s_equiv} differs from $\betab_S^\infty$ by
    \begin{align*}
        \betab_S - \betab_S^\infty = &\rbr{\Phib_S^\top \Phib_S}^{-1} \Phib_S^\top \Phib_T (\betab_T - \betab_T^\infty) \\
        = &\rbr{\Phib_S^\top \Phib_S}^{-1} \Phib_S^\top \Phib_T (\wt\Phib_T^\top \wt\Phib_T)^{-1} \wt\Phib_T^\top \wt\epsb,
    \end{align*}
    and the estimation error of W2S fine-tuning is given by
    \begin{align*}
        \exrisk_{\eta_t}(f_S) = &\E_{\Dcal(\eta_t)}[(f_S(\xb) - f_*(\xb))^2] = \E_{\Dcal(\eta_t)}[(\phi_S(\xb)^\top (\betab_S - \betab_S^\infty))^2] 
        = \nbr{\betab_S - \betab_S^\infty}_{\Sigmab_{\phi_S, \eta_t}}^2.
    \end{align*}
    Then, conditioned on $\wt\Phib_T$ and $\Phib_S, \Phib_T$, the excess risk can be expressed as
    \begin{align}\label{eq:pf_thm_w2s_strong_excess_risk}
    \begin{split}
        &\E_{\wt\epsb}\sbr{\exrisk_{\eta_t}(f_S) \mid \wt\Phib_T, \Phib_S, \Phib_T} \\
        = &\E_{\wt\epsb}\sbr{\nbr{\rbr{\Phib_S^\top \Phib_S}^{-1} \Phib_S^\top \Phib_T (\wt\Phib_T^\top \wt\Phib_T)^{-1} \wt\Phib_T^\top \wt\epsb}_{\Sigmab_{\phi_S, \eta_t}}^2 \mid \wt\Phib_T, \Phib_S, \Phib_T} \\
        = &\sigma_y^2\ \tr\rbr{\Sigmab_{\phi_S, \eta_t} \rbr{\Phib_S^\top \Phib_S}^{-1} \Phib_S^\top \Phib_T (\wt\Phib_T^\top \wt\Phib_T)^{-1} \Phib_T^\top \Phib_S \rbr{\Phib_S^\top \Phib_S}^{-1}}.
    \end{split}
    \end{align}

    \paragraph{Concentration of sample covariance matrices.}
    Define the sample (cross) covariance matrices
    \begin{align*}
        \wh\Sigmab_{S,N} = \frac{1}{N} \Phib_S^\top \Phib_S, \quad
        \wh\Sigmab_{S,T,N} = \frac{1}{N} \Phib_S^\top \Phib_T, \quad
        \wt\Sigmab_{T,n} = \frac{1}{n} \wt\Phib_T^\top \wt\Phib_T.
    \end{align*}
    Then, taking the expectation of \eqref{eq:pf_thm_w2s_strong_excess_risk} over $\wt\Scal$ and $\Scal_x$ yields
    \begin{align}\label{eq:pf_thm_w2s_strong_excess_risk_rewrite}
    \begin{split}
        \E_{\wt\Scal,\Scal_x}\sbr{\exrisk_{\eta_t}(f_S)} 
        = &\frac{\sigma_y^2}{n} \tr\rbr{\E_{\Scal_x}\sbr{\wh\Sigmab_{S,T,N}^\top \wh\Sigmab_{S,N}^{-1} \Sigmab_{\phi_S, \eta_t} \wh\Sigmab_{S,N}^{-1} \wh\Sigmab_{S,T,N}} \E_{\wt\Scal}\sbr{\wt\Sigmab_{T,n}^{-1}}}. 
    \end{split}
    \end{align}
    At the proportional asymptotic limit, \Cref{lem:pf_w2s_strong_SbST_SbS_inv_conc} below shows that
    \begin{align*}
        &\frac{1}{n} \tr\rbr{\E_{\Scal_x}\sbr{\wh\Sigmab_{S,T,N}^\top \wh\Sigmab_{S,N}^{-1} \Sigmab_{\phi_S, \eta_t} \wh\Sigmab_{S,N}^{-1} \wh\Sigmab_{S,T,N}} \E_{\wt\Scal}\sbr{\wt\Sigmab_{T,n}^{-1}}} \\
        &\overset{\PP}{\to} \gamma_z \tr\rbr{\Cb_{T,S}(\eta_t, \eta_u) \Cb_T(\eta_\ell)^{-1}} + \gamma_z \nu_z \rbr{\dspt - \dspts}\tr\rbr{\Cb_S(\eta_t) \Cb_S(\eta_u)^{-1}},
    \end{align*}
    Leveraging \eqref{eq:pf_thm_w2s_Sigma_phiS}, \eqref{eq:pf_thm_w2s_CbS_inv}, and \eqref{eq:pf_thm_w2s_Ab}, we have
    \begin{align*}
        \tr\rbr{\Cb_{T,S}(\eta_t, \eta_u) \Cb_T(\eta_\ell)^{-1}} 
        = &\tr\rbr{\Ab(\eta_u)^\top \Cb_S(\eta_u)^{-1} \Cb_S(\eta_t) \Cb_S(\eta_u)^{-1} \Ab(\eta_u) \Cb_T(\eta_\ell)^{-1}} \\
        = &1 + \nbr{\Xib}_F^2 + \frac{\nbr{(\eta_u - \eta_\ell) \mub_T + (\eta_t - \eta_u) \Xib \mub_S}_2^2}{\sigma_\xi^2} \\
        = &\dspts + \frac{\nbr{(\eta_u - \eta_\ell) \mub_T + (\eta_t - \eta_u) \Xib \mub_S}_2^2}{\sigma_\xi^2},
    \end{align*}
    while an analogous derivation as in \eqref{eq:pf_thm_sft_weak_cov_trace} implies that
    \begin{align*}
        \tr\rbr{\Cb_S(\eta_t) \Cb_S(\eta_u)^{-1}} = \dsps + (\eta_t - \eta_u)^2 \frac{\nbr{\mub_S}_2^2}{\sigma_\xi^2}.
    \end{align*}
    Overall, plugging everything back to \eqref{eq:pf_thm_w2s_strong_excess_risk_rewrite} yields 
    \begin{align*}
        \E_{\wt\Scal,\Scal_x}\sbr{\exrisk_{\eta_t}(f_S)} 
        \overset{\PP}{\to} &\sigma_y^2 \gamma_z \rbr{\dspts + \frac{\nbr{(\eta_u - \eta_\ell) \mub_T + (\eta_t - \eta_u) \Xib \mub_S}_2^2}{\sigma_\xi^2}} \\
        &+ \sigma_y^2 \gamma_z \nu_z (\dspt - \dspts)\rbr{\dsps + (\eta_t - \eta_u)^2 \frac{\nbr{\mub_S}_2^2}{\sigma_\xi^2}},
    \end{align*}
\end{proof}

\begin{lemma}\label{lem:pf_w2s_strong_SbST_SbS_inv_conc}
    In the proof of \Cref{thm:w2s_strong_ridgeless}, at the proportional asymptotic limit,
    \begin{align*}
        &\frac{1}{n} \tr\rbr{\E_{\Scal_x}\sbr{\wh\Sigmab_{S,T,N}^\top \wh\Sigmab_{S,N}^{-1} \Sigmab_{\phi_S, \eta_t} \wh\Sigmab_{S,N}^{-1} \wh\Sigmab_{S,T,N}} \E_{\wt\Scal}\sbr{\wt\Sigmab_{T,n}^{-1}}} \\
        &\overset{\PP}{\to} \gamma_z \tr\rbr{\Cb_{T,S}(\eta_t, \eta_u) \Cb_T(\eta_\ell)^{-1}} + \gamma_z \nu_z \rbr{\dspt - \dspts}\tr\rbr{\Cb_S(\eta_t) \Cb_S(\eta_u)^{-1}},
    \end{align*}
    where $\Cb_{T,S}(\eta_t, \eta_u) \in \R^{\dspt \times \dspt}$ is defined as
    \begin{align*}
        \Cb_{T,S}(\eta_t, \eta_u) = \Ab(\eta_u)^\top \Cb_S(\eta_u)^{-1} \Cb_S(\eta_t) \Cb_S(\eta_u)^{-1} \Ab(\eta_u),
    \end{align*}
\end{lemma}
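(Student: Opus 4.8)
The plan is to evaluate the left‑hand side in three moves. First, rewrite the teacher features as their population best linear predictor against the student features plus a residual: under $\Dcal(\eta_u)$, $\phi_T(\xb) = \Wb\,\phi_S(\xb) + \rb(\xb)$ with $\Wb = \Ib_{d_z}\otimes\big(\Ab(\eta_u)^\top\Cb_S(\eta_u)^{-1}\big)$, so that $\E_{\Dcal(\eta_u)}[\phi_S(\xb)\rb(\xb)^\top]=\b0$ and $\E_{\Dcal(\eta_u)}[\rb(\xb)\rb(\xb)^\top] = \Ib_{d_z}\otimes\Sigmab_r$ with $\Sigmab_r := \Cb_T(\eta_u) - \Ab(\eta_u)^\top\Cb_S(\eta_u)^{-1}\Ab(\eta_u) = \Cb_T(\eta_u) - \Cb_{T,S}(\eta_u,\eta_u)$. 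Explicitly $\rb(\xb) = \zb(\xb)\otimes\big(\wb(\xb) - \Ab(\eta_u)^\top\Cb_S(\eta_u)^{-1}\psib(\xb)\big)$, i.e. the core feature tensored with (a linear image of) the part of the group feature $\xib(\xb)$ invisible to the student; since $\zb(\xb)\perp\xib(\xb)$, this factor is, conditionally on $\zb(\xb)$, essentially a mean‑zero Gaussian decoupled from the student design $\phi_S(\xb)$. Two algebraic facts, both read off from the matrix formulas in the main proof of \Cref{thm:w2s_strong_ridgeless_formal}, carry the bookkeeping: (a) a direct block computation gives the closed form $\Sigmab_r = \diag\big(0,\ \sigma_\xi^2(\Ib_{\dspt-1} - \Xib\Xib^\top)\big)$; and (b) combining $\tr\big(\Cb_T(\eta_u)\Cb_T(\eta_\ell)^{-1}\big) = \dspt + (\eta_u-\eta_\ell)^2\nbr{\mub_T}_2^2/\sigma_\xi^2$ with the main‑proof identity specialized to $\eta_t=\eta_u$, namely $\tr\big(\Cb_{T,S}(\eta_u,\eta_u)\Cb_T(\eta_\ell)^{-1}\big) = \dspts + (\eta_u-\eta_\ell)^2\nbr{\mub_T}_2^2/\sigma_\xi^2$, yields the key identity $\tr\big(\Sigmab_r\Cb_T(\eta_\ell)^{-1}\big) = \dspt - \dspts$.

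Second, stacking over $\Scal_x$ gives $\Phib_T = \Phib_S\Wb^\top + \Rb$, hence $\wh\Sigmab_{S,N}^{-1}\wh\Sigmab_{S,T,N} = \Wb^\top + \tfrac1N\wh\Sigmab_{S,N}^{-1}\Phib_S^\top\Rb$, and the inner matrix $\wh\Sigmab_{S,T,N}^\top\wh\Sigmab_{S,N}^{-1}\Sigmab_{\phi_S,\eta_t}\wh\Sigmab_{S,N}^{-1}\wh\Sigmab_{S,T,N}$ splits as $\Wb\Sigmab_{\phi_S,\eta_t}\Wb^\top + (\text{terms linear in }\Phib_S^\top\Rb) + \Rb^\top\Pb\Rb$, where $\Pb := \Phib_S(\Phib_S^\top\Phib_S)^{-1}\Sigmab_{\phi_S,\eta_t}(\Phib_S^\top\Phib_S)^{-1}\Phib_S^\top$. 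The signal term is exact: $\Wb\Sigmab_{\phi_S,\eta_t}\Wb^\top = \Ib_{d_z}\otimes\Cb_{T,S}(\eta_t,\eta_u)$. The linear terms have vanishing $\Scal_x$‑expectation to leading order: a leave‑one‑out expansion decouples $\wh\Sigmab_{S,N}^{-1}$ from the $i$‑th summand of $\Phib_S^\top\Rb=\sum_i\phi_S(\xb_i)\rb(\xb_i)^\top$, and $\E[\phi_S(\xb)\rb(\xb)^\top]=\b0$ then kills the leading contribution, the remainder being $o(\nu_z)$. For the variance term, conditioning on the core features and on $\Phib_S$, using that the residual's conditional second moment is $\Sigmab_r$ (up to the $\zb(\xb_i)\zb(\xb_i)^\top$ factor), and $\tr(\Pb) = \tfrac1N\tr\big(\Sigmab_{\phi_S,\eta_t}\wh\Sigmab_{S,N}^{-1}\big)$, one gets $\E_{\Scal_x}[\Rb^\top\Pb\Rb]\to \nu_z\,\tr\big(\Cb_S(\eta_t)\Cb_S(\eta_u)^{-1}\big)\,\big(\Ib_{d_z}\otimes\Sigmab_r\big)$; the scalar limit $\tfrac1N\tr\big(\Sigmab_{\phi_S,\eta_t}\wh\Sigmab_{S,N}^{-1}\big)\to\nu_z\tr\big(\Cb_S(\eta_t)\Cb_S(\eta_u)^{-1}\big)$ is \Cref{lem:cov_conc} applied verbatim to the student's unlabeled sample (relabel $n\mapsto N$, $\eta_\ell\mapsto\eta_u$, $\gamma_z\mapsto\nu_z$, $\Cb_T\mapsto\Cb_S$).

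Third, $\E_{\Scal_x}$ of the inner matrix therefore concentrates, in the block‑Kronecker sense, to $\Ib_{d_z}\otimes\big(\Cb_{T,S}(\eta_t,\eta_u) + \nu_z\,\tr(\Cb_S(\eta_t)\Cb_S(\eta_u)^{-1})\,\Sigmab_r\big)$. Contracting with $\E_{\wt\Scal}[\wt\Sigmab_{T,n}^{-1}]$ and applying \Cref{lem:cov_conc} once more (on the labeled sample, with $\Cb$ equal to this limiting block) yields $\gamma_z\tr\big(\Cb_{T,S}(\eta_t,\eta_u)\Cb_T(\eta_\ell)^{-1}\big) + \gamma_z\nu_z\,\tr\big(\Cb_S(\eta_t)\Cb_S(\eta_u)^{-1}\big)\tr\big(\Sigmab_r\Cb_T(\eta_\ell)^{-1}\big)$, and substituting $\tr(\Sigmab_r\Cb_T(\eta_\ell)^{-1})=\dspt-\dspts$ gives exactly the claimed limit.

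The main obstacle is making the second step rigorous. Because $\phi_T,\phi_S$ are Kronecker products sharing the factor $\zb(\xb)$, the sample covariances $\wh\Sigmab_{S,N}$ and $\wh\Sigmab_{S,T,N}$ are not Wishart, the residual $\rb(\xb)$ is uncorrelated with but not manifestly independent of the student design, and — crucially — the $\Theta(1/N)$‑size off‑diagonal blocks of $\wh\Sigmab_{S,N}^{-1}$ and $\wh\Sigmab_{S,T,N}$, of which there are $\Theta(d_z)$ per block row, must be tracked exactly since they accumulate into precisely the $\Theta(\nu_z)$ correction rather than being bounded away. I would follow the proof of \Cref{lem:cov_conc}: condition on a high‑probability event bounding $\nbr{\zb(\xb_i)}_\infty$ and $\nbr{\psib(\xb_i)}_2$, partition every sample covariance into its $d_z\times d_z$ grid of $\dsps\times\dsps$ / $\dspt\times\dsps$ / $\dspt\times\dspt$ blocks, apply matrix Bernstein to show off‑diagonal blocks are $O_\PP(\sqrt{\log N/N})$ while diagonal ones concentrate to $\Cb_S(\eta_u),\Ab(\eta_u),\Cb_T(\eta_\ell)$ respectively, and then use a leave‑one‑out/resolvent expansion to decouple $\wh\Sigmab_{S,N}^{-1}$ from the residual summands — which simultaneously kills the linear cross terms and pins the $\tr(\Pb)\,(\Ib_{d_z}\otimes\Sigmab_r)$ limit of the variance term. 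The $\zb$–$\xib$ independence, together with concentration of $\nbr{\zb(\xb_i)}_2^2$ around $d_z$, is exactly what lets the core‑feature weights $[\Pb]_{ii}\zb(\xb_i)\zb(\xb_i)^\top$ appearing in $\Rb^\top\Pb\Rb$ act like $\tr(\Pb)\,\Ib_{d_z}$ after the final contraction against the (approximately block‑diagonal) $\E_{\wt\Scal}[\wt\Sigmab_{T,n}^{-1}]$.
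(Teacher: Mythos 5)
Your proposal is correct in substance and reaches the stated limit, but it takes a genuinely different route from the paper. The paper centers the sample matrices at their unconditional expectations, writing $\wh\Sigmab_{S,N}=\Db_S+\Eb_S$ and $\wh\Sigmab_{S,T,N}=\Db_{S,T}+\Eb_{S,T}$, expands $\wh\Sigmab_{S,N}^{-1}$ to second order via the resolvent identity, and then tracks four second-moment correction terms ($\Rb_{S,T},\Rb_{S,S},\Rb_{S,S,T},\Rb_{S,T,S}$) whose limits, computed through Gaussian fourth-moment identities, cancel as $\dspt+\dspts-2\dspts$ to leave $\gamma_z\nu_z(\dspt-\dspts)\tr\rbr{\Cb_S(\eta_t)\Cb_S(\eta_u)^{-1}}$. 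You instead decompose the teacher features by their population regression onto the student features, $\phi_T=\Wb\phi_S+\rb$ with $\Wb=\Ib_{d_z}\otimes\rbr{\Ab(\eta_u)^\top\Cb_S(\eta_u)^{-1}}$, so the inner matrix becomes an exact quadratic form: the signal term reproduces $\Ib_{d_z}\otimes\Cb_{T,S}(\eta_t,\eta_u)$ exactly, the cross terms are (conditionally on the student design) mean-zero, and the entire $\Theta(\nu_z)$ correction comes from the single quadratic term $\Rb^\top\Pb\Rb$ via the clean identity $\tr\rbr{\Sigmab_r\Cb_T(\eta_\ell)^{-1}}=\dspt-\dspts$ with $\Sigmab_r=\diag\rbr{0,\sigma_\xi^2(\Ib_{\dspt-1}-\Xib\Xib^\top)}$, together with $\tr(\Pb)=\frac1N\tr\rbr{\Sigmab_{\phi_S,\eta_t}\wh\Sigmab_{S,N}^{-1}}\to\nu_z\tr\rbr{\Cb_S(\eta_t)\Cb_S(\eta_u)^{-1}}$ (which is indeed \Cref{lem:cov_conc} applied to the unlabeled sample). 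Your route buys a more transparent explanation of where $\dspt-\dspts$ comes from (the teacher residual unexplained by the student, traced against $\Cb_T(\eta_\ell)^{-1}$), replacing the paper's four-term cancellation by one algebraic identity; the paper's route keeps everything in centered fluctuations and avoids your main delicate point, namely that $\Pb$ and the residual rows share the core factor $\zb(\xb_i)$, so $\E[\Rb^\top\Pb\Rb\mid\Phib_S]=\sum_i P_{ii}\,\zb_i\zb_i^\top\otimes\Sigmab_r$ rather than $\tr(\Pb)\,\Ib_{d_z}\otimes\Sigmab_r$ — you correctly flag this and resolve it through the Kronecker structure of $\E_{\wt\Scal}[\wt\Sigmab_{T,n}^{-1}]$ and concentration of $\|\zb_i\|_2^2/d_z$. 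Two minor cautions: the exact vanishing of the cross terms and the conditional covariance $\Sigmab_r$ use that the regression residual of $\wb$ on $\psib$ is independent of (not merely uncorrelated with) $\psib$, which holds under the Gaussian-type covariance bookkeeping the paper itself adopts for $\Cb_T,\Cb_S,\Ab$ but not literally under the two-group mixture, so a leave-one-out remainder bound is still needed there, as you note; and both of your shortcut identities in step one should be verified by direct block computation rather than cited from the theorem's proof, though as you observe the computation of $\Sigmab_r$ already yields $\tr\rbr{\Sigmab_r\Cb_T(\eta_\ell)^{-1}}=\dspt-\dspts$ without circularity. Your level of rigor on the concentration steps is comparable to the paper's own.
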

\begin{proof}[Proof of \Cref{lem:pf_w2s_strong_SbST_SbS_inv_conc}]
    The proof mostly follows the same argument as in \Cref{lem:cov_conc}, with the key difference being a careful treatment of the off-diagonal blocks in the sample (cross) covariance matrices $\wh\Sigmab_{S,N}$ and $\wh\Sigmab_{S,T,N}$, which are still small but with an additional non-negligible higher-order moment in the proportional asymptotic limit.

    Following the proof of \Cref{lem:cov_conc}, ``Separation of block diagonals and off-diagonals'', we first partition $\wh\Sigmab_{S,N}$ and $\wh\Sigmab_{S,T,N}$ into $d_z \times d_z$ blocks:
    \begin{align*}
        \wh\Sigmab_{S,N} = \sbr{\wh\Sigmab_{S,N}^{(k,l)}}_{k,l=1}^{d_z}, \quad
        \wh\Sigmab_{S,T,N} = \sbr{\wh\Sigmab_{S,T,N}^{(k,l)}}_{k,l=1}^{d_z},
    \end{align*}
    where $\wh\Sigmab_{S,N}^{(k,l)} \in \R^{\dsps \times \dsps}$ and $\wh\Sigmab_{S,T,N}^{(k,l)} \in \R^{\dsps \times \dspt}$ are given by
    \begin{align*}
        &\wh\Sigmab_{S,N}^{(k,l)} = \frac{1}{N} \sum_{i=1}^N z_k(\xb_i) z_l(\xb_i) \cdot \psib(\xb_i) \psib(\xb_i)^\top, \\
        &\wh\Sigmab_{S,T,N}^{(k,l)} = \frac{1}{N} \sum_{i=1}^N z_k(\xb_i) z_l(\xb_i) \cdot \psib(\xb_i) \wb(\xb_i)^\top.
    \end{align*}
    We denote
    \begin{align*}
        s_{kl} = \frac{1}{N} \sum_{i=1}^N z_k(\xb_i) z_l(\xb_i) \quad \t{for all } k, l \in [d_z],
    \end{align*}
    and observe that for $k \ne l$, $\E[s_{kl}] = 0$, and for $k = l$, $\E[s_{kk}] = 1$.
    We therefore observe and denote that 
    \begin{align}\label{eq:pf_lem_pf_w2s_strong_Db_def}
        \Db_S := \E_{\Dcal(\eta_u)}\sbr{\wh\Sigmab_{S,N}} = \Ib_{d_z} \otimes \Cb_S(\eta_u), \quad 
        \Db_{S,T} := \E_{\Dcal(\eta_u)}\sbr{\wh\Sigmab_{S,T,N}} = \Ib_{d_z} \otimes \Ab(\eta_u).
    \end{align}
    We further define the reminder fluctuation matrices around $\Db_S$ and $\Db_{S,T}$:
    \begin{align}\label{eq:pf_lem_pf_w2s_strong_SbS_decomp}
        \Eb_S = \wh\Sigmab_{S,N} - \Db_S, \quad
        \Eb_{S,T} = \wh\Sigmab_{S,T,N} - \Db_{S,T},
    \end{align}
    where 
    \begin{align}\label{eq:pf_lem_pf_w2s_strong_E}
    \begin{split}
        &\Eb_S = \sbr{\Eb_S^{(k,l)}}_{k,l=1}^{d_z}
        = \bmat{
            \wh\Sigmab_{S,N}^{(1,1)} - \Cb_S(\eta_u) & \wh\Sigmab_{S,N}^{(1,2)} & \cdots & \wh\Sigmab_{S,N}^{(1,d_z)} \\
            \wh\Sigmab_{S,N}^{(2,1)} & \wh\Sigmab_{S,N}^{(2,2)} - \Cb_S(\eta_u) & \cdots & \wh\Sigmab_{S,N}^{(2,d_z)} \\
            \vdots & \vdots & \ddots & \vdots \\
            \wh\Sigmab_{S,N}^{(d_z,1)} & \wh\Sigmab_{S,N}^{(d_z,2)} & \cdots & \wh\Sigmab_{S,N}^{(d_z,d_z)} - \Cb_S(\eta_u)
        }, \\
        &\Eb_{S,T} = \sbr{\Eb_{S,T}^{(k,l)}}_{k,l=1}^{d_z}
        = \bmat{
            \wh\Sigmab_{S,T,N}^{(1,1)} - \Ab(\eta_u) & \wh\Sigmab_{S,T,N}^{(1,2)} & \cdots & \wh\Sigmab_{S,T,N}^{(1,d_z)} \\
            \wh\Sigmab_{S,T,N}^{(2,1)} & \wh\Sigmab_{S,T,N}^{(2,2)} - \Ab(\eta_u) & \cdots & \wh\Sigmab_{S,T,N}^{(2,d_z)} \\
            \vdots & \vdots & \ddots & \vdots \\
            \wh\Sigmab_{S,T,N}^{(d_z,1)} & \wh\Sigmab_{S,T,N}^{(d_z,2)} & \cdots & \wh\Sigmab_{S,T,N}^{(d_z,d_z)} - \Ab(\eta_u)
        }.
    \end{split}
    \end{align}

    Again, following the proof of \Cref{lem:cov_conc}, the resolvent identity implies that
    \begin{align}
    \begin{split}
        \wh\Sigmab_{S,N}^{-1} 
        = &(\Db_S + \Eb_S)^{-1}
        = \Db_S^{-1} - \Db_S^{-1} \Eb_S (\Db_S + \Eb_S)^{-1} \\
        = &\Db_S^{-1} - \Db_S^{-1} \Eb_S \Db_S^{-1} + \Db_S^{-1} \Eb_S \Db_S^{-1} \Eb_S (\Db_S + \Eb_S)^{-1}.
    \end{split}
    \end{align}
    Then, since $\E[\Eb_S] = \b0_{d_S \times d_S}$ and $\E[\Eb_{S,T}] = \b0_{d_S \times d_T}$, we have
    \begin{align}\label{eq:pf_lem_pf_w2s_strong_key_expansion}
    \begin{split}
        &\E_{\Scal_x}\sbr{\wh\Sigmab_{S,T,N}^\top \wh\Sigmab_{S,N}^{-1} \Sigmab_{\phi_S, \eta_t} \wh\Sigmab_{S,N}^{-1} \wh\Sigmab_{S,T,N}} \\
        = &\E\sbr{(\Db_{S,T} + \Eb_{S,T})^\top (\Db_S + \Eb_S)^{-1} (\Ib_{d_z} \otimes \Cb_S(\eta_t)) (\Db_S + \Eb_S)^{-1} (\Db_{S,T} + \Eb_{S,T})} \\
        = &\Db_{S,T}^\top \Db_S^{-1} (\Ib_{d_z} \otimes \Cb_S(\eta_t)) \Db_S^{-1} \Db_{S,T} + \Rb_N \\
        &+ \E\sbr{\Eb_{S,T}^\top \Db_S^{-1} (\Ib_{d_z} \otimes \Cb_S(\eta_t)) \Db_S^{-1} \Eb_{S,T}} \quad (=: \Rb_{S,T}) \\
        &+ \E\sbr{\Db_{S,T}^\top \Db_S^{-1} \Eb_S \Db_S^{-1} (\Ib_{d_z} \otimes \Cb_S(\eta_t)) \Db_S^{-1} \Eb_S \Db_S^{-1} \Db_{S,T}} \quad (=: \Rb_{S,S}) \\
        &- \E\sbr{\Db_{S,T}^\top \Db_S^{-1} \Eb_S \Db_S^{-1} (\Ib_{d_z} \otimes \Cb_S(\eta_t)) \Db_S^{-1} \Eb_{S,T}} \quad (=: \Rb_{S,S,T}) \\
        &- \E\sbr{\Eb_{S,T}^\top \Db_S^{-1} (\Ib_{d_z} \otimes \Cb_S(\eta_t)) \Db_S^{-1} \Eb_S \Db_S^{-1} \Db_{S,T}}, \quad (=: \Rb_{S,T,S})
    \end{split}
    \end{align}
    where $\nbr{\Rb_N}_2 = o_{\PP}(1)$ for sufficiently large $N$; $\Eb_S$ and $\Eb_{S,T}$ are averages over $N$ $\iid$ random matrices with $d_z \times d_z$ independent blocks. 
    Therefore, when taking expectation for the second moments of $\Eb_S$ and $\Eb_{S,T}$, the off-diagonal blocks in $\Rb_{S,T}, \Rb_{S,S}, \Rb_{S,S,T}, \Rb_{S,T,S} \in \R^{d_T \times d_T}$ vanish due to independence, and only the diagonal blocks remain, which are $\iid$ across $k \in [d_z]$.
    Notice that \eqref{eq:pf_lem_pf_w2s_strong_Db_def} implies that
    \begin{align*}
        \Db_{S,T}^\top \Db_S^{-1} (\Ib_{d_z} \otimes \Cb_S(\eta_t)) \Db_S^{-1} \Db_{S,T} = \Ib_{d_z} \otimes \Cb_{T,S}(\eta_t, \eta_u).
    \end{align*}

    Also, we recall that the fourth moment of any Gaussian random vector $\gb \sim \Ncal(\b0, \Ib_d)$ satisfies for any fixed matrix $\Mb \in \R^{d \times d}$,
    \begin{align}\label{eq:pf_lem_pf_w2s_strong_gauss_4th_moment}
        \E\sbr{\rbr{\gb \gb^\top}^2} = (d + 2) \Ib_d, \quad
        \E\sbr{\rbr{\gb \gb^\top} \Mb \rbr{\gb \gb^\top}} = \tr(\Mb) \Ib_d + \Mb + \Mb^\top.
    \end{align}

    Define a function $g: \R^{d_T \times d_T} \to \R$ as
    \begin{align*}
        g(\Ab) = \frac{1}{n} \tr\rbr{\Ab \E_{\wt\Scal}\sbr{\wt\Sigmab_{T,n}^{-1}}}.
    \end{align*}
    Then, we have
    \begin{align*}
        &\frac{1}{n} \tr\rbr{\E_{\Scal_x}\sbr{\wh\Sigmab_{S,T,N}^\top \wh\Sigmab_{S,N}^{-1} \Sigmab_{\phi_S, \eta_t} \wh\Sigmab_{S,N}^{-1} \wh\Sigmab_{S,T,N}} \E_{\wt\Scal}\sbr{\wt\Sigmab_{T,n}^{-1}}} \\
        &= g\rbr{\Ib_{d_z} \otimes \Cb_{T,S}(\eta_t, \eta_u)} + g(\Rb_{S,T}) + g(\Rb_{S,S}) - g(\Rb_{S,S,T}) - g(\Rb_{S,T,S}) + o_{\PP}(1),
    \end{align*}    
    where given the $\Ib_{d_z} \otimes \Cb_{T,S}(\eta_t, \eta_u)$ structure, \Cref{lem:cov_conc} then implies that under the proportional asymptotic limit,
    \begin{align*}
        g\rbr{\Ib_{d_z} \otimes \Cb_{T,S}(\eta_t, \eta_u)} \overset{\PP}{\to} \gamma_z \tr\rbr{\Cb_{T,S}(\eta_t, \eta_u) \Cb_T(\eta_\ell)^{-1}}.
    \end{align*}
    Let $\Mb := \Cb_S(\eta_t) \Cb_S(\eta_u)^{-1} \in \R^{\dsps \times \dsps}$ and $\Mb' := \loc{\Mb}{2:\dsps, 2:\dsps}$.
    Recall from \eqref{eq:pf_lem_pf_w2s_strong_key_expansion} that $\Rb_{S,T} = \E\sbr{\Eb_{S,T}^\top \Db_S^{-1} (\Ib_{d_z} \otimes \Cb_S(\eta_t)) \Db_S^{-1} \Eb_{S,T}}$, \eqref{eq:pf_lem_pf_w2s_strong_gauss_4th_moment} and \eqref{eq:pf_thm_sft_weak_cov_trace}, along with the proof of \Cref{lem:cov_conc} imply that
    \begin{align*}
        g\rbr{\Rb_{S,T}} \overset{\PP}{\to} \gamma_z \nu_z \rbr{\dspt \tr\rbr{\Mb} + \frac{2}{\sigma_\xi^2} \tr\rbr{\Mb' \Xib^\top \Xib} + C_S \frac{(\eta_u - \eta_\ell)^2 \nbr{\mub_T}_2^2}{\sigma_\xi^2}},
    \end{align*}
    for some constant $C_S > 0$ independent of $d_z, N$.
    Analogously, we have
    \begin{align*}
        &g\rbr{\Rb_{S,S}} \overset{\PP}{\to} \gamma_z \nu_z \rbr{\dspts \tr(\Mb) + \frac{2}{\sigma_\xi^2} \tr\rbr{\Mb' \Xib^\top \Xib} + C_S \frac{(\eta_u - \eta_\ell)^2 \nbr{\mub_T}_2^2}{\sigma_\xi^2}}, \\
        &g(\rbr{\Rb_{S,S,T}}) \overset{\PP}{\to} \gamma_z \nu_z \rbr{\dspts \tr(\Mb) + \frac{2}{\sigma_\xi^2} \tr\rbr{\Mb' \Xib^\top \Xib} + C_S \frac{(\eta_u - \eta_\ell)^2 \nbr{\mub_T}_2^2}{\sigma_\xi^2}} \\
        &g(\rbr{\Rb_{S,T,S}}) = g(\rbr{\Rb_{S,S,T}}).
    \end{align*} 
    Overall, at the proportional asymptotic limit,
    \begin{align*}
        &\frac{1}{n} \tr\rbr{\E_{\Scal_x}\sbr{\wh\Sigmab_{S,T,N}^\top \wh\Sigmab_{S,N}^{-1} \Sigmab_{\phi_S, \eta_t} \wh\Sigmab_{S,N}^{-1} \wh\Sigmab_{S,T,N}} \E_{\wt\Scal}\sbr{\wt\Sigmab_{T,n}^{-1}}} \\
        &\overset{\PP}{\to} \gamma_z \tr\rbr{\Cb_{T,S}(\eta_t, \eta_u) \Cb_T(\eta_\ell)^{-1}} + \gamma_z \nu_z \rbr{\dspt - \dspts}\tr\rbr{\Cb_S(\eta_t) \Cb_S(\eta_u)^{-1}}.
    \end{align*}
\end{proof}

\section{Additional Experimental Details}
\label{appendix:exp}
\subsection{Dataset Statistics}
\label{subapp:ds}

In this work, we construct three distinct splits for each of the four datasets, Waterbirds~\citep{sagawadistributionally}, BFFHQ~\citep{lee2021learning}, ImageNet-9~\citep{xiao2020noise}, and BG-COCO. Specifically, each dataset is partitioned into a group-imbalanced training set $\mathcal{D}_1$, a group-balanced training set $\mathcal{D}_2$, and a group-balanced test set $\mathcal{D}_3$. The minority group proportion in $\mathcal{D}_1$, $\mathcal{D}_2$, and $\mathcal{D}_3$ is $\eta_o$, $0.5$, and $0.5$, respectively.
 Across different real-world experiments in the paper, we vary the group proportions ($\eta_\ell$ and $\eta_u$) as well as the sample sizes ($n$ and $N$). We first summarize the dataset statistics for each benchmark, and then describe how $\mathcal{D}_1$ to $\mathcal{D}_3$ are utilized in different experimental setups.

\paragraph{Waterbirds statistics.}
The Waterbirds~\citep{sagawadistributionally} dataset is designed to capture spurious correlations between natural backgrounds and bird labels, with $\eta_o = 0.05$. Table~\ref{tab:waterbirds95_stats} reports the detailed group distributions across $\mathcal{D}_1$ to $\mathcal{D}_3$. Following \citet{sagawadistributionally}, we supplement additional samples for the minority groups (waterbird, land) and (landbird, water) in the same manner as the original dataset, due to the limited size of the raw data.

\begin{table}[H]
\centering
{\fontsize{8pt}{9pt}\selectfont
\resizebox{0.9\textwidth}{!}{%
\begin{tabular}{lccccc}
\toprule
Split & (waterbird, water) & (waterbird, land) & (landbird, water) & (landbird, land) & Total \\
\midrule
$\mathcal{D}_1$ & 1,057 & 56   & 184  & 3,498 & 4,795 \\
$\mathcal{D}_2$ & 1,804 & 1,804 & 1,804 & 1,804 & 7,216 \\
$\mathcal{D}_3$ & 451   & 451  & 451  & 451   & 1,804 \\
\bottomrule
\end{tabular}
} 
\caption{Dataset statistics for Waterbirds. Each column corresponds to a group, and the last column gives the total sample count.}
\label{tab:waterbirds95_stats}
}
\end{table}

\paragraph{BFFHQ statistics.}
The BFFHQ~\citep{lee2021learning} dataset is designed to capture spurious correlations between age and gender labels, with $\eta_o = 0.005$. Table~\ref{tab:bffhq_stats} reports the detailed group distributions across $\mathcal{D}_1$ to $\mathcal{D}_3$. Due to the limited size of the minority groups in the raw data, our splits are constructed from de-duplicated samples across multiple BFFHQ subsets.

\begin{table}[htbp]
\centering
{\fontsize{8pt}{9pt}\selectfont
\resizebox{0.8\textwidth}{!}{%
\begin{tabular}{lccccc}
\toprule
Split & (young, female) & (young, male) & (old, female) & (old, male) & Total \\
\midrule
$\mathcal{D}_1$ & 9,552 & 48   & 48   & 9,552 & 19,200 \\
$\mathcal{D}_2$ & 790   & 790  & 790  & 790   & 3,160 \\
$\mathcal{D}_3$ & 198   & 198  & 198  & 198   & 792 \\
\bottomrule
\end{tabular}
} 
\caption{Dataset statistics for BFFHQ. Each column corresponds to a group, and the last column gives the total sample count.}
\label{tab:bffhq_stats}
}
\end{table}

\paragraph{ImageNet-9 statistics.}
The ImageNet-9~\citep{xiao2020noise} dataset is designed to capture spurious correlations between object and background labels. Different from Waterbirds and BFFHQ, ImageNet-9 is a 9-class classification task over categories dog, bird, wheeled vehicle, reptile, carnivore, insect, musical instrument, primate, and fish. The original dataset provides two variants, mixed-same and mixed-rand. In the mixed-same version, each image background is replaced with a background from an image of the same class, thus preserving spurious correlations; in the mixed-rand version, the background is randomized and contains no information about the true label. These two variants correspond to minority group proportions of $0$ and $0.5$, respectively. Table~\ref{tab:imagenet9_stats} reports the dataset statistics across $\mathcal{D}_1$ to $\mathcal{D}_3$. Based on this table, we set $\eta_o=0$. Note that ImageNet-9 does not have a well-defined group structure under either the mixed-same or mixed-rand settings. Therefore, we do not report worst-group accuracy for this dataset.

\begin{table}[htbp]
\centering
{\fontsize{8pt}{9pt}\selectfont
\resizebox{0.55\textwidth}{!}{%
\begin{tabular}{lcccc}
\toprule
Split & mixed-same & mixed-rand & Total & Per-class \\
\midrule
$\mathcal{D}_1$ & 4,050 & 0   & 4,050 & 450 \\
$\mathcal{D}_2$ & 0     & 3,240 & 3,240 & 360 \\
$\mathcal{D}_3$ & 0     &   810 &   810 & 90 \\
\bottomrule
\end{tabular}
} 
\caption{Dataset statistics for ImageNet-9. Within each split, the nine classes have identical counts.}
\label{tab:imagenet9_stats}
}
\end{table}

\paragraph{BG-COCO statistics.}
The BG-COCO dataset is a self-generated benchmark designed to capture spurious correlations between cats/dogs from COCO~\citep{lin2014microsoft} and indoor/outdoor scenes from Places~\citep{zhou2017places}. Specifically, we define the indoor/outdoor split as living room, dining room (indoor) and park (outdoor). By construction, cats are aligned with indoor scenes and dogs with outdoor scenes. Table~\ref{tab:bgcoco_stats} reports the detailed group distributions across $\mathcal{D}_1$ to $\mathcal{D}_3$. Based on this table, we set $\eta_o=0.05$.

\begin{table}[htbp]
\centering
{\fontsize{8pt}{9pt}\selectfont
\resizebox{0.8\textwidth}{!}{%
\begin{tabular}{lccccc}
\toprule
Split & (cat, indoor) & (cat, outdoor) & (dog, indoor) & (dog, outdoor) & Total \\
\midrule
$\mathcal{D}_1$ & 1,900 & 100  & 100  & 1,900 & 4,000 \\
$\mathcal{D}_2$ & 1,000 & 1,000 & 1,000 & 1,000 & 4,000 \\
$\mathcal{D}_3$ &   250 & 250  & 250  & 250   & 1,000 \\
\bottomrule
\end{tabular}
} 
\caption{Dataset statistics for BG-COCO. Each column corresponds to a group, and the last column gives the total sample count.}
\label{tab:bgcoco_stats}
}
\end{table}

Across all four datasets, we construct training and evaluation splits as follows. When either $\eta_\ell$ or $\eta_u$ is fixed $\eta_o$, samples are drawn from $\mathcal{D}_1$ with the desired size $N$ (for unlabeled data) or $n$ (for labeled data). When either $\eta_\ell$ or $\eta_u$ is fixed to $0.5$, balanced samples are instead drawn from $\mathcal{D}_2$. Several experiments involve fixing $\eta_\ell$ while varying $\eta_u$. In this setting, if necessary, we keep the labeled data unchanged and supplement the unlabeled data with additional samples independently drawn from $\mathcal{D}_1$ or $\mathcal{D}_2$, while ensuring that the total unlabeled sample size $N$ remains constant across different $\eta_u$. The balanced dataset $\mathcal{D}_3$ is reserved for testing, and when required, we further split $20\%$ of $\mathcal{D}_3$ as a separate validation set.

\subsection{Results for Interpreting W2S under Spurious Correlations}
\label{subapp:interp_results}

\begin{figure}[ht]
    \centering
    \includegraphics[width=\linewidth]{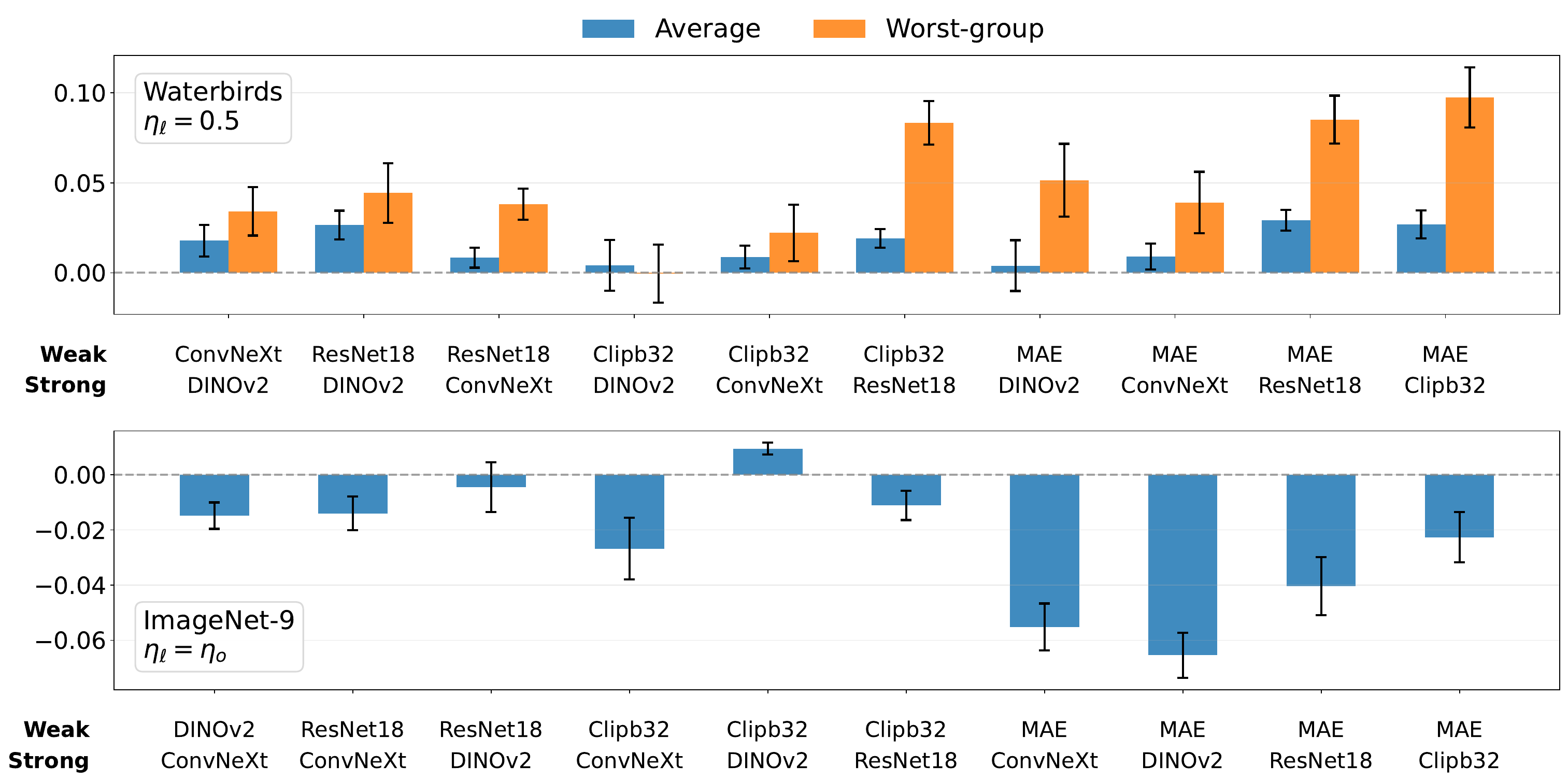}
\caption{
    Top: On the Waterbirds dataset, the change in W2S gain 
    (value at $\eta_u=0.5$ minus value at $\eta_u=0$) across all teacher--student pairs with fixed $\eta_\ell=0.5$. 
    Bottom: On the ImageNet-9 dataset, the change in W2S gain 
    (value at $\eta_u=0.5$ minus value at $\eta_u=\eta_o$) across all teacher--student pairs with fixed $\eta_\ell=\eta_o$. 
    ImageNet-9 does not have a clearly defined worst group and is therefore omitted from the bottom panel.
}
    \label{fig:msg12_bar}
\end{figure}

\begin{table}[htbp]
\centering
{\fontsize{8pt}{9pt}\selectfont
\resizebox{0.58\textwidth}{!}{%
\begin{tabular}{lcc}
\toprule
\multirow{2}{*}{Dataset} & \multicolumn{2}{c}{\# of model pairs with increased W2S gain} \\
\cmidrule(lr){2-3}
 & Average accuracy & Worst group accuracy \\
\midrule
Waterbirds & 10/10 & 10/10 \\
BFFHQ      & 10/10 & 9/10 \\
BG\text{-}COCO    & 9/10 & 10/10 \\
ImageNet-9 & 7/10 & ---  \\
\bottomrule
\end{tabular}
} 
\caption{Proportion of teacher-student pairs that exhibit an increase in W2S gain as $\eta_u$ increases from $0$ to the maximum feasible value of $\eta_u$ (Waterbirds: $0.5$, BFFHQ: $0.23$, BG-COCO: $0.5$, ImageNet-9: $0.4$) when $\eta_\ell=0.5$, summarized across all datasets. ImageNet-9 has no well-defined worst group, so only average accuracy is reported.}
\label{tab:m12_pro2}
}
\end{table}

\begin{table}[htbp]
\centering
{\fontsize{8pt}{9pt}\selectfont
\resizebox{0.58\textwidth}{!}{%
\begin{tabular}{lcc}
\toprule
\multirow{2}{*}{Dataset} & \multicolumn{2}{c}{\# of model pairs with decreased W2S gain} \\
\cmidrule(lr){2-3}
 & Average accuracy & Worst group accuracy \\
\midrule
Waterbirds & 7/10 & 8/10 \\
BFFHQ      & 8/10 & 7/10 \\
BG\text{-}COCO    & 8/10 & 7/10 \\
ImageNet-9 & 9/10 & ---  \\
\bottomrule
\end{tabular}
} 
\caption{Proportion of teacher-student pairs that exhibit a decrease in W2S gain as $\eta_u$ increases from $\eta_o$ to $0.5$ when $\eta_\ell=\eta_o$, summarized across all datasets. ImageNet-9 has no well-defined worst group, so only average accuracy is reported.}
\label{tab:m12_pro1}
}
\end{table}

In Section~\ref{sec:exp_real}, we primarily presented how the average W2S gain across all teacher-–student pairs varies with increasing $\eta_u$ on each dataset. Here, we further provide results for individual model pairs. Specifically, Figure~\ref{fig:msg12_bar} compares the difference in W2S gain between the group-balanced ($\eta_\ell=0.5$) and group-imbalanced ($\eta_\ell=\eta_o$) settings on selected datasets. Table~\ref{tab:m12_pro2} summarizes, for $\eta_\ell=0.5$, the proportion of model pairs that exhibit an increase in W2S gain as $\eta_u$ increases from $0$ across all datasets.  Table~\ref{tab:m12_pro1} summarizes, for $\eta_\ell=\eta_o$, the proportion of model pairs that exhibit a decrease in W2S gain as $\eta_u$ increases from $\eta_o$ across all datasets. These results further validate our theoretical analysis in Section~\ref{sec:theory}, which predicts that in most cases the larger the gap between $\eta_u$ and $\eta_\ell$, the smaller the resulting W2S gain.

\subsection{Results for Enhanced W2S}
\label{subapp:Enhanced_W2S}

\paragraph{Model training.}
Enhanced-W2S improves upon vanilla W2S by retraining the strong student after
the initial W2S fine-tuning. 
First, we select a fraction $p \in (0,1]$ of $\hat{\Scal}$ consisting of those samples for which
the student exhibits the lowest prediction
entropy. Second, we apply the GCE loss
$\Lcal_{\mathrm{GCE}}(\xb_i,\hat y_i;q)$ with parameter $q \in (0,1]$ to each
selected sample $(\xb_i,\hat y_i)$. 
We tune the hyperparameters by grid search over
$p \in \{0.2,0.4,0.6,0.8,1.0\}$ and $q \in \{0,0.2,0.7\}$, where
$q=0$ corresponds to the CE loss (i.e., the $q \to 0$ limit of GCE).
To avoid a trivial overlap with the vanilla W2S baseline,
$(p,q)=(1,0)$ is excluded from the Enhanced-W2S search space.
In the case of $(\eta_\ell,\eta_u)=(\eta_o,0.5)$, we further restrict
the subset ratio to $p \in \{0.2,0.4,0.6\}$ to emphasize the role of
high-confidence subsets in filtering for the majority group. Each run of
Enhanced-W2S is repeated with multiple random seeds, and the reported results
are obtained by averaging across seeds.

\begin{figure*}[!ht]
    \centering

    \includegraphics[width=0.31\textwidth]{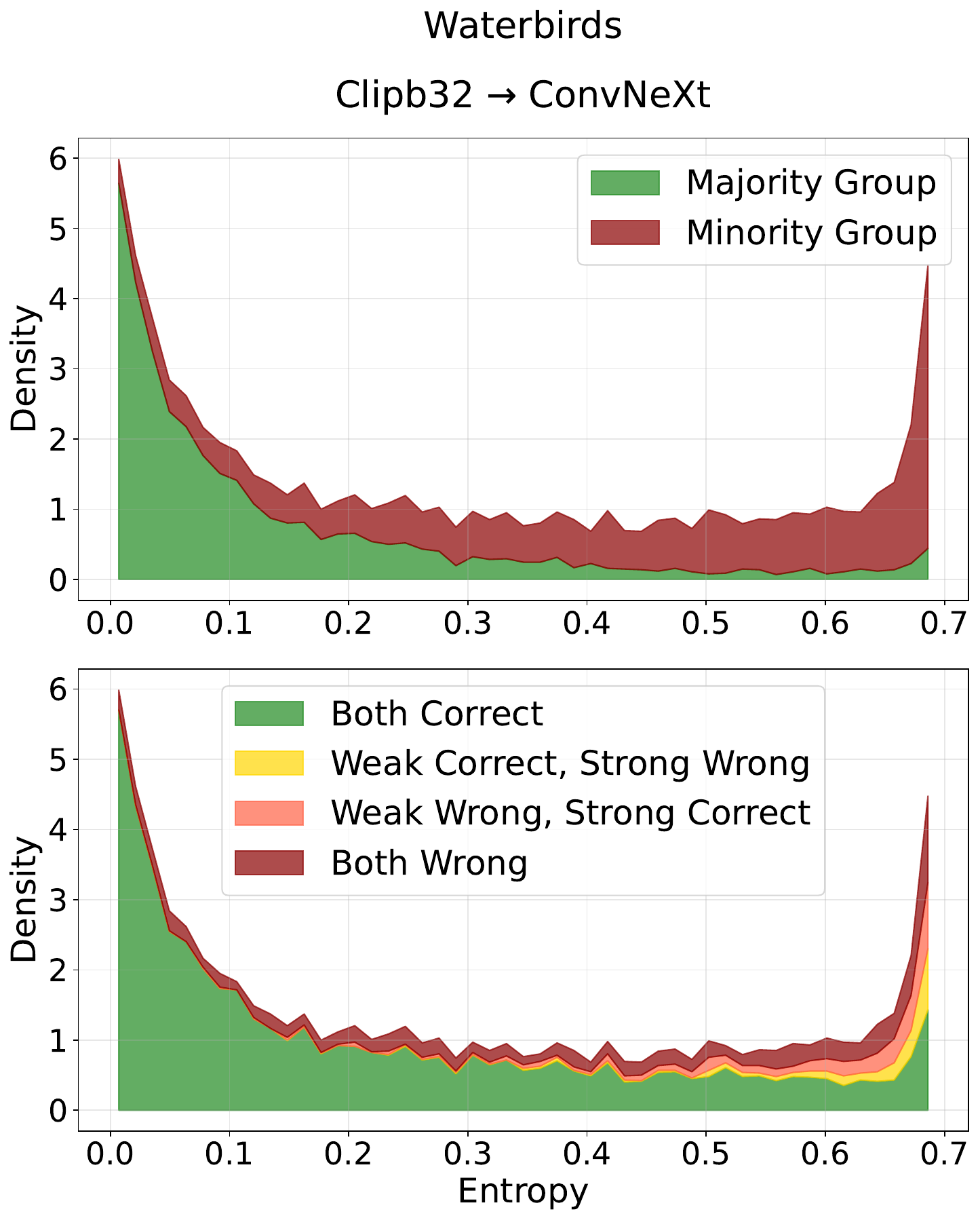}\hfill
    \includegraphics[width=0.31\textwidth]{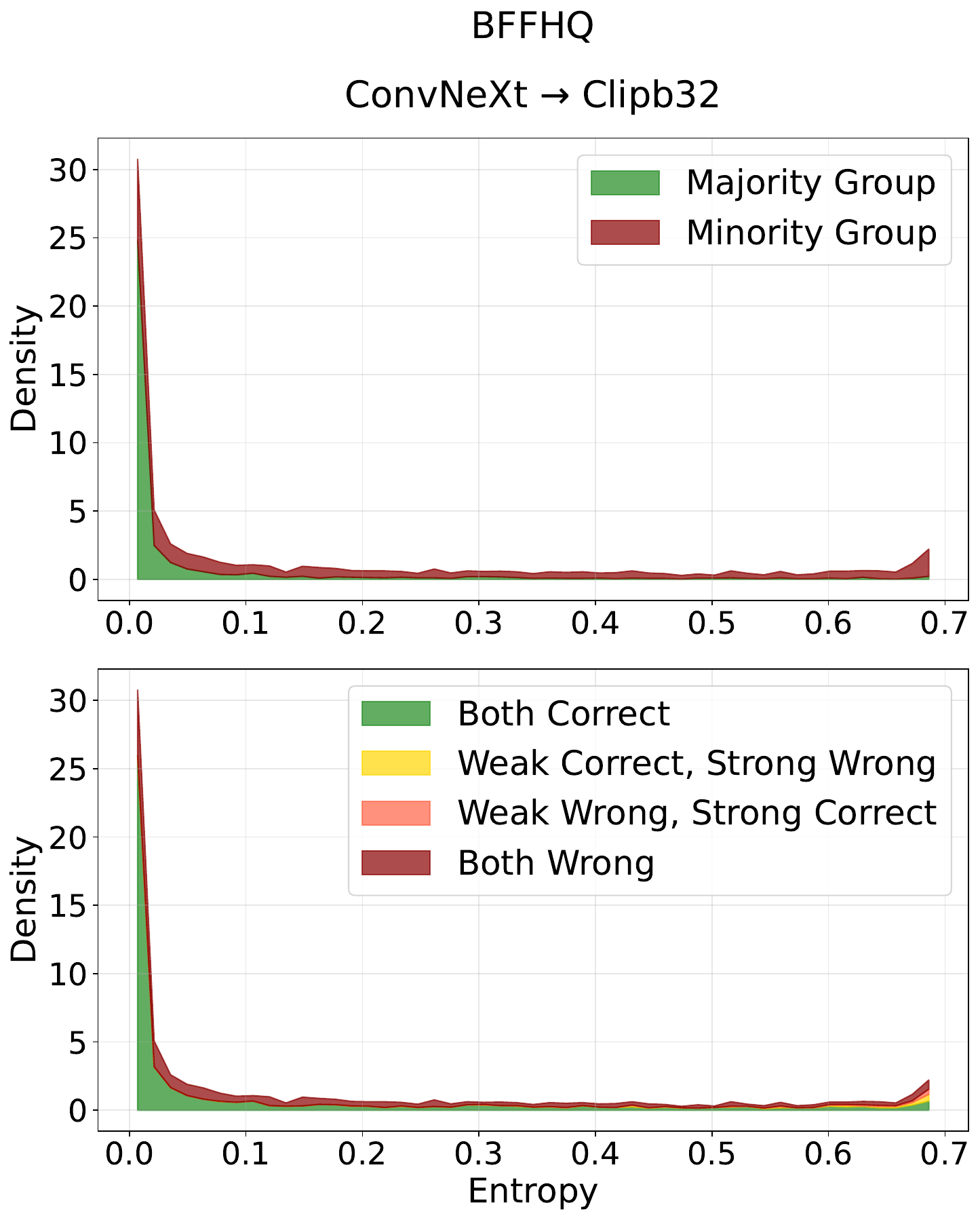}\hfill
    \includegraphics[width=0.31\textwidth]{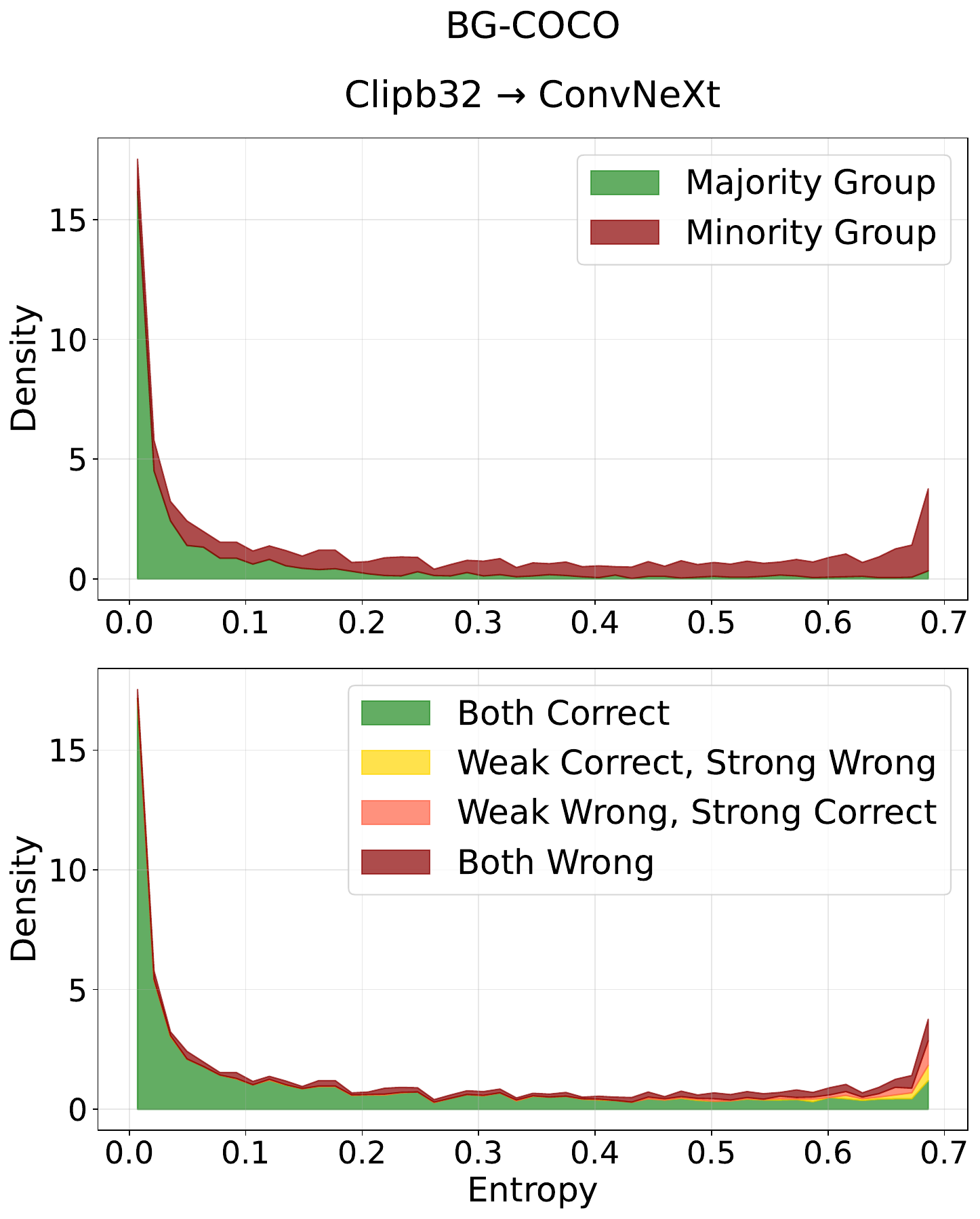}

    \vspace{1em}

    \includegraphics[width=0.31\textwidth]{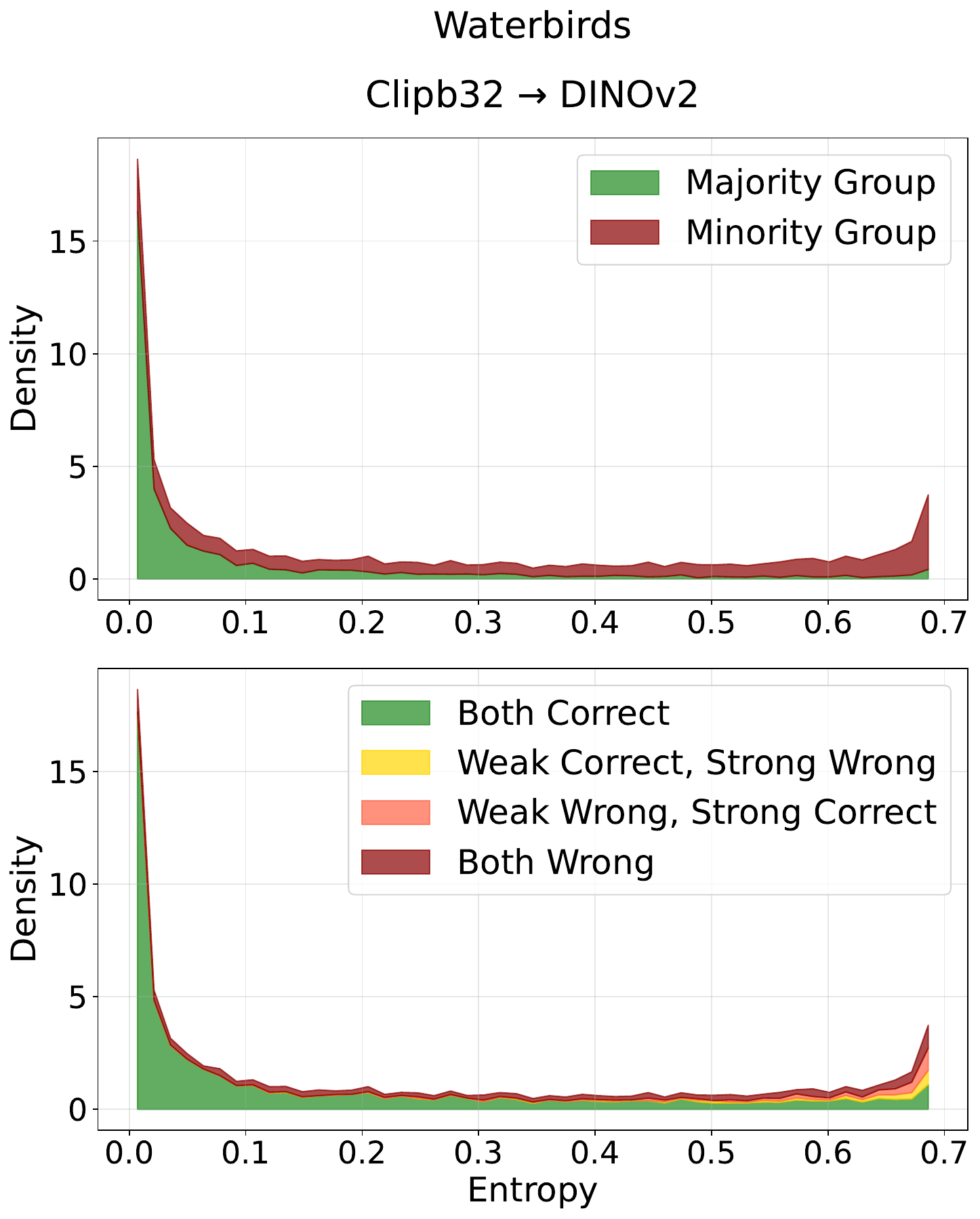}\hfill
    \includegraphics[width=0.31\textwidth]{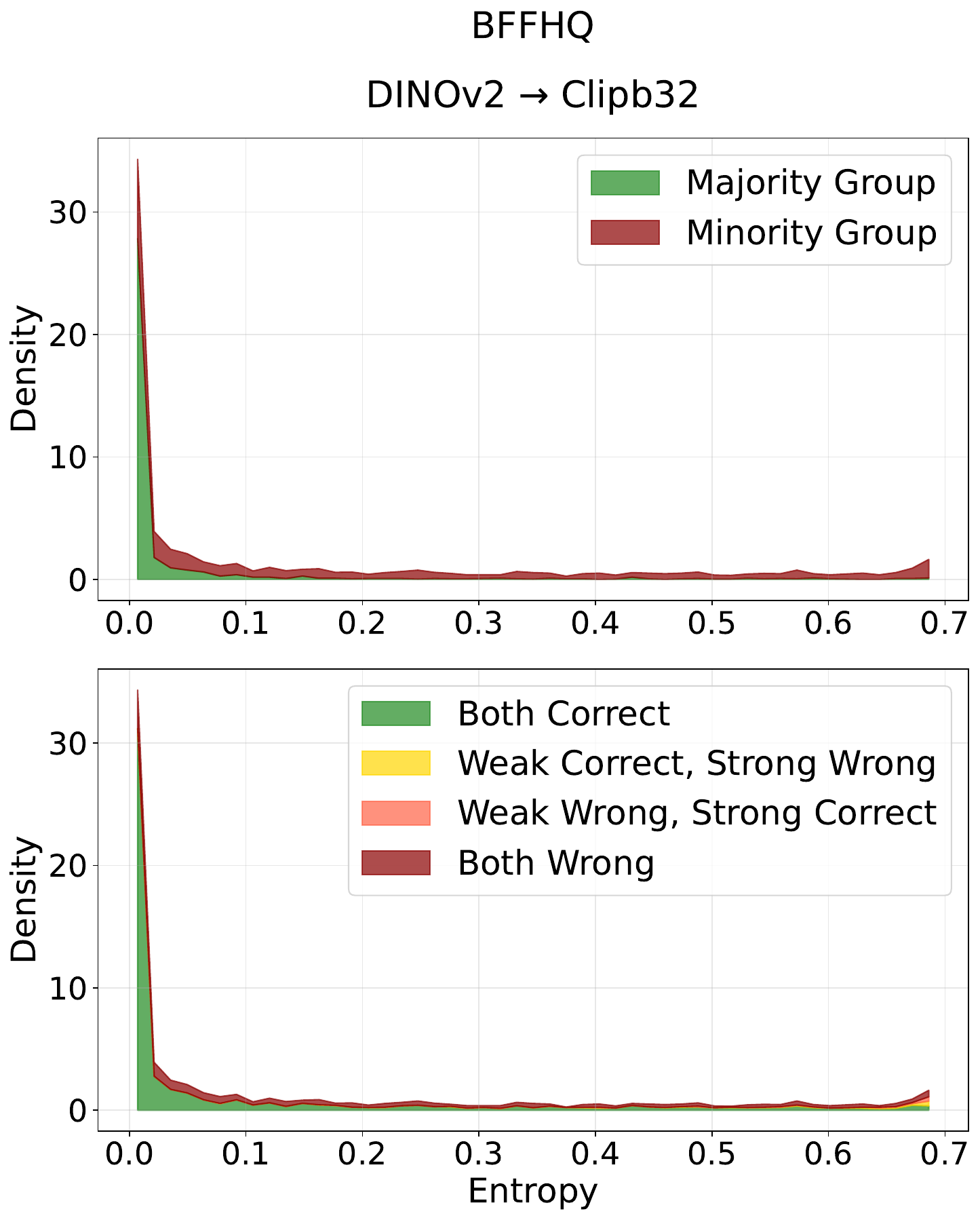}\hfill
    \includegraphics[width=0.31\textwidth]{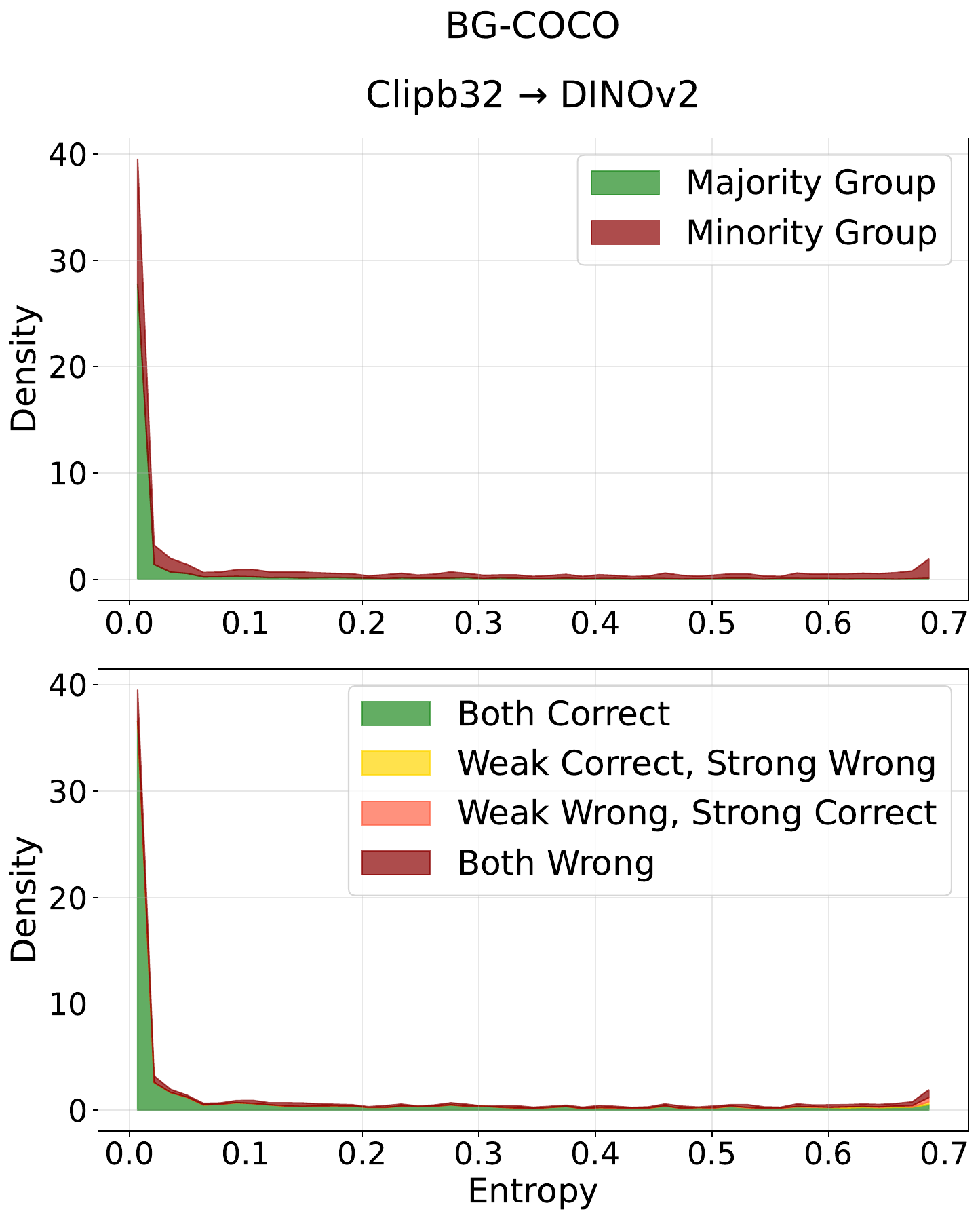}

    \caption{Student confidence on unlabeled data as stacked density plots of predictive entropy ($\eta_\ell=\eta_o, \eta_u=0.5$).
    Each panel shows the student's predictive entropy (after W2S fine-tuning), visualized as two stacked density plots: (top) split by group (majority vs.\ minority) and (bottom) split by prediction correctness of the weak teacher and the strong student.
    Columns correspond to datasets (Waterbirds, BFFHQ, BG-COCO).
    Rows correspond to model pairs: (ConvNeXt, Clipb32) and (Clipb32, DINOv2).}
    \label{fig:uncertainty_grid_three_datasets_two_pairs}
\end{figure*}

\begin{table*}[!ht]
    \centering
    \renewcommand{\arraystretch}{1.15}

    \resizebox{0.65\textwidth}{!}{%
    \begin{tabular}{llc cc}
    \toprule
    \multirow{2}{*}{Dataset} & \multirow{2}{*}{$\eta_\ell$} & \multirow{2}{*}{$\eta_u$} &
    \multicolumn{2}{c}{Mean relative improvement (\%)} \\
    \cmidrule(lr){4-5}
    & & & Average Accuracy & Worst Group Accuracy \\
    \midrule
    \multirow{2}{*}{Waterbirds}
    & 0.5   & $\eta_o$ &   6.32  &  10.12 \\
    & $\eta_o$ & 0.5   &   7.72  &  32.15 \\
    \addlinespace
    \multirow{2}{*}{BFFHQ}
    & 0.5   & $\eta_o$ &   5.52  &   4.06 \\
    & $\eta_o$ & 0.5   &   3.12  &   3.57 \\
    \addlinespace
    \multirow{2}{*}{BG-COCO}
    & 0.5   & $\eta_o$ &  10.51  &  11.76 \\
    & $\eta_o$ & 0.5   &   4.50  &   3.71 \\
    \addlinespace
    \multirow{2}{*}{ImageNet-9}
    & 0.5   & $\eta_o$ &  12.23  &    --- \\
    & $\eta_o$ & 0.5   &  11.23  &    --- \\
    \bottomrule
    \end{tabular}%
    } 
    \caption{Mean relative improvement (\%) of Enhanced-W2S over vanilla W2S, averaged across selected teacher--student pairs, for both average accuracy and worst group accuracy. For each dataset, we select all model pairs whose relative strength relationship remains consistent across different $(\eta_\ell,\eta_u)$ settings.}
    \label{tab:delta-pgr-avg-worst-summary}
\end{table*}

\paragraph{Role of confidence-based selection.}
When $(\eta_\ell,\eta_u)=(\eta_o,0.5)$, Figure~\ref{fig:uncertainty_grid_three_datasets_two_pairs} shows that samples with high student
confidence (i.e., low predictive entropy) after W2S fine-tuning are almost
exclusively drawn from the majority group, and furthermore are nearly always
assigned the correct pseudolabels by both the weak teacher and the strong
student. At the same time, Theorem~\ref{thm:w2s_strong_ridgeless} predicts that
reducing $\eta_u$ from $0.5$ directly increases the W2S gain. These two
observations together suggest that confidence-based selection provides significant benefits for improving W2S performance in the setting $(\eta_\ell,\eta_u)=(\eta_o,0.5)$.

\paragraph{Mean relative gains.} 
Table~\ref{tab:delta-pgr-avg-worst-summary} summarizes the mean relative improvement of Enhanced-W2S over vanilla W2S, averaged across all teacher--student pairs. Consistent with the main text, our method achieves clear gains under both average accuracy and worst group accuracy.
On the Waterbirds dataset, we further compare the performance of Enhanced-W2S
with the auxiliary confidence loss proposed in~\citep{burns2024weak}, which was also designed to improve the generalization ability of W2S. 
Specifically, we perform a grid search over the auxiliary confidence loss
weight $\alpha \in \{0.2,0.4,0.6, 0.8\}$, and Table~\ref{tab:waterbirds-aux-conf} reports, for each
$(\eta_\ell,\eta_u)$ configuration, the mean relative improvement of
Enhanced-W2S minus the mean relative improvement obtained with the auxiliary
confidence loss. Our method yields larger gains in all cases, confirming that
it is motivated by our theoretical analysis (see
Section~\ref{sec:enhanced_w2s}) and is specifically tailored to address W2S under spurious correlation.

\begin{table*}[!ht]
    \centering
    \renewcommand{\arraystretch}{1.15}
    \resizebox{0.5\textwidth}{!}{%
    \begin{tabular}{cc cc}
    \toprule
    \multirow{2}{*}{$\eta_\ell$} & \multirow{2}{*}{$\eta_u$} &
    \multicolumn{2}{c}{Difference in mean relative improvement (\%)} \\
    \cmidrule(lr){3-4}
    & & Average Accuracy & Worst Group Accuracy \\
    \midrule
    0.5      & $\eta_o$ &  5.22  &  3.88 \\
    $\eta_o$ & 0.5      &  5.90  &  3.41 \\
    \bottomrule
    \end{tabular}%
    } 
    \caption{Difference in mean relative improvement (\%) Waterbirds, computed as Enhanced-W2S minus the auxiliary confidence loss baseline, averaged across selected teacher--student pairs for each $(\eta_\ell,\eta_u)$ configuration.}
    \label{tab:waterbirds-aux-conf}
\end{table*}

\section{Group Fairness in W2S Generalization}\label{apx:fairness}

Ensuring that algorithmic decisions do not exhibit systematic bias against certain attributes (e.g., race, gender, age) has long been a central objective in fair machine learning~\citep{liu2019implicit, oneto2020fairness, mehrabi2021survey}. At the same time, when the data contains spurious correlations caused by group imbalance, unfairness across different groups is likely to arise, as the model tends to rely on spurious features when making predictions~\citep{izmailov2022feature}.
This lack of group fairness is particularly concerning when groups are defined by sensitive attributes. Therefore, a line of work on mitigating spurious correlation has explicitly targeted group robustness, and the evaluation metrics adopted in this literature (e.g., worst group accuracy) can be interpreted as a measure of fairness. In parallel, several works have more directly studied the relationship between spurious correlations and formal notions of group fairness~\citep{veitch2021counterfactual, schrouff2024mind}.

In this section, we extend our analysis of W2S under spurious correlation to incorporate the notion of group fairness.
Under W2S, the minority group proportions in both the labeled dataset ($\eta_\ell$) and the unlabeled dataset ($\eta_u$) jointly influence the extent to which the strong student preserves group-level parity after the W2S process. 

\begin{definition}[Group risk disparity]\label{def:group-risk-disparity}
Under \Cref{asm:reg_spur_corr,asm:weak_strong_rep}, we define the group risk disparity of the strong student after W2S fine-tuning as
\begin{align}
    \Delta_{\mathrm{grp}}(f_S) :=
    \abbr{\E_{\Dcal(\eta_u)^N,\, \Dcal(\eta_\ell)^n}
    \bigl[\exrisk_{0}(f_S)\bigr]
    -
    \E_{\Dcal(\eta_u)^N,\, \Dcal(\eta_\ell)^n}
    \bigl[\exrisk_{1}(f_S)\bigr]},
\end{align}
where $\exrisk_{0}(f_S)$ and $\exrisk_{1}(f_S)$ denote the excess risks of the student on the majority ($\eta_t=0$) and minority ($\eta_t=1$) groups, respectively. 
\end{definition}

In \Cref{def:group-risk-disparity}, we quantify the group fairness through the absolute difference between the student’s excess risk on the majority group and the minority group. It is important to note that our definition of group risk disparity is directly aligned with 
the notion of perfect fairness (also referred to as risk parity) in the group-fairness literature~\citep{williamson2019fairness, liu2025bridging}. 
In particular, the condition $\Delta_{\mathrm{grp}}(f_S)=0$ is equivalent to achieving perfect fairness (risk parity).

\begin{corollary}[Group risk disparity of W2S]\label{cor:group-risk-disparity-asymptotic}
Under \Cref{asm:reg_spur_corr,asm:weak_strong_rep,asm:high_dim_asymp_regime}, the group risk disparity of the strong student after W2S fine-tuning satisfies
\begin{align*}
    \Delta_{\mathrm{grp}}(f_S) ~\overset{\PP}{\longrightarrow}~
    &\frac{\sigma_y^2 \gamma_z}{\sigma_\xi^2} \abbr{2 (\eta_\ell - \eta_u) \mub_T^\top \Xib \mub_S - (1-2\eta_u)\rbr{\nbr{\Xib\mub_S}_2^2 + \nu_z(\dspt - \dspts)\nbr{\mub_S}_2^2}}
\end{align*}
\end{corollary}
\Cref{cor:group-risk-disparity-asymptotic} follows directly from the precise
asymptotic characterization of the strong student excess risk in \Cref{thm:w2s_strong_ridgeless_formal}, providing a precise quantification of the group risk disparity in the proportional asymptotic limit.

We outline several key insights from \Cref{cor:group-risk-disparity-asymptotic} below:
\begin{enumerate}[label=(\alph*)]
    \item \textbf{Low teacher-student similarity ($\dspts = 1$) brings robustness of group fairness $\Delta_{\mathrm{grp}}(f_S)$ to teacher bias $\eta_\ell < 0.5$, where W2S is fair if the unlabeled training set is balanced $\eta_u = 0.5$.} Notably, when $\dspts = 1$ (i.e., $\|\Xib\|_F^2 = 0$), $\Delta_{\mathrm{grp}}(f_S)$ becomes independent of $\eta_\ell$ and is only affected by $\eta_u$ through the $(1-2\eta_u)$ factor. When $\eta_u = 0.5$ further, we have $1 - 2 \eta_u = 0$, and therefore $\Delta_{\mathrm{grp}}(f_S) \overset{\PP}{\to} 0$, i.e., the strong student from W2S fine-tuning is fair, even though a biased weak teacher fine-tuned with $\eta_\ell < \eta_u$ can still hurt the generalization of the strong student.
    \item \textbf{Low teacher-student similarity ($\dspts = 1$) induces group fairness of W2S, $\Delta_{\mathrm{grp}}(f_S) \to 0$, as $\nu_z \to 0$.} While $\dspts = 1$ (i.e., $\|\Xib\|_F^2 = 0$) alone does not guarantee fairness, it provides 
    \[\Delta_{\mathrm{grp}} \overset{\PP}{\to} (1-2\eta_u) \gamma_z \nu_z \frac{\sigma_y^2}{\sigma_\xi^2}(\dspt - \dspts)\nbr{\mub_S}_2^2 \asymp (1-2\eta_u) \gamma_z \nu_z,\]
    where we have $\Delta_{\mathrm{grp}}(f_S) \overset{\PP}{\to} 0$ as $d_z, n, N \to \infty$ if $\nu_z \to 0$, i.e., when $N$ is large enough compared to $d_z$, low teacher-student similarity induces fairness of W2S.
    \item \textbf{For high teacher-student similarity ($\dspts \to \dsps$), group fairness of the student $\Delta_{\mathrm{grp}}(f_S)$ is influenced by the fairness of the teacher $\eta_\ell$, with the dependence determined by $\mub_T^\top \Xib \mub_S$.} 
    In particular, when $\dspts \to \dsps$ so that $\abbr{\mub_T^\top \Xib \mub_S}$ is non-negligible, W2S is fair (i.e., $\Delta_{\mathrm{grp}}(f_S) \overset{\PP}{\to} 0$) when
    \begin{align*}
        \eta_\ell^{\mathrm{fair}} = \eta_u + (1-2\eta_u) \frac{\nbr{\Xib\mub_S}_2^2 + \nu_z(\dspt - \dspts)\nbr{\mub_S}_2^2}{\mub_T^\top \Xib \mub_S},
    \end{align*}
    assuming it falls in the range $\eta_\ell^{\mathrm{fair}} \in [0,0.5]$, while the group fairness gets worse (i.e., $\Delta_{\mathrm{grp}}(f_S)$ increases) as $\eta_\ell$ deviates from $\eta_\ell^{\mathrm{fair}}$. 
    Notably, 
    \begin{enumerate}[label=(\arabic*)]
        \item if $\eta_\ell^{\mathrm{fair}} < 0$, the group fairness gets worse as $\eta_\ell$ increases, best when $\eta_\ell=0$; while
        \item if $\eta_\ell^{\mathrm{fair}} > 0.5$ the group fairness gets worse as $\eta_\ell$ decreases, best when $\eta_\ell=0.5$.
    \end{enumerate}
\end{enumerate}

\end{document}